%% file: arXiv.tex
\documentclass[preprint]{article}

\usepackage{microtype}
\usepackage{graphicx}
\usepackage{subcaption}
\usepackage{booktabs} 

\usepackage{hyperref}

\usepackage{icml2026}

\input{preamble}

\usepackage{amsmath}
\usepackage{amssymb}
\usepackage{mathtools}
\usepackage{amsthm}

\definecolor{gray}{rgb}{0.5,0.5,0.5}
\definecolor{pygreen}{rgb}{0.0, 0.5, 0.0}
\definecolor{pyred}{rgb}{0.7, 0.0, 0.0}
\definecolor{pyblue}{rgb}{0.0, 0.0, 0.7}
\definecolor{pygray}{rgb}{0.5, 0.5, 0.5}
\definecolor{pydarkgray}{rgb}{0.3, 0.3, 0.3}

\usepackage[capitalize,noabbrev]{cleveref}

\usepackage[textsize=tiny]{todonotes}

\icmltitlerunning{Fast Model Selection and Stable Optimization for SGMLMoE Models}

\begin{document}
\include{math_commands}
\twocolumn[
  \icmltitle{Fast Model Selection and Stable Optimization for Softmax-Gated Multinomial-Logistic Mixture of Experts Models}

  .
  \icmlsetsymbol{equal}{*}

  \begin{icmlauthorlist}
    \icmlauthor{TrungKhang Tran}{equal,1}
    \icmlauthor{TrungTin Nguyen}{equal,2,3}
    \icmlauthor{Md Abul Bashar}{2,3}
    \icmlauthor{Nhat Ho}{4}
    \icmlauthor{Richi Nayak}{2,3}
    \icmlauthor{Christopher Drovandi}{2,3}
  \end{icmlauthorlist}

  \icmlaffiliation{1}{School of Computing, National University of Singapore, Singapore.}
  \icmlaffiliation{2}{ARC Centre of Excellence for the Mathematical Analysis of Cellular Systems.}
  \icmlaffiliation{3}{School of Mathematical Sciences, Queensland University of Technology, Brisbane, Australia.}
    \icmlaffiliation{4}{Department of Statistics and Data Science, University of Texas at Austin, Austin, USA}

  \icmlcorrespondingauthor{TrungTin Nguyen}{t600.nguyen@qut.edu.au}

  \icmlkeywords{Machine Learning, ICML}

  \vskip 0.3in
]

\printAffiliationsAndNotice{\textsuperscript{*}Co-first Author.}

\begin{abstract}
Mixture-of-Experts (MoE) architectures combine specialized predictors through a learned gate and are effective across regression and classification, but for classification with softmax multinomial-logistic gating, rigorous guarantees for stable maximum-likelihood training and principled model selection remain limited. We address both issues in the full-data (batch) regime. First, we derive a batch minorization-maximization (MM) algorithm for softmax-gated multinomial-logistic MoE using an explicit quadratic minorizer, yielding coordinate-wise closed-form updates that guarantee monotone ascent of the objective and global convergence to a stationary point (in the standard MM sense), avoiding approximate M-steps common in EM-type implementations. Second, we prove finite-sample rates for conditional density estimation and parameter recovery, and we adapt dendrograms of mixing measures to the classification setting to obtain a sweep-free selector of the number of experts that achieves near-parametric optimal rates after merging redundant fitted atoms. Experiments on biological protein--protein interaction prediction validate the full pipeline, delivering improved accuracy and better-calibrated probabilities than strong statistical and machine-learning baselines.
\end{abstract}

\section{Introduction}

\subsection{Mixture of Experts Models}\label{sec:intro:moe}
Mixture of experts (MoE) models \citep{jacobs_adaptive_1991,jordan_hierarchical_1994} represent heterogeneous predictor--response relationships by combining a \emph{gating} model with multiple \emph{expert} models, both depending on the input. The gate assigns input-dependent weights to experts, enabling conditional computation \citep{bengio_deep_2013,chen_towards_2022} and inducing a data-adaptive partition of the input domain. In classification, softmax-gated multinomial-logistic MoE (SGMLMoE) \citep{chen_improved_1999,yuksel_variational_2010,huynh2019estimation,pham_functional_2022,nguyen_general_2024} offer calibrated multiclass probabilities while retaining the conditional-computation benefits of gating. MoE theory is supported by approximation and estimation guarantees in several regimes, including classical mixtures \citep{genovese_rates_2000,rakhlin_risk_2005,nguyen_convergence_2013,ho_convergence_2016,ho_strong_2016,nguyen_approximation_2020,nguyen_approximation_2022,chong_risk_2024}, mixtures of regressions \citep{do_strong_2025,ho_convergence_2022}, and broader MoE frameworks \citep{jiang_hierarchical_1999,nguyen_bayesian_2024,norets_approximation_2010,nguyen_universal_2016,nguyen_approximation_2019,nguyen_approximations_2021,nguyen_demystifying_2023,nguyen_general_2024,nguyen_towards_2024}; see also surveys \citep{yuksel_twenty_2012,masoudnia_mixture_2014,nguyen_practical_2018,nguyen_model_2021,cai_survey_2025}. Despite this progress, for SGMLMoE two core issues remain underdeveloped: \emph{stable} maximum-likelihood training with provable convergence, and \emph{model selection} (especially the number of experts) with rigorous guarantees and computational efficiency.

\subsection{Stable Maximum-likelihood Training via MM in the Full-data Regime}\label{sec_intro_onlineMM}
Likelihood-based estimation for SGMLMoE is challenging because the observed-data objective couples softmax gating and multinomial-logistic experts in a highly non-convex way. EM \citep{dempster1977maximum,meilijson_fast_1989,mclachlan_em_1997} is the classical template for latent-variable models, but in multinomial-logistic MoE the M-steps typically lack closed forms, so practical EM relies on inner-loop or approximate maximization, which can complicate monotonicity and global convergence guarantees. MM algorithms \citep{ortega_iterative_1970,leeuw_application_1977,de_leeuw_convergence_1977,lange2016mm,sun_majorization_minimization_2017,nguyen_introduction_2017,lange_nonconvex_2021,mairal_incremental_2015,mairal2013stochastic} instead optimize explicit surrogates that majorize/minorize the objective and can yield simple updates with stability guarantees. We focus on the \emph{batch} regime, avoiding the stochastic-approximation noise inherent in incremental and mini-batch variants of EM/MM \citep{borkar_stochastic_2008,kushner_stochastic_2003,dieuleveut_stochastic_2023,fort_stochastic_2023,wang_spiderboost_2019,cappe_Online_2009,cappe_online_2011,fort_fast_2021,fort_stochastic_2020,nguyen_mini_batch_2020,kuhn_properties_2020,karimi_global_2019,corff_online_2013,karimi_non_asymptotic_2019,oudoumanessah2024,oudoumanessah2025,chen_stochastic_2018,fort_geom_spider_em_2021,fort_perturbed_2021} and related variational/free-energy views \citep{neal_view_1998}.

\subsection{Model Selection in SGMLMoE Models}\label{sec:intro:related:modelselection}
Selecting the number of experts remains central. Classical criteria require fitting multiple model sizes and then applying AIC \citep{akaike_new_1974,fruhwirth_schnatter_analysing_2018}, BIC \citep{schwarz_estimating_1978,khalili_estimation_2024,forbes_mixture_2022,berrettini_identifying_2024,forbes_summary_2022,nguyen_modifications_2025,ho_unified_2025}, ICL \citep{biernacki_assessing_2000,fruhwirth_schnatter_labor_2012}, eBIC \citep{foygel_extended_2010,nguyen_joint_2024}, or SWIC \citep{sin_information_1996,westerhout_asymptotic_2024}. For MoE, over-specification can induce non-identifiability and singularities, undermining the usual asymptotic justifications. Alternatives include non-asymptotic penalization \citep{nguyen_non_asymptotic_2021,nguyen_model_2022,nguyen_non_asymptotic_2022,nguyen_non_asymptotic_2023,montuelle_mixture_2014,nguyen_non_asymptotic_Lasso_2023} and Bayesian/post-processing strategies such as merge-truncate-merge \citep{fruhwirth_schnatter_keeping_2019,zens_bayesian_2019,guha_posterior_2021,nguyen_bayesian_2024_JNPS}, but many remain tied to multi-$K$ sweeps. A recent sweep-free direction uses dendrograms of mixing measures \citep{do_dendrogram_2024,thai_model_2025,hai_dendrograms_2026}, developed mainly for Gaussian-expert settings; we adapt this viewpoint to multinomial-logistic classification and integrate it with stable batch-MM training.

\subsection{Contributions and Paper Organization}\label{sec_intro_overall_contribution}
We develop a unified framework for SGMLMoE that couples \emph{stable batch maximum-likelihood training} with \emph{sweep-free model selection}. On the optimization side, we construct an explicit quadratic MM surrogate, leading to coordinate-wise closed-form updates that exactly minimize the surrogate each iteration and therefore guarantee monotone ascent and global convergence to a stationary point without inner-loop M-steps. On the statistical side, we establish finite-sample rates for conditional density estimation and parameter recovery, and we introduce a dendrogram-based aggregation path on mixing measures that restores near-parametric rates after merging redundant fitted atoms (see \cref{tab_rates_sgmlmoe}) and yields a consistent order selector without multi-$K$ training (see \cref{fig_Mergeproc}).  

\textbf{Organization.} \cref{sec_SMLMoE} introduces SGMLMoE and the likelihood objective. \cref{section_batchMM_SGMLMoE} presents the batch MM algorithm and its monotonicity. \cref{sec_rate_aware_sgmlmoe} develops the Voronoi loss, aggregation path, convergence rates, and dendrogram selection criterion. Experiments are reported in \cref{sec_numerical_experiment}. All additional technical proofs and implementation details are deferred to the appendix.

\begin{figure}
    \centering
    \includegraphics[width=.7\linewidth]{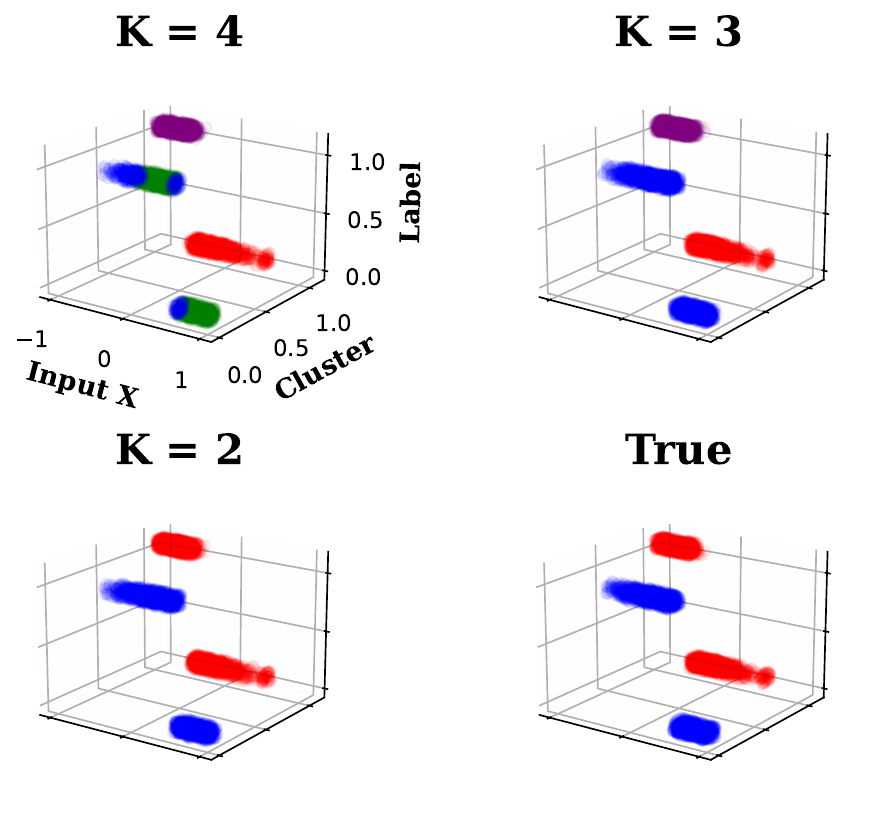}
    \caption{Illustration of the DSC merging path on an over-specified fit ($K=4, M = 2$): successive merges remove near-duplicate experts and recover the true expert structure ($K_0 = 2, M = 2$).}
    \label{fig_Mergeproc}
\end{figure}

\section{Preliminaries}\label{sec_SMLMoE}
{\bf SGMLMoE Models.} We refer to the output $\ry\in\sY$ as the response variable and the input $\mathbf{\rvx}\in\sX\subset\sR^P$, $P\in\nset$, as the predictor variable. We consider a batch dataset of size $N$, denoted by $\{(\rvx_n,\ry_n):n\in[N]\}$, consisting of i.i.d.\ copies of $(\rvx,\ry)$, where the conditional law of $\ry$ given $\rvx=\vx$ is characterized by an unknown conditional probability mass function (or density) $\pi(\cdot\mid \rvx=\vx)$. The corresponding observed values are written as $(\vx,\evy)$. Motivated by universal approximation results for MoE models, we approximate $\pi(\cdot\mid \rvx=\vx)$ by a SGMLMoE of the form
\begin{align}
\label{eq_def_SGMoE}
	s_{\vtheta}(\evy \mid \vx) = \sum_{k=1}^{K}\sfg_k\!\big(\vw(\vx)\big)\,\ex_k\!\big(\evy;\vv(\vx)\big),
\end{align}
where $\vtheta$ collects all unknown parameters and $K$ is the number of experts. The gating probabilities are defined by the softmax map $ \sfg_k\!(\vw(\vx))
=\exp\!\big(w_k(\vx)\big)/(\sum_{l=1}^{K}\exp\!\big(w_l(\vx)\big)),$ $\forall k\in[K],$
with gating score functions $\vw(\vx)=(w_1(\vx),\ldots,w_K(\vx))$. In the discrete-output setting, we take $\sY=[M]$, $M\in\sN$, and specify each expert as a multinomial-logistic model
    $\ex_k\!\big(\evy=m;\vv(\vx;\vupsilon)\big)
    =\exp\!\big(\evv_{m,k}(\vx)\big)/\sum_{l=1}^{M}\exp\!\big(\evv_{l,k}(\vx)\big),$ $ \forall k\in[K],\ \forall m\in[M],$
where the expert scores $\vv(\vx)=(\evv_{m,k}(\vx))_{m\in[M],k\in[K]}$ are modeled as polynomial functions of $\vx$. Concretely, for degrees $D_W,D_V\in\sN$,
$w_k(\vx)
    = \sum_{d=0}^{D_W}\vomega_{k,d}^{\top}\vx^d
    = \sum_{d=0}^{D_W}\Big(\sum_{p=1}^{P}\omega_{k,d,p}\evx_p^d\Big)$, 
    $\evv_{m,k}(\vx)
    = \sum_{d=0}^{D_V}\vupsilon_{m,k,d}^{\top}\vx^d
    = \sum_{d=0}^{D_V}\Big(\sum_{p=1}^{P}\upsilon_{m,k,d,p}\evx_p^d\Big)$
with coefficient collections $\vomega=(\omega_{k,d,p})_{k\in[K],\,d\in\{0,\ldots,D_W\},\,p\in[P]}$ and
$\vupsilon=(\upsilon_{m,k,d,p})_{m\in[M],\,k\in[K],\,d\in\{0,\ldots,D_V\},\,p\in[P]}$. The full parameter vector is denoted by
$\vtheta=\big(\vomega,\vupsilon\big)$. For simplify, we take $D_W = D_V = D$ and leave the general case $D_W \neq D_V$ for future work.

{\bf Maximum Log-likelihood Estimator.} We denote by $\vtheta^{0}$ a  maximizer of the population objective
$\vtheta \mapsto \Ebd{\ry\mid\rvx\sim\PP}{\log s_{\vtheta}(\ry\mid\rvx)}$.
Given the batch dataset $\{(\vx_n,\evy_n)\}_{n=1}^N$, we estimate $\vtheta$ by maximum likelihood,
$\widehat{\vtheta}_N \in \arg\max_{\vtheta\in\sT}\; \frac{1}{N}\mathcal{L}(\vtheta),$
where the observed-data log-likelihood is
\begin{align}
\label{multi_D_lossfunc}
\mathcal{L}(\vtheta)
=\sum_{n=1}^{N}\log\!\left[\sum_{k=1}^{K}\sfg_k\!\big(\vw(\vx_n)\big)\,\ex_k\!\big(\evy_n;\vv(\vx_n)\big)\right].
\end{align}
In this paper we work in the full-data (batch) regime, focusing on stable maximum-likelihood fitting for \cref{eq_def_SGMoE}--\cref{multi_D_lossfunc} without invoking stochastic-approximation devices used in incremental or mini-batch variants.

{\bf Identifiability of SGMLMoE Models.}\label{section_identifiability_SGMoE}
Because the softmax gate is invariant to additive shifts, the gating parameters are not fully identifiable. In particular, if we translate each coefficient vector as $\vomega_{k,d}\mapsto \vomega_{k,d}+\vt$ for some $\vt\in\sR^{P}$, then the resulting gating probabilities $\sfg_k(\vw(\vx))$ remain unchanged, so $\{\vomega_{k,d}\}$ are identifiable only up to a common translation. This phenomenon, and standard remedies for it, are discussed in the context of establishing convergence rates for maximum-likelihood estimation of \mdg models in \cite{nguyen_demystifying_2023}. Following \cite{hennig2000identifiablity,jiang1999identifiability}, we enforce identifiability by fixing a reference expert and imposing, without loss of generality, the constraint $\{\vomega_{K,d,p}\}_{d\in\{0,\ldots,D\},\,p\in[P]}=\zero$, which yields the equivalent parameterization $\sfg_K(\vx;\vomega)=1-\sum_{k=1}^{K-1}\sfg_k(\vx;\vomega)$ and $\sfg_k(\vx;\vomega)=\exp(w_k(\vx;\vomega_k))\big/\!\left(1+\sum_{l=1}^{K-1}\exp(w_l(\vx;\vomega_l))\right)$ for all $k\in[K-1]$. In addition to the gating identifiability constraint, we also impose an identifiability convention for the multinomial-logistic expert networks, as considered in the convergence-rate analysis for \mdm models in \cite{nguyen_general_2024}. Specifically, we fix a reference class and set, without loss of generality, $\{\upsilon_{M,k,d,p}\}_{d\in\{0,\ldots,D\},\,p\in[P]}=\zero$ for all $k\in[K]$, so that the last class probability is determined by the remaining $M-1$ logits. Equivalently, for each expert $k$, we have $\ex_k(\vv_{M,k}(\vx))=1-\sum_{m=1}^{M-1}\ex_k(\vv_{m,k}(\vx))$ and $\ex_k(\vv_{m,k}(\vx))=\exp(\evv_{m,k}(\vx))\big/\!\left(1+\sum_{l=1}^{M-1}\exp(\evv_{l,k}(\vx))\right)$ for all $m\in[M-1]$.

\section{MM Algorithm for SGMLMoE Models}\label{section_batchMM_SGMLMoE}
We now specialize the MM framework to the full-data (batch) maximum-likelihood problem for the SGMLMoE model in \cref{eq_def_SGMoE} with discrete output $\ry\in\sY=[M]$. Let $\{(\vx_n,\ry_n)\}_{n\in[N]}$ be i.i.d.\ copies of $(\rvx,\ry)$ and recall the observed-data log-likelihood $\mathcal{L}(\vtheta)$ in \cref{multi_D_lossfunc}. For each datum $n\in[N]$ and expert $k\in[K]$, define the usual posterior responsibility at the current iterate $\vtheta^{(t)}$ by
\begin{align}\label{eq_tau_kn_batch}
\tau_{n,k}^{(t)}
=\frac{\sfg_k\!\left(\vw^{(t)}(\vx_n)\right)\,\ex_k\!\left(\vv^{(t)}_{\ry_n}(\vx_n)\right)}
{\sum_{l=1}^{K}\sfg_l\!\left(\vw^{(t)}(\vx_n)\right)\,\ex_l\!\left(\vv^{(t)}_{\ry_n}(\vx_n)\right)} .
\end{align}
To simplify notation for polynomial features, let
\begin{align*}
    \hat{\vx}_n=\left[\evx_{n,1}^{0},\dots,\evx_{n,1}^{D},\dots,\evx_{n,P}^{0},\dots,\evx_{n,P}^{D}\right]^{\top}\in\sR^{P(D+1)} .
\end{align*}
Next, we use an indicator polynomial $\I(z,l)$ to encode class labels:
$ \I(z,l)=\frac{\prod_{q=1}^{M}(z-q)}{(z-l)(z-1)!(M-z)!(-1)^{M-z}}, \forall z,l\in[M],$
so that $\I(z,l)\in\{0,1\}$. This representation is convenient for coding and yields the identity $ \evv_{m,k}(\vx)=\sum_{l=1}^{M}\I(m,l)\,\evv_{l,k}(\vx),$
which allows recovering any class logit from the vector of logits.

{\bf Batch Sufficient-statistics Notation.}
For each $n\in[N]$, define the gating-design vector in $\sR^{(K-1)P(D+1)}$:
\begin{align}\label{eq_vs_n_batch}
    \vs_n^{(t)}=\big[\tau_{n,1}^{(t)}\evx_{n,1}^{0},\dots,\tau_{n,1}^{(t)}\evx_{n,1}^{D},\dots,\tau_{n,K-1}^{(t)}\evx_{n,P}^{D}\big]^{\top},
\end{align}
and define the expert-design vector $\vr_n^{(t)}$ by stacking $r_{n,d,l,p,k}^{(t)}=\I(\ry_n,l)\,\tau_{n,k}^{(t)}\,\evx_{n,p}^{d}$, $~d\in\{0,\ldots,D\},~l\in[M],~p\in[P],~k\in[K],$
into
$    \vr_n^{(t)}=
    \big[r_{n,0,1,1,1}^{(t)},\dots,r_{n,D,1,1,1}^{(t)},\dots,r_{n,D,M,1,1}^{(t)},\dots,r_{n,D,M,P,K}^{(t)}\big]^{\top}.$
We also define the class-parameter blocks for each expert $k$ by
$\vc_k=\big[\upsilon_{1,k,0,1},\dots,\upsilon_{1,k,D,1},\dots,\vupsilon_{M,k,D,1},\dots,\upsilon_{M,k,D,P}\big]^{\top}$, $\vupsilon=\vect([\vc_1,\dots,\vc_K]).$
Finally, we aggregate the batch statistics as
$    \vs^{(t)}=\sum_{n=1}^{N}\vs_n^{(t)}$, $\vr^{(t)}=\sum_{n=1}^{N}\vr_n^{(t)}.$

{\bf Quadratic Majorizers for the Gate and Experts.}
For $k\in[K-1]$, let $w_k(\vx)=\sum_{d=0}^{D}\vomega_{k,d}^{\top}\vx^{d}$ and define the gate log-sum-exp term
$g_n(\vw)=\log\!\left(1+\sum_{k=1}^{K-1}\exp\!\left(w_k(\vx_n)\right)\right).$
For experts, define for each $k\in[K]$ the multinomial log-sum-exp term
$    e_n(\vc_k)=\log\!\left(1+\sum_{m=1}^{M-1}\exp\!\left(\evv_{m,k}(\vx_n)\right)\right).$
For any $K\in\sN$, we use the quadratic majorizers based on the matrices
\begin{align}\label{eq_B_kn_batch}
    \mB_{n,K}
    &=\left(\frac{3}{4}\mI_{K-1}-\frac{\bm{1}_{K-1}\bm{1}_{K-1}^{\top}}{2(K-1)}\right)\otimes\hat{\vx}_n\hat{\vx}_n^{\top}.
\end{align}
For compactness, introduce the centered increments $\bar{\vw}_t=\vw-\vw^{(t)}$ and $\bar{\vc}_{k,t}=\vc_k-\vc_k^{(t)}$. 
\begin{theorem}[Surrogate for the batch negative log-likelihood.]\label{theorem_surrogate}
    The following bound defines a batch MM surrogate for $-\mathcal{L}(\vtheta)$, with proof deferred to \cref{sec:Surrogate_SGMLMoE}:
\begin{align}
    -\mathcal{L}(\vtheta)
    \le C^{(t)}+\mathcal{S}_1(\vtheta,\vtheta^{(t)})+\mathcal{S}_2(\vtheta,\vtheta^{(t)})
    \label{eq_surrogate_batch_SGMLMoE}
\end{align}
Here $C^{(t)}$ is an independent constant w.r.t $\vtheta$, $\mathcal{S}_1(\vtheta,\vtheta^{(t)})=\sum_{n=1}^{N}\{g_n(\vw^{(t)})+\bar{\vw}_t^{\top}\nabla g_n(\vw^{(t)})+\frac{1}{2}\bar{\vw}_t^{\top}\mB_{n,K}\bar{\vw}_t\}$, and $\mathcal{S}_1(\vtheta,\vtheta^{(t)})=\sum_{n=1}^{N}\sum_{k=1}^{K}\tau_{n,k}^{(t)}\{e_n(\vc_k^{(t)})+\bar{\vc}_{k,t}^{\top}\nabla e_n(\vc_k^{(t)})+\frac{1}{2}\bar{\vc}_{k,t}^{\top}\mB_{n,M}\bar{\vc}_{k,t}\}$.
\end{theorem}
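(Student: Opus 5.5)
The bound will follow from two standard ingredients applied in order: an EM-type (Jensen) minorization of the log-mixture, then a uniform quadratic upper bound on the Hessian of each log-sum-exp term (Böhning's bound).

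\textbf{Step 1: Jensen minorization.} Fix $\vtheta^{(t)}$ and take the responsibilities $\tau_{n,k}^{(t)}$ from \cref{eq_tau_kn_batch}. They are positive and sum to one over $k$. Concavity of $\log$ gives, for each $n$,
\begin{align*}
\log \sum_{k=1}^{K}\sfg_k(\vw(\vx_n))\ex_k(\evy_n;\vv(\vx_n)) \ge \sum_{k=1}^{K}\tau_{n,k}^{(t)}\big[\log \sfg_k(\vw(\vx_n))+\log \ex_k(\evy_n;\vv(\vx_n))\big] - \sum_{k=1}^{K}\tau_{n,k}^{(t)}\log\tau_{n,k}^{(t)},
\end{align*}
with equality at $\vtheta=\vtheta^{(t)}$. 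The entropy term goes into $C^{(t)}$. Under the identifiability constraints, $\log\sfg_k = w_k(\vx_n)-g_n(\vw)$, where $w_K\equiv 0$. Likewise, writing the observed label through the indicator $\I(\evy_n,l)$, we have $\log\ex_k(\evy_n;\cdot)=\sum_{l}\I(\evy_n,l)\evv_{l,k}(\vx_n)-e_n(\vc_k)$. Since $\sum_k\tau_{n,k}^{(t)}=1$, the gate part of $-\mathcal{L}$ is therefore bounded by $-\sum_n\sum_k\tau_{n,k}^{(t)}w_k(\vx_n)+\sum_n g_n(\vw)$. The linear piece equals $-\vomega^\top\vs^{(t)}$ by \cref{eq_vs_n_batch}. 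The expert part is bounded by $-\vupsilon^\top\vr^{(t)}+\sum_n\sum_k\tau_{n,k}^{(t)}e_n(\vc_k)$.

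These linear terms in $\vtheta$ are not displayed in $\mathcal{S}_1,\mathcal{S}_2$. I will either absorb them by reading the statement's "constant" loosely, or, more honestly, keep them explicitly and note that they combine with the gradient terms. Either way the inequality only needs an upper bound on the nonlinear pieces $g_n$ and $e_n$, so that is where the real work lies.

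\textbf{Step 2: Böhning-type Hessian bound (main obstacle).} For the log-sum-exp with a fixed zero reference coordinate, $\phi(\vz)=\log(1+\sum_{j=1}^{J}e^{z_j})$, the Hessian is $\mathrm{diag}(\vp)-\vp\vp^\top$, where $\vp$ is the softmax vector without the reference class. The classical Böhning bound is $\mathrm{diag}(\vp)-\vp\vp^\top\preceq \frac12(\mI_J-\frac{\bm 1\bm 1^\top}{J+1})$. The stated matrix $\frac34\mI-\frac{\bm1\bm1^\top}{2J}$, with $J=K-1$ or $M-1$, must dominate it, so I will verify
\begin{align*}
\tfrac34\mI_J-\tfrac{1}{2J}\bm1\bm1^\top-\tfrac12\mI_J+\tfrac{1}{2(J+1)}\bm1\bm1^\top = \tfrac14\mI_J-\tfrac{1}{2J(J+1)}\bm1\bm1^\top\succeq 0.
\end{align*}
The eigenvalue along $\bm1$ is $\frac14-\frac{1}{2(J+1)}$, which is nonnegative iff $J\ge1$. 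I expect this to be the delicate check: the case $J=1$ gives $0$, which is still fine, and $K=1$ or $M=1$ is degenerate and excluded. I would double-check the constants by testing against the supremum of $\lambda_{\max}$ over $\vp$.

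\textbf{Step 3: Taylor with integral remainder and assembly.} The maps $\vw\mapsto(w_k(\vx_n))_k$ and $\vc_k\mapsto(\evv_{m,k}(\vx_n))_m$ are linear with design $\mI\otimes\hat{\vx}_n^\top$. The chain rule therefore gives $\nabla^2 g_n\preceq(\cdot)\otimes\hat{\vx}_n\hat{\vx}_n^\top=\mB_{n,K}$, and similarly $\mB_{n,M}$ for $e_n$. Applying Taylor's theorem with integral remainder, $\phi(\vz)\le\phi(\vz^{(t)})+\nabla\phi^\top(\vz-\vz^{(t)})+\frac12(\vz-\vz^{(t)})^\top\mB(\vz-\vz^{(t)})$, we obtain the bracketed terms in $\mathcal{S}_1$ and in the second surrogate; the latter is the one labelled $\mathcal{S}_1$ again in the statement, which I read as a typo for $\mathcal{S}_2$. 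The expert bound is weighted by $\tau_{n,k}^{(t)}\ge0$, which preserves the inequality.

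Summing over $n$ and $k$ yields \cref{eq_surrogate_batch_SGMLMoE}. Equality holds at $\vtheta=\vtheta^{(t)}$, because both the Jensen step and the quadratic step are tight there, which gives the MM tangency property.
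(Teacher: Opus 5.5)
Your proposal is correct and follows the paper's proof essentially step for step. The shared steps are: the responsibility (Jensen) bound with the entropy absorbed into $C^{(t)}$; the B\"ohning-type bound $\diag(\hat{\bm{p}})-\hat{\bm{p}}\hat{\bm{p}}^\top\preceq\frac12(\mI_q-\frac{1}{q+1}\bm{1}\bm{1}^\top)\preceq\frac34\mI_q-\frac{1}{2q}\bm{1}\bm{1}^\top$, which the paper proves via Popoviciu's inequality and a principal-submatrix argument rather than citing it; lifting through the linear score maps; and a second-order Taylor bound. Keeping $-\vw^\top\vs^{(t)}-\vupsilon^\top\vr^{(t)}$ explicit (and reading the second $\mathcal{S}_1$ as $\mathcal{S}_2$) is the right choice: these terms depend on $\vtheta$ and are needed for tangency at $\vtheta^{(t)}$, and the paper's own monotonicity proof reinstates them in $\mathcal{S}_g$ and $\mathcal{S}_{e,k}$ even though its Step 3 says they are absorbed into $C^{(t)}$.
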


\begin{algorithm}[t]
\caption{{\bf Batch MM algorithm for SGMLMoE}}
\label{alg_batchMM_SGMLMoE_short}
\begin{algorithmic}[1]
\Require Data $\{(\vx_n,\ry_n)\}_{n=1}^{N}$; $(K,M)$; degree $D$; init.\ $\vtheta^{(0)}=(\vw^{(0)},\vv^{(0)})$; tol.\ $\varepsilon$; max iters $T$.
\Ensure $\{\vtheta^{(t)}\}_{t\ge 0}$.

\For{$t=0,1,\dots,T-1$}
    \State \textbf{E-step.} For each $n\in[N]$, compute responsibilities $\{\tau_{n,k}^{(t)}\}_{k\in[K]}$ by \cref{eq_tau_kn_batch}.
    \State Form sufficient-statistics $\vs^{(t)}=\sum_{n}\vs_n^{(t)}$ and $\vr^{(t)}=\sum_{n}\vr_n^{(t)}$ using \cref{eq_vs_n_batch} and the definition of $\vr_n^{(t)}$ (with $\I(\ry_n,l)$).
    \State Build curvature matrices $\mB_{n,K-1}=\sum_{n}\mB_{n,K}$ and $\mB_{n,M-1}=\sum_{n}\mB_{n,M}$ with blocks $\mB_{n,K},\mB_{n,M}$ from \cref{eq_B_kn_batch}.
    \State Compute $\nabla g(\vw^{(t)})=\sum_{n}\nabla g_n(\vw^{(t)})$ and the expert log-sum-exp gradients $\{\nabla e_n(\vv^{(t)})\}_{n\in[N]}$ (definitions in \cref{eq_surrogate_batch_SGMLMoE}).
    \State \textbf{M-step.} Update $(\vw,\vv)$ by the closed-form minimizers of the surrogate $\mathcal{S}(\vtheta,\vtheta^{(t)})$ in \cref{eq_surrogate_batch_SGMLMoE}: $\vw^{(t+1)}=\vw^{(t)}+\mB_{n,K-1}^{-1}\!\Big(\vs^{(t)}-\nabla g(\vw^{(t)})\Big),$ $\vv^{(t+1)}=\arg\min_{\vv}\ \mathcal{S}\big((\vw^{(t+1)},\vv),\vtheta^{(t)}\big),$
    where the explicit block-linear solve for $\vv^{(t+1)}$ is given in Appendix~\ref{app:batchMM_SGMLMoE_details}.
    \State Set $\vtheta^{(t+1)}=(\vw^{(t+1)},\vv^{(t+1)})$.
    \State \textbf{Stop.} If $|\mathcal{L}(\vtheta^{(t+1)})-\mathcal{L}(\vtheta^{(t)})|\le \varepsilon$, break.
\EndFor
\end{algorithmic}
\end{algorithm}

These updates in \cref{alg_batchMM_SGMLMoE} minimize the surrogate \cref{eq_surrogate_batch_SGMLMoE} exactly, and therefore inherit the standard MM monotone-ascent property for the observed-data likelihood via \cref{thm_MM_monotone_SGMLMoE}, whose proof follows from the MM majorization--tangency properties of $\mathcal{S}$ and exact minimization; see \cref{proof_thm_MM_monotone_SGMLMoE} in the appendix.

\begin{theorem}[MM monotonicity for batch SGMLMoE]\label{thm_MM_monotone_SGMLMoE}
Let $\mathcal{S}(\vtheta,\vtheta^{(t)})$ be the batch MM surrogate of $-\mathcal{L}(\vtheta)$ in \cref{eq_surrogate_batch_SGMLMoE} and $\vtheta^{(t+1)}$ be the output of \cref{alg_batchMM_SGMLMoE_short}. Then, $\vtheta^{(t+1)}\in\arg\min_{\vtheta}\mathcal{S}(\vtheta,\vtheta^{(t)})$
and $\mathcal{L}(\vtheta^{(t+1)})\ge \mathcal{L}(\vtheta^{(t)})$ for all $t$.
\end{theorem}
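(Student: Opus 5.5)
The proof follows the standard MM sandwich argument. It uses only two properties of the surrogate $\mathcal{S}(\vtheta,\vtheta^{(t)}) := C^{(t)}+\mathcal{S}_1(\vtheta,\vtheta^{(t)})+\mathcal{S}_2(\vtheta,\vtheta^{(t)})$ from \cref{theorem_surrogate}:
\begin{itemize}
\item \emph{majorization}: $-\mathcal{L}(\vtheta)\le \mathcal{S}(\vtheta,\vtheta^{(t)})$ for all $\vtheta$;
\item \emph{tangency}: $-\mathcal{L}(\vtheta^{(t)})=\mathcal{S}(\vtheta^{(t)},\vtheta^{(t)})$.
\end{itemize}
Majorization is exactly \cref{theorem_surrogate}. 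For tangency, I would retrace how that bound is built. The first step is a Jensen/EM split with the responsibilities $\tau_{n,k}^{(t)}$ of \cref{eq_tau_kn_batch}:
\[
-\log \sum_k \sfg_k\,\ex_k \le -\sum_k \tau_{n,k}^{(t)}\big[\log \sfg_k+\log \ex_k\big] + \sum_k \tau_{n,k}^{(t)}\log \tau_{n,k}^{(t)},
\]
with equality at $\vtheta=\vtheta^{(t)}$. The second step writes $-\log\sfg_k$ and $-\log\ex_k$ as linear terms (these produce $\vs^{(t)}$ and $\vr^{(t)}$) plus the log-sum-exp terms $g_n$ and $e_n(\vc_k)$. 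The third step replaces each log-sum-exp by its quadratic upper bound with curvature $\mB_{n,K}$ or $\mB_{n,M}$. That bound is tight at $\bar{\vw}_t=\zero$ and $\bar{\vc}_{k,t}=\zero$ because the linear and quadratic terms vanish there. Every inequality in the chain is therefore an equality at $\vtheta^{(t)}$, which gives tangency once $C^{(t)}$ collects the entropy term.

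Next I show that the iterate of \cref{alg_batchMM_SGMLMoE_short} minimizes $\mathcal{S}(\cdot,\vtheta^{(t)})$. The surrogate separates additively: $\mathcal{S}_1$ depends only on $\vw$, and $\mathcal{S}_2$ splits into independent blocks $\vc_k$, one per expert. Each piece is a quadratic in its block, and its Hessian is $\sum_n \mB_{n,K}$ or $\sum_n \tau_{n,k}^{(t)}\mB_{n,M}$. These matrices are positive semidefinite, since $\tfrac34\mI-\tfrac{\bm 1\bm 1^\top}{2(K-1)}$ has eigenvalues $\tfrac34$ and $\tfrac14$ and $\hat{\vx}_n\hat{\vx}_n^\top\succeq 0$. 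They are positive definite whenever the design vectors $\hat{\vx}_n$ span $\sR^{P(D+1)}$ (and, for the experts, have positive responsibility weights). Each block is then strictly convex, so its unique minimizer is given by the first-order condition. For the gate this is $\vw^{(t+1)}=\vw^{(t)}+(\sum_n\mB_{n,K})^{-1}(\vs^{(t)}-\nabla g(\vw^{(t)}))$, which matches the M-step. For the experts it is the block-linear solve in Appendix~\ref{app:batchMM_SGMLMoE_details}.

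Because the blocks are decoupled, the minimizer over $\vv$ with $\vw$ fixed at $\vw^{(t+1)}$ coincides with the joint minimizer. Hence $\vtheta^{(t+1)}\in\arg\min_{\vtheta}\mathcal{S}(\vtheta,\vtheta^{(t)})$. If the Hessians are only semidefinite, a minimizer still exists, because a convex quadratic that is bounded below attains its minimum: the linear term lies in the range of the Hessian, as $\nabla g_n$ is itself a combination of $\hat{\vx}_n$. In that case I would take the pseudo-inverse solution.

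Monotonicity then follows from the chain
\[
-\mathcal{L}(\vtheta^{(t+1)})\le \mathcal{S}(\vtheta^{(t+1)},\vtheta^{(t)})\le \mathcal{S}(\vtheta^{(t)},\vtheta^{(t)}) = -\mathcal{L}(\vtheta^{(t)}).
\]
The first inequality is majorization, the second is optimality of $\vtheta^{(t+1)}$, and the equality is tangency.

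The main obstacle is the tangency claim. The statement of \cref{theorem_surrogate} only says that $C^{(t)}$ does not depend on $\vtheta$; it does not assert equality at $\vtheta^{(t)}$. I must therefore verify that the constant produced in \cref{sec:Surrogate_SGMLMoE} is exactly the Jensen entropy term. I must also check that the quadratic bounds on the log-sum-exp terms are valid with $\mB_{n,K}$, $\mB_{n,M}$, using the Böhning bound $\nabla^2 \log(1+\sum e^{z})\preceq \tfrac12(\mI-\bm 1\bm 1^\top/K)$, which the chosen matrix dominates. Otherwise majorization could fail.
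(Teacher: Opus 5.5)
Your proposal is correct and follows essentially the same route as the paper's proof. Both arguments take majorization and tangency from the construction behind \cref{theorem_surrogate}, minimize the gate block and the $K$ per-expert blocks exactly via their first-order conditions (falling back to a pseudo-inverse when the curvature is singular), and close with the standard MM sandwich. Your explicit check that tangency holds with the Jensen entropy constant, and your range argument that a minimizer exists in the semidefinite case, make rigorous two points the paper only asserts. The semidefinite case is in fact unavoidable for $P\ge 2$, because $\hat{\vx}_n$ repeats the constant feature $\evx_{n,p}^0=1$ for every $p$.
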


\section{Fast-rate-aware Aggregation and Sweep-free Model Selection for SGMLMoE}
\label{sec_rate_aware_sgmlmoe}

\subsection{The ``Rate Gap'' under Overfitting}
\label{subsec_motivation_rate_gap_sgmlmoe}

In the SGMLMoE classification model \cref{eq_def_SGMoE}--\cref{multi_D_lossfunc}, likelihood-based estimation is
challenging for the same structural reasons that arise in softmax-gated Gaussian experts~\citep{hai_dendrograms_2026}, but with multinomial-logistic
experts:
(i) \emph{softmax translation invariance} makes gate parameters identifiable only up to a common shift;
(ii) \emph{gate--expert coupling} yields nontrivial interactions in local expansions of $s_{\vtheta}(\cdot\mid \vx)$;
(iii) \emph{softmax numerator--denominator coupling} creates cancellation phenomena that can make densities close even
when individual components are not.

These effects become most visible when the fitted model is \emph{over-specified} ($K>K_0$): several fitted atoms can
cluster around a single true expert. In this regime, the conditional density $s_{\widehat G_N}(\cdot\mid \vx)$ can
converge at near-parametric rates in total variation, while componentwise parameters converge more slowly along
near-nonidentifiable directions. The goal of this section is to (a) introduce a Voronoi-type loss aligned with the
gate-induced partition geometry that resolves these slow directions, and (b) leverage it to build a \emph{hierarchical
aggregation path} (a dendrogram of mixing measures) that yields a \emph{sweep-free} and \emph{consistent} selector of the
number of experts.

{\bf Practical Implication (Single-fit Workflow).}
Fit one over-specified SGMLMoE (moderate $K\ge K_0$) using the batch MM procedure in \cref{alg_batchMM_SGMLMoE},
construct its dendrogram by iterative merging, and select $\widehat K$ via a height--likelihood dendrogram selection criterion (DSC). This avoids
training multiple models across candidate sizes.

\subsection{Mixing-measure Representation for SGMLMoE}
\label{subsec_mixing_measure_sgmlmoe}

{\bf Lifted Features and Intercept/Slope Decomposition.}
With the polynomial degree $D$ fixed, define the non-constant lifted feature map
$
\phi_D(\vx):=\vect\!\big([\vx^{1},\vx^{2},\ldots,\vx^{D}]\big)\in\sR^{PD}$, $\widehat{\sX}:=\phi_D(\sX)\subset\sR^{PD}.$
(Since $\sX$ is compact and $\phi_D$ is continuous, $\widehat{\sX}$ is compact via using Heine-Borel theorem).
For each gate $k\in[K-1]$, decompose
$w_k(\vx)=\bar\omega_k+\hat\vomega_k^\top \hat\vx$, $\hat\vx:=\phi_D(\vx),$
where $\bar\omega_k:=\sum_{p=1}^P \omega_{k,0,p}\in\sR$ and
$\hat\vomega_k:=\vect([\vomega_{k,1},\ldots,\vomega_{k,D}])\in\sR^{PD}$.
For each expert $k\in[K]$ and class $m\in[M-1]$, similarly write
$\evv_{m,k}(\vx)=\bar\upsilon_{m,k}+\hat\vupsilon_{m,k}^\top \hat\vx,$
with $\bar\upsilon_{m,k}:=\sum_{p=1}^P \upsilon_{m,k,0,p}\in\sR$ and
$\hat\vupsilon_{m,k}:=\vect([\vupsilon_{m,k,1},\ldots,\vupsilon_{m,k,D}])\in\sR^{PD}$.

{\bf Identifiability Conventions (as in \cref{section_identifiability_SGMoE}).}
We fix the reference expert and the reference class: $(\bar\omega_K,\hat\vomega_K)=(0,\zero),$ $(\bar\upsilon_{M,k},\hat\vupsilon_{M,k})=(0,\zero),\ \ \forall k\in[K].$

{\bf Mixing Measures.}
Let $\sT$ denote the identifiable parameter space for one SGMLMoE atom:
$\eta=\Big(\hat\vomega,\bar\omega,\ \{(\hat\vupsilon_{m},\bar\upsilon_{m})\}_{m=1}^{M-1}\Big)\in\sT,$
and let $\mathcal{O}_K(\sT)$ be the set of finite mixing measures with at most $K$ atoms:
$
\mathcal{O}_K(\sT)=\Big\{G=\sum_{i=1}^{K}\pi_i\,\delta_{\eta_i}:\ 1\le K\le K,\ \pi_i>0,\ \eta_i\in\sT\Big\}.$
We parameterize weights by $\pi_i=\exp(\bar\omega_i)$ (so $\bar\omega_i=\log\pi_i$).

{\bf Conditional Probability Mass Function Induced by $G$.}
Given $G=\sum_{i=1}^{K}\pi_i\delta_{\eta_i}$, define for $\hat\vx=\phi_D(\vx)$
$\sfg_i(\vx;G)
=\frac{\pi_i\exp(\hat\vomega_i^\top \hat\vx)}{\sum_{j=1}^{K}\pi_j\exp(\hat\vomega_j^\top \hat\vx)},$
$\ex_i(\evy=m\mid \vx;G)
=\frac{\exp(\bar\upsilon_{m,i}+\hat\vupsilon_{m,i}^\top \hat\vx)}{1+\sum_{\ell=1}^{M-1}\exp(\bar\upsilon_{\ell,i}+\hat\vupsilon_{\ell,i}^\top \hat\vx)},$
with $\ex_i(\evy=M\mid \vx;G)=1-\sum_{m=1}^{M-1}\ex_i(\evy=m\mid \vx;G)$, and
$s_G(\evy\mid \vx)=\sum_{i=1}^{K}\sfg_i(\vx;G)\,\ex_i(\evy\mid \vx;G).$
For any identifiable $\vtheta$, there exists $G(\vtheta)\in\mathcal{O}_K(\sT)$ such that
$s_{G(\vtheta)}\equiv s_{\vtheta}$.

{\bf MLE over at most $K$ experts.}
Let $G_0\in\mathcal{O}_{K_0}(\sT)$ be the true mixing measure ($2\le K_0\le K$). Define the (mixing-measure) MLE
$\widehat{G}_N\in\arg\max_{G\in\mathcal{O}_K(\sT)}\ \frac1N\sum_{n=1}^N \log s_G(\evy_n\mid \vx_n).$

\subsection{Voronoi Cells and Losses for SGMLMoE}
\label{subsec_voronoi_loss_sgmlmoe}

{\bf From ``Density Closeness'' to ``Parameter Closeness''.}
A key step in overfitted mixture/MoE analysis is an \emph{inverse inequality}: small discrepancy between
$s_G$ and $s_{G_0}$ should imply small discrepancy between $G$ and $G_0$ in a geometry-adapted loss. For softmax-gated
MoE, the adapted geometry is \emph{Voronoi}: fitted atoms are grouped into cells around each true atom.

{\bf Voronoi Partition.}
Write $G=\sum_{i=1}^{K}\pi_i\delta_{\eta_i}$ and $G_0=\sum_{k=1}^{K_0}\pi_k^0\delta_{\eta_k^0}$.
When $G$ is sufficiently close to $G_0$ (e.g.\ in the sense below), define a matching $i\mapsto k(i)\in[K_0]$ that assigns
each fitted atom $\eta_i$ to its nearest true atom $\eta_{k(i)}^0$ under the local metric on $\sT$. Define Voronoi cells
$\sV_k:=\{\,i\in[K]:\ k(i)=k\,\}$, $k\in[K_0].$

{\bf Integrated Total Variation Discrepancy.}
For $\vx\in\sX$, define $\mathcal{D}_{\mathrm{TV}}\!(s_G(\cdot\mid \vx),s_{G_0}(\cdot\mid \vx)) $ as $\frac12\sum_{m=1}^{M}|s_G(m\mid \vx)-s_{G_0}(m\mid \vx)|,$
and its covariate-averaged version $\mathcal{D}_{\mathrm{TV}}(s_G,s_{G_0})$ as $\E_{\rvx}[\mathcal{D}_{\mathrm{TV}}\!\big(s_G(\cdot\mid \rvx),s_{G_0}(\cdot\mid \rvx)\big)].$

{\bf SGMLMoE Voronoi Loss.}
For $i\in\sV_k$, define parameter differences $\Delta\hat\vomega_{ik}:=\hat\vomega_i-\hat\vomega_k^0,$ $\Delta\bar\upsilon_{ikm}:=\bar\upsilon_{m,i}-\bar\upsilon_{m,k}^0,$ $\Delta\hat\vupsilon_{ikm}:=\hat\vupsilon_{m,i}-\hat\vupsilon_{m,k}^0,\ \ m\in[M-1]$, and the loss
\begin{align}
\label{eq_VoronoiLoss_SGMLMoE}
&\mathcal{D}_{\mathrm{V}}(G,G_0)\:=\mathcal{D}_{\mathrm{E}}(G,G_0)
\\
&
+\sum_{\substack{k\in[K_0]:\,|\sV_k|>1\\ i\in\sV_k, m\in[M-1]}}
\pi_i[
\|\Delta\hat\vomega_{ik}\|^2
+(|\Delta\bar\upsilon_{ikm}|^2+\|\Delta\hat\vupsilon_{ikm}\|^2)],\nonumber
\end{align}
$\text{where }\mathcal{D}_{\mathrm{E}}(G,G_0):=\sum_{k=1}^{K_0}\Big|\sum_{i\in\sV_k}\pi_i-\pi_k^0\Big|+\sum_{\substack{k\in[K_0]:\,|\sV_k|=1\\ i\in\sV_k, m\in[M-1]}}
\pi_i[
\|\Delta\hat\vomega_{ik}\|
+(|\Delta\bar\upsilon_{ikm}|+\|\Delta\hat\vupsilon_{ikm}\|)].$

{\bf Local Structure Inequality (Inverse Bound).}
The next result is the ``density $\Rightarrow$ Voronoi'' inverse bounds developed for
SGMLMoE.
\begin{theorem}
\label{thm_VorIneq_SGMLMoE}
For $G\in\mathcal{O}_K(\sT)$, as $\mathcal{D}_{\mathrm{TV}}(s_G,s_{G_0})\to 0$,
\begin{equation}
\label{eq_LocalStructure_SGMLMoE}
\mathcal{D}_{\mathrm{TV}}(s_G,s_{G_0})\ \gtrsim\ \mathcal{D}_{\mathrm{V}}(G,G_0).
\end{equation}
\end{theorem}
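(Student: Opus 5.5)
We argue by contradiction, splitting into a local and a global part, following the template for softmax-gated experts of \citet{nguyen_demystifying_2023,nguyen_general_2024,hai_dendrograms_2026}. \emph{Local part:} it suffices to show
\[
\lim_{\varepsilon\to0}\ \inf_{G\in\mathcal{O}_K(\sT):\,\mathcal{D}_{\mathrm{V}}(G,G_0)\le\varepsilon}\ \frac{\mathcal{D}_{\mathrm{TV}}(s_G,s_{G_0})}{\mathcal{D}_{\mathrm{V}}(G,G_0)}>0 .
\]
\emph{Global part:} if the inequality failed globally, there would be a sequence $G_n$ with $\mathcal{D}_{\mathrm{TV}}(s_{G_n},s_{G_0})\to0$ but $\mathcal{D}_{\mathrm{V}}(G_n,G_0)\not\to0$. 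On the compact parameter set (bounded $\sT$ and $\sX$) we extract a limit $G'$ with $s_{G'}=s_{G_0}$ for almost every $\vx$. Identifiability of SGMLMoE up to the stated reference conventions and the translation of intercepts then forces $G'\equiv G_0$, which contradicts $\mathcal{D}_{\mathrm{V}}(G'_n,G_0)\not\to0$. I would prove this identifiability statement by a standard argument: equality of the two finite sums of products of softmax terms as analytic functions of $\hat\vx$ forces the atoms to coincide.

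For the local part, suppose $G_n\to G_0$ with $\mathcal{D}_{\mathrm{TV}}/\mathcal{D}_{\mathrm{V}}\to0$. First remove the gate denominator by multiplying $s_{G_n}-s_{G_0}$ by $\sum_{k}\pi^0_k\exp((\hat\vomega^0_k)^\top\hat\vx)$; this quantity is bounded away from $0$ and $\infty$ on compact $\widehat{\sX}$. The resulting quantity $Q_n(m,\vx)$ decomposes as
\[
\sum_k\sum_{i\in\sV_k}\pi_i\bigl[e^{\hat\vomega_i^\top\hat\vx}\ex_i(m\mid\vx)-e^{\hat\vomega_k^{0\top}\hat\vx}\ex^0_k(m\mid\vx)\bigr]
+\sum_k\Bigl(\sum_{i\in\sV_k}\pi_i-\pi^0_k\Bigr)e^{\hat\vomega_k^{0\top}\hat\vx}\ex^0_k(m\mid\vx)
-\sum_{k,\,i\in\sV_k}\pi_i\bigl[e^{\hat\vomega_i^\top\hat\vx}-e^{\hat\vomega_k^{0\top}\hat\vx}\bigr]s_{G_n}(m\mid\vx).
\]
Taylor-expand $F(\hat\vx;\hat\vomega,\vupsilon)=e^{\hat\vomega^\top\hat\vx}\ex(m\mid\vx;\vupsilon)$ and $e^{\hat\vomega^\top\hat\vx}$ around $\eta^0_k$: to first order for cells with $|\sV_k|=1$, and to second order for cells with $|\sV_k|>1$. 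The remainders are $o(\mathcal{D}_{\mathrm{V}})$, because $\mathcal{D}_{\mathrm{V}}$ contains exactly the linear terms for singleton cells and the squared terms for larger cells. After the expansion, $Q_n/\mathcal{D}_{\mathrm{V}}(G_n,G_0)$ becomes a linear combination of the functions
\[
\hat\vx^{\alpha}e^{\hat\vomega_k^{0\top}\hat\vx}\,\partial^{\beta}_{\vupsilon}\ex_k^0(m\mid\vx),\qquad
\hat\vx^{\alpha}e^{\hat\vomega_k^{0\top}\hat\vx}\,s_{G_0}(m\mid\vx),
\]
with $|\alpha|+|\beta|\le2$. Its coefficients are ratios of parameter differences to $\mathcal{D}_{\mathrm{V}}$, and at least one coefficient does not vanish in the limit: otherwise summing absolute values would give $1=\mathcal{D}_{\mathrm{V}}/\mathcal{D}_{\mathrm{V}}\to0$. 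Here one must be careful that the second-order cross terms in overfitted cells are controlled by the squared norms via $ab\le(a^2+b^2)/2$, and that the gate-weight terms $\pi_i$ are handled through the $\bar\omega_i=\log\pi_i$ parametrization.

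Next, by Fatou's lemma, $\mathcal{D}_{\mathrm{TV}}/\mathcal{D}_{\mathrm{V}}\to0$ implies that the limiting linear combination vanishes for almost every $\vx$ and every $m$. The main obstacle is then to prove that this family is linearly independent, so that all limiting coefficients are zero, which is the desired contradiction. Two cancellations complicate this. The first is the gate--expert PDE-type relation, since derivatives of the multinomial softmax in $\bar\upsilon$ and $\hat\vupsilon$ coincide up to factors of $\hat\vx$. The second is the numerator--denominator coupling with $s_{G_0}$. I would handle them in two steps. First, using distinctness of the $\hat\vomega_k^0$, separate the terms by their exponential factors $e^{\hat\vomega_k^{0\top}\hat\vx}$, reducing to one $k$ at a time. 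Second, for a fixed $k$, write $\partial\ex^0_k(m)$ explicitly as $\ex^0_k(m)\bigl(\I(m,\cdot)-\ex^0_k(\cdot)\bigr)$ times a monomial in $\hat\vx$, and exploit that $\ex^0_k(m)$, $m\in[M]$, are non-polynomial rational functions of exponentials. This should show that a polynomial combination vanishing identically over all $m$ must have zero coefficients, where the reference-class convention removes the only trivial relation $\sum_m\ex=1$. If some degenerate collinear structure among the $\hat\vupsilon^0_{\cdot,k}$ obstructs this, I would assume that distinct true experts have distinct expert parameters, as in \citet{nguyen_general_2024}.
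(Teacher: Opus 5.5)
Your outline matches the paper's. Both use a local/global contradiction, multiply by $Z_0$, expand to first order on singleton cells and second order on over-fitted cells, find a non-vanishing normalized coefficient, then apply Fatou and linear independence; the global part uses compactness plus identifiability. Your decomposition of $Z_0(s_{G_n}-s_{G_0})$ is more explicit than the paper's; it only drops the harmless piece $-\sum_k(\sum_{i\in\sV_k}\pi_i-\pi_k^0)e^{\hat\vomega_k^{0\top}\hat\vx}s_{G_n}$.

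The genuine gap is the step you flag yourself: linear independence of the reduced family, which the paper's proof also only asserts. As planned it cannot be proved, because the inequality fails in cases the statement allows.

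(i) For $D\ge2$ the coordinates of $\hat\vx=\phi_D(\vx)$ are algebraically dependent: the $x_p^2$ coordinate is the square of the $x_p$ coordinate. So distinct monomials $\hat\vx^\alpha$ are the same function of $\vx$, and a ``polynomial combination'' argument in $\hat\vx$ is unavailable. Concretely, split a true atom into two halves whose gate slopes are shifted by $(\pm\epsilon,-\epsilon^2/2)$ in the $(x_p,x_p^2)$ coordinates, with experts unchanged. Since $e^{-\epsilon^2x_p^2/2}\cosh(\epsilon x_p)=1+O(\epsilon^4)$, you get $\mathcal{D}_{\mathrm{TV}}(s_G,s_{G_0})=O(\epsilon^4)$ while $\mathcal{D}_{\mathrm{V}}(G,G_0)\asymp\epsilon^2$.

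(ii) Suppose a true expert with $|\sV_k|>1$ has $\hat\vupsilon^0_{m,k}=0$ for all $m$. Then $\ex^0_k(\cdot\mid\vx)$ is constant in $\vx$, so your appeal to non-polynomial functions of exponentials breaks down. Every second-order $\bar\upsilon$-derivative then lies in the span of the first-order ones. Split this atom into two halves with the same gate and zero slopes, choosing intercepts so the class-probability vectors are $\ex^0_k\pm\epsilon v$ with $v$ zero-sum. This gives $s_G\equiv s_{G_0}$ with $\mathcal{D}_{\mathrm{V}}(G,G_0)\asymp\epsilon^2>0$. Your fallback assumption (distinct true experts) does not exclude this.

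So you must add hypotheses and then actually prove independence of the merged family. Suitable hypotheses include $D=1$ with $\sX$ of nonempty interior (or a loss adapted to the algebraic relations of $\phi_D$), and pairwise distinct true experts with non-degenerate slopes.

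When you merge the duplicates forced by $\partial_{\hat\vupsilon_{m}}\ex=\hat\vx\,\partial_{\bar\upsilon_{m}}\ex$, run the non-vanishing argument on the pure-square coefficients, which no such merge affects. These are the coefficients of $x_p^2e^{\hat\vomega_k^{0\top}\hat\vx}\ex^0_k$, $e^{\hat\vomega_k^{0\top}\hat\vx}\partial^2_{\bar\upsilon_m}\ex^0_k$ and $x_p^2e^{\hat\vomega_k^{0\top}\hat\vx}\partial^2_{\bar\upsilon_m}\ex^0_k$. Also divide by the largest coefficient (the paper's $m_r$) rather than by $\mathcal{D}_{\mathrm{V}}$. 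First-order coefficients such as $\sum_{i\in\sV_k}\pi_i\Delta\hat\vomega_{ik}$ can be of order $\mathcal{D}_{\mathrm{V}}^{1/2}\gg\mathcal{D}_{\mathrm{V}}$, so your normalized coefficients need not be bounded.

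Finally, in the global part, identifiability ``up to translation'' does not yield $G'\equiv G_0$. The map $(\pi_i,\hat\vomega_i)\mapsto(c\pi_i,\hat\vomega_i+\vt)$ leaves $s_G$ unchanged but changes $\mathcal{D}_{\mathrm{V}}$. In the over-fitted class, a reference atom with $(\pi,\hat\vomega)=(1,\zero)$ does not remove this: duplicate the reference atom of $G_0$ and double all other weights, and you get $s_G=s_{G_0}$ with $\mathcal{D}_{\mathrm{V}}>0$. You need a normalization preserved by splitting atoms, e.g.\ $\sum_i\pi_i=1$ with $\sum_i\pi_i\hat\vomega_i$ fixed, or a translation-invariant loss.
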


\subsection{Why Merge Experts? The Dendrogram Viewpoint}
\label{subsec_merging_sgmlmoe}

{\bf The Overfitted Picture.}
When $K>K_0$, some Voronoi cell $\sV_k$ may contain multiple fitted atoms. These atoms are ``near duplicates''
from the perspective of the conditional density, and they create slow/flat directions in the likelihood geometry.
The dendrogram mechanism collapses each such cluster by repeated merging, producing a hierarchy of mixing measures
$\widehat G_N^{(K)}=\widehat G_N \ \rightsquigarrow\ \widehat G_N^{(K-1)}\ \rightsquigarrow\ \cdots\ \rightsquigarrow\
\widehat G_N^{(2)}.$
The key facts we exploit are:
(i) each merge reduces redundancy, and
(ii) the Voronoi loss becomes easier (monotonically) along the merge chain.

{\bf Atom Dissimilarity (Merge Criterion).}
For two atoms $\pi_i\delta_{\eta_i}$ and $\pi_j\delta_{\eta_j}$, define
\begin{align}
\label{eq_dissimilarity_SGMLMoE}
&\mathrm{d}\!\big(\pi_i\delta_{\eta_i},\pi_j\delta_{\eta_j}\big)
:=\frac{\pi_i\pi_j}{\pi_i+\pi_j}\big(
\|\hat\vomega_i-\hat\vomega_j\|^2\nonumber\\
&\qquad +\sum_{m=1}^{M-1}\big(\|\hat\vupsilon_{m,i}-\hat\vupsilon_{m,j}\|^2+|\bar\upsilon_{m,i}-\bar\upsilon_{m,j}|^2\big)
\big).
\end{align}

{\bf Merge Operator (Weight-averaged Atom).}
Merging atoms $i$ and $j$ produces a new atom $(\pi_*,\etab_*)$ via parameter barycenters for $\etab_*$ as follows: given any $m\in[M-1]$,
\begin{align}
\label{eq_merge_rule_SGMLMoE}
&\pi_*=\pi_i+\pi_j,~ \bar\omega_*=\log\pi_*, \hat\vupsilon_{m,*} =\frac{\pi_i}{\pi_*}\hat\vupsilon_{m,i}+\frac{\pi_j}{\pi_*}\hat\vupsilon_{m,j},\nn\\
&\bar\upsilon_{m,*} =\frac{\pi_i}{\pi_*}\bar\upsilon_{m,i}+\frac{\pi_j}{\pi_*}\bar\upsilon_{m,j},~\hat\vomega_* =\frac{\pi_i}{\pi_*}\hat\vomega_i+\frac{\pi_j}{\pi_*}\hat\vomega_j.
\end{align}

\begin{theorem}[Voronoi monotonicity along the merge chain]
\label{thm_VorChain_SGMLMoE}
Let $G^{(K)},G^{(K-1)},\ldots,G^{(K_0)}$ be obtained by repeatedly merging the closest pair of atoms according to
\cref{eq_dissimilarity_SGMLMoE}, starting from $G^{(K)}\in\mathcal{O}_K(\sT)$.
For $G^{(K)}$ sufficiently close to $G_0$, $\mathcal{D}_{\mathrm{V}}\!\big(G^{(K)},G_0\big)\ \gtrsim\
\mathcal{D}_{\mathrm{V}}\!\big(G^{(K-1)},G_0\big) \ \gtrsim\ \cdots\ \gtrsim\
\mathcal{D}_{\mathrm{V}}\!\big(G^{(K_0)},G_0\big).$
\end{theorem}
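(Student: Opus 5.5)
It suffices to prove a single-step statement and then chain it: if $G$ is sufficiently close to $G_0$ and has $K'>K_0$ atoms, and $G'$ is obtained by merging the closest pair under \cref{eq_dissimilarity_SGMLMoE} via \cref{eq_merge_rule_SGMLMoE}, then $\mathcal{D}_{\mathrm{V}}(G',G_0)\lesssim \mathcal{D}_{\mathrm{V}}(G,G_0)$. Since there are finitely many steps (at most $K-K_0$), the constants multiply into a single constant depending only on $K$. We also need $G'$ to remain close to $G_0$, so that the Voronoi matching is still defined. This follows because a merge moves atoms by at most the cell diameter and preserves the cell weight sums. Iterating then gives the chain in \cref{thm_VorChain_SGMLMoE}.

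\textbf{Step 1: the closest pair lies in a common cell.} When $G$ is close to $G_0$ with $K'>K_0$, the pigeonhole principle gives some cell $\sV_k$ with $|\sV_k|\ge2$. For $i,j$ in that cell, the triangle inequality gives
\[
\mathrm{d}(\pi_i\delta_{\eta_i},\pi_j\delta_{\eta_j})\le \min(\pi_i,\pi_j)\cdot 2\big(\|\eta_i-\eta_k^0\|^2+\|\eta_j-\eta_k^0\|^2\big),
\]
where the norm is the one appearing in \cref{eq_dissimilarity_SGMLMoE}. This quantity is small, of order $\mathcal{D}_{\mathrm{V}}$. For $i\in\sV_k$ and $j\in\sV_{k'}$ with $k\ne k'$, the parameters are separated by roughly $\min_{k\neq k'}\|\eta_k^0-\eta_{k'}^0\|>0$. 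The weights $\pi_i$ are bounded below by a constant unless they are themselves tiny, and a tiny-weight atom contributes a dissimilarity of order its weight. I expect this step to be the main obstacle. Atoms with vanishing weight could make a cross-cell pair look "closest," and merging such a pair perturbs the loss only by $O(\pi_i)$. I would handle this by showing that even a cross-cell merge of an atom of weight $\pi_i$ changes $\mathcal{D}_{\mathrm{V}}$ by at most $C\pi_i$. That atom already contributes at least $c\,\pi_i$ to $\mathcal{D}_{\mathrm{V}}$ through the weight-discrepancy term or its own parameter term, unless its own cell is degenerate. The robust version of the claim is therefore $\mathcal{D}_{\mathrm{V}}(G')\le C\,\mathcal{D}_{\mathrm{V}}(G)$ in all cases, with the same-cell case as the main case.

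\textbf{Step 2: a same-cell merge does not increase the loss, up to constants.} Suppose $i,j\in\sV_k$ merge into $(\pi_*,\eta_*)$. The term $\sum_{i\in\sV_k}\pi_i-\pi_k^0$ is unchanged because $\pi_*=\pi_i+\pi_j$, and all other cells are untouched. If the cell remains of size at least $2$, the quadratic contribution of the merged atom is controlled by Jensen's inequality (convexity of $\|\cdot\|^2$) applied to the barycenter:
\[
\pi_*\|\eta_*-\eta_k^0\|^2\le \pi_i\|\eta_i-\eta_k^0\|^2+\pi_j\|\eta_j-\eta_k^0\|^2,
\]
and this holds coordinate-wise for $\hat\vomega$, $\hat\vupsilon_m$ and $\bar\upsilon_m$. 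If the cell drops to size $1$, the merged atom switches to the linear term $\pi_*\|\eta_*-\eta_k^0\|$, which can be larger than a quadratic term. In that case I would compare it with the remaining quantities. The point is that $\eta_*$ lies near $\eta_k^0$ only to the extent that the fitted pair does, and the linear contribution is bounded by the square root of the quadratic ones, which can exceed them. So the comparison needs extra care: I would use the fact that in the size-$1$ regime the relevant quantity is the $\pi$-weighted barycentric deviation $\|\sum_{i}\pi_i(\eta_i-\eta_k^0)\|$. Through the local structure of \cref{thm_VorIneq_SGMLMoE}, this deviation is of the same order as the first-order density perturbation, and that perturbation is already dominated by $\mathcal{D}_{\mathrm{TV}}$. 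This yields the $\gtrsim$ up to a constant, possibly only when $G$ is close enough that these local comparisons hold.
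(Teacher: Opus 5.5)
Your same-cell argument for a merge that leaves the cell with at least two atoms matches the paper's Steps 3--4: the cell mass $\sum_{i\in\sV_k}\pi_i$ is preserved, and Jensen on the barycenter shrinks the weighted squared deviations. You are also right that this is not the whole story. The paper's proof never treats a two-atom cell collapsing to a singleton. Its Step 5 says the singleton terms of $\mathcal{D}_{\mathrm{E}}$ are unchanged, but the merge creates a new singleton whose contribution is linear rather than quadratic.

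Your remedy for that case does not close the gap. You bound the barycentric deviation $\|\sum_{i\in\sV_k}\pi_i(\eta_i-\eta_k^0)\|$ by the first-order density perturbation and then by $\mathcal{D}_{\mathrm{TV}}(s_G,s_{G_0})$. That gives at best $\mathcal{D}_{\mathrm{V}}(G',G_0)\lesssim\mathcal{D}_{\mathrm{TV}}(s_G,s_{G_0})$. To get back to $\mathcal{D}_{\mathrm{V}}(G,G_0)$ you would need $\mathcal{D}_{\mathrm{TV}}\lesssim\mathcal{D}_{\mathrm{V}}$, which is the reverse of \cref{thm_VorIneq_SGMLMoE}. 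Moreover, that theorem does not control the barycentric deviation at all, because this quantity is not a term of $\mathcal{D}_{\mathrm{V}}$; you would need a refined inverse bound.

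In fact the collapse step is false for the loss as defined, so no argument can close it. Take $K=K_0+1$ and let $G$ agree with $G_0$ outside $\sV_1$. Let $\sV_1$ contain two atoms of weight $\pi_1^0/2$, each equal to $\eta_1^0$ except that $\bar\upsilon_{1}$ is shifted by $\delta$ and by $\delta+\delta^2$ respectively.
\begin{itemize}
\item All mass terms vanish and only cell $\sV_1$ contributes, quadratically, so $\mathcal{D}_{\mathrm{V}}(G,G_0)\asymp\delta^2$.
\item This pair is the closest pair: its dissimilarity is $\asymp\delta^4$, while cross-cell pairs stay bounded away from $0$.
\item The merged atom is a singleton at distance $\asymp\delta$ from $\eta_1^0$, so $\mathcal{D}_{\mathrm{V}}(G',G_0)\asymp\delta$.
\end{itemize}
The ratio therefore blows up as $\delta\to0$, even though $G\to G_0$. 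Note that $\mathcal{D}_{\mathrm{TV}}(s_G,s_{G_0})\asymp\delta$ here, consistent with the bound your argument actually yields.

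The missing ingredient is a first-order barycentric term for over-covered cells, for example
$\sum_{m}\bigl(\|\sum_{i\in\sV_k}\pi_i\Delta\hat\vomega_{ik}\|+|\sum_{i\in\sV_k}\pi_i\Delta\bar\upsilon_{ikm}|+\|\sum_{i\in\sV_k}\pi_i\Delta\hat\vupsilon_{ikm}\|\bigr)$.
This term is exactly invariant under barycentric merging, since $\pi_\ast\Delta_\ast=\pi_i\Delta_i+\pi_j\Delta_j$. It also reduces to the singleton term after a collapse. With it in the loss, your Jensen step yields the chain. Alternatively, state the conclusion as $\mathcal{D}_{\mathrm{V}}(G^{(\kappa)},G_0)\lesssim\mathcal{D}_{\mathrm{TV}}(s_{G^{(K)}},s_{G_0})$ and prove an inverse bound that includes these barycentric terms.

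Your Step 1 fallback has the same defect and one more. A tiny atom lying near its own true atom in an over-covered cell contributes only $\pi_i\|\Delta_i\|^2$, not $c\,\pi_i$. A cross-cell merge can also collapse a two-atom cell. So the claimed $C\pi_i$ perturbation bound is not established either.
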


\subsection{Finite-sample Rates along the Dendrogram}
\label{subsec_rates_heights_sgmlmoe}

{\bf Heights (structural signal).}
At level $\kappa$, define the dendrogram height
$h_N^{(\kappa)}:=\min_{i\neq j\ \text{atoms of }\widehat G_N^{(\kappa)}}\ 
\mathrm{d}\!\big(\pi_i\delta_{\eta_i},\pi_j\delta_{\eta_j}\big),$ $\kappa\in[K],$
and let $h_0^{(\kappa')}$ be the analogous height on the true chain $G_0^{(\kappa')}$, $\kappa'\in[K_0]$.

{\bf Density Rate for the MLE.}
We use the known near-parametric bound (up to logs $\widetilde{\cO}\!\big((\log N/N)^{1/2}\big)$) for the MLE in total variation as established for the multinomial
classification regime in \cite{nguyen_general_2024}:
\begin{fact}[Density convergence rate]\label{lem_JointDensity_SGMLMoE}
There exist universal constants $C_1,C_2>0$ such that $\PP\!(\mathcal{D}_{\mathrm{TV}}(s_{\widehat{G}_N},s_{G_0})>C_1\sqrt{\log N/N})\lesssim N^{-C_2}.$
\end{fact}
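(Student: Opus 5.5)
The plan is the standard route for MLE density rates in conditional mixture models. We combine a uniform entropy bound for the class of conditional probability mass functions $\{s_G : G\in\mathcal{O}_K(\sT)\}$ with an empirical-process concentration argument in the style of van de Geer (as used in \cite{nguyen_general_2024} and \cite{ho_convergence_2022}). This controls the Hellinger distance $h(s_{\widehat G_N},s_{G_0})$ averaged over $\rvx$. Since total variation is dominated by Hellinger distance pointwise in $\vx$, and expectation over $\rvx$ preserves the inequality by Jensen, a Hellinger rate transfers directly to the integrated $\mathcal{D}_{\mathrm{TV}}$ in the statement. Throughout we assume the standing conditions implicit in \cite{nguyen_general_2024}: the parameter set $\sT$ is compact and $\sX$ is compact. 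Under these conditions $\widehat{\sX}$ is compact and all gating and expert scores are uniformly bounded.

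\textbf{Step 1 (Positivity and Lipschitz structure).} On the compact parameter set, every softmax gate probability and every multinomial-logistic class probability is bounded below by some $c>0$. Consequently $s_G(m\mid\vx)\ge c$ for all $m\in[M]$, $\vx\in\sX$ and $G$. Moreover, the map $(\pi_i,\eta_i)_{i\le K}\mapsto s_G(m\mid\vx)$ is Lipschitz uniformly in $(m,\vx)$, because softmax is $1$-Lipschitz in its logits and the logits are linear in the parameters with bounded features $\hat\vx$. Since $\pi_i=\exp(\bar\omega_i)$ on a compact range, weights can be treated as bounded parameters.

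\textbf{Step 2 (Bracketing entropy).} Cover the finite-dimensional parameter set, of dimension $d=K(1+PD+(M-1)(1+PD))$, by an $\varepsilon$-grid in sup-norm. By Step 1, this yields sup-norm $\varepsilon$-brackets for the density class. The lower bound $c$ then converts sup-norm brackets into Hellinger brackets, giving $\log N_{[\,]}(\varepsilon,\mathcal{P}_K,h)\lesssim d\log(1/\varepsilon)$.

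\textbf{Step 3 (Concentration).} Apply the standard MLE theorem (van de Geer, Thm.~7.4) to the class $\bar{\mathcal{P}}^{1/2}_K(\varepsilon)$ of mixtures $(s_G+s_{G_0})/2$. The bracketing integral is $\mathcal{J}(\delta)\lesssim \delta\sqrt{\log(1/\delta)}$, and the condition $\mathcal{J}(\delta)\le\sqrt N\delta^2$ is met at $\delta_N\asymp\sqrt{\log N/N}$. This gives $\PP\big(h(s_{\widehat G_N},s_{G_0})>C\delta\big)\le C\exp(-N\delta^2/C^2)$ for all $\delta\ge\delta_N$. Choosing $\delta=C_1\sqrt{\log N/N}$ makes the tail $\exp(-c'\log N)=N^{-C_2}$. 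The relation $\mathcal{D}_{\mathrm{TV}}\le\sqrt2\,h$ then completes the argument. The conditioning on $\rvx$ is handled as usual, by treating $(\rvx,\ry)$ as joint data whose $\rvx$-marginal is free of $G$, so that likelihood ratios reduce to conditional ones.

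\textbf{Main obstacle.} The delicate point is Step 2's uniform lower bound and the Lipschitz control in the presence of the weight parameterization $\pi_i=\exp(\bar\omega_i)$, where atoms may have vanishing weight in $\mathcal{O}_K(\sT)$. I will handle this with the convex-combination trick. Working with $(s_G+s_{G_0})/2$ gives a density bounded below by $s_{G_0}/2\ge c/2$ regardless of $G$. Only the entropy of the class in sup-norm is then needed, and this follows from Lipschitz continuity in the logits, uniformly even as some $\pi_i\to0$, since each gate probability lies in $[0,1]$ and depends smoothly on $(\log\pi_i,\hat\vomega_i)$ on a compact set of logits.
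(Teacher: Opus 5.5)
Your proposal is correct and is essentially the argument behind the result of \cite{nguyen_general_2024}, which the paper imports here without re-proving. That argument has three steps: a $\log(1/\varepsilon)$ bracketing-entropy bound for the lifted-feature class, van de Geer's Theorem~7.4 applied to the averaged class $(s_G+s_{G_0})/2$ with joint data $(\rvx,\ry)$, and then $\mathcal{D}_{\mathrm{TV}}\le\sqrt{2}\,h$ with Jensen over $\rvx$. As a minor tidy-up, the bound $s_G(m\mid\vx)\ge c$ already follows from the expert probabilities alone because the gates sum to one, and under the standing compactness of $\sT$ (which contains $\bar\omega$) the weights stay bounded away from zero, so the $\log\pi_i\to-\infty$ scenario in your last paragraph does not arise; if it did, those logits would not lie in a compact set, and you would need to truncate very negative logits (or reparametrize by normalized $\pi$) to keep the covering number polynomial.
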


{\bf From Density Rates to Voronoi and Height Rates.}
Combining \cref{thm_VorIneq_SGMLMoE} with \cref{lem_JointDensity_SGMLMoE} yields Voronoi control at level $K$,
and the monotonicity in \cref{thm_VorChain_SGMLMoE} propagates it down the path:
\begin{theorem}[Voronoi and height rates along the path]\label{thm_ConvergenceRate_height_Voronoi}
With probability at least $1-c_1N^{-c_2}$, for each $\kappa\in\{K_0+1,\ldots,K\}$ and $\kappa'\in[K_0]$,
$\max\{\mathcal{D}_{\mathrm{V}}(\widehat{G}_N^{(\kappa)},G_0),h_N^{(\kappa)},|h_N^{(\kappa')}-h_0^{(\kappa')}|\}\ \lesssim\ \sqrt{\log N/N}.$
\end{theorem}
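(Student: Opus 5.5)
We work on the high-probability event $\mathcal{E}_N:=\{\mathcal{D}_{\mathrm{TV}}(s_{\widehat G_N},s_{G_0})\le C_1\sqrt{\log N/N}\}$. By \cref{lem_JointDensity_SGMLMoE} it has probability at least $1-c_1N^{-c_2}$, and every claim below is deterministic on $\mathcal{E}_N$.

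\textbf{Step 1: Voronoi bound at level $K$.} On $\mathcal{E}_N$ the TV discrepancy tends to zero. Since $\widehat G_N^{(K)}=\widehat G_N$, \cref{thm_VorIneq_SGMLMoE} applies once $N$ is large enough to enter its local regime; small $N$ is absorbed into constants. This gives $\mathcal{D}_{\mathrm{V}}(\widehat G_N^{(K)},G_0)\lesssim\sqrt{\log N/N}$. In particular, $\widehat G_N$ is eventually in the neighbourhood required by \cref{thm_VorChain_SGMLMoE}. That theorem then yields $\mathcal{D}_{\mathrm{V}}(\widehat G_N^{(\kappa)},G_0)\lesssim\sqrt{\log N/N}$ for every $\kappa\in\{K_0,\ldots,K\}$, because the constants along the chain are finitely many ($K$ fixed).

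\textbf{Step 2: heights for over-specified levels $\kappa>K_0$.} Since $\kappa>K_0$, pigeonhole gives some cell $\sV_k$ of $\widehat G_N^{(\kappa)}$ containing two atoms $i,j$. Note $\frac{\pi_i\pi_j}{\pi_i+\pi_j}\le\min(\pi_i,\pi_j)$, and by the triangle inequality $\|a_i-a_j\|^2\le 2\|a_i-a_k^0\|^2+2\|a_j-a_k^0\|^2$ applied to each parameter block. Hence
\[
\mathrm{d}(\pi_i\delta_{\eta_i},\pi_j\delta_{\eta_j})\le 2\sum_{l\in\{i,j\}}\pi_l\Big(\|\Delta\hat\vomega_{lk}\|^2+\sum_{m}\big(|\Delta\bar\upsilon_{lkm}|^2+\|\Delta\hat\vupsilon_{lkm}\|^2\big)\Big).
\]
Because $|\sV_k|>1$, the right-hand side is bounded by a constant multiple of the squared part of $\mathcal{D}_{\mathrm{V}}$. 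The sum over $m$ matches the sum appearing in \cref{eq_VoronoiLoss_SGMLMoE}, up to a factor absorbed into constants. Therefore $h_N^{(\kappa)}\le \mathrm{d}(\cdot,\cdot)\lesssim\mathcal{D}_{\mathrm{V}}(\widehat G_N^{(\kappa)},G_0)\lesssim\sqrt{\log N/N}$.

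\textbf{Step 3: heights for $\kappa'\le K_0$ (main obstacle).} The claim concerns $\widehat G_N^{(\kappa')}$ versus $G_0^{(\kappa')}$.

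First, consider $\widehat G_N^{(K_0)}$. Once $\mathcal{D}_{\mathrm{V}}\to0$, every Voronoi cell must be nonempty, since otherwise the weight term $|\sum_{i\in\sV_k}\pi_i-\pi_k^0|=\pi_k^0$ stays bounded below. So with $K_0$ atoms each cell is a singleton. The linear part $\mathcal{D}_{\mathrm{E}}$ then controls $|\pi_i-\pi_{k}^0|$ and the parameter errors at rate $\sqrt{\log N/N}$. Thus $\widehat G_N^{(K_0)}$ is atom-by-atom $\sqrt{\log N/N}$-close to $G_0$.

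Next, the map $(\pi_i,\eta_i,\pi_j,\eta_j)\mapsto \mathrm{d}$ is locally Lipschitz on the compact region where weights are bounded below and parameters are bounded. The merge operator \cref{eq_merge_rule_SGMLMoE} is also locally Lipschitz there. Since $G_0$ has distinct atoms, the true pairwise dissimilarities are finitely many. I would assume a generic separation condition: the minimizing pair at each true level is unique, or ties are resolved consistently. Under it, for $N$ large the closest pair in $\widehat G_N^{(\kappa')}$ corresponds to the closest pair in $G_0^{(\kappa')}$. Induction downward from $K_0$ then keeps the merged atoms within $O(\sqrt{\log N/N})$ of the true merged chain, with Lipschitz constants multiplying only finitely many times. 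Consequently the minima differ by $O(\sqrt{\log N/N})$, giving $|h_N^{(\kappa')}-h_0^{(\kappa')}|\lesssim\sqrt{\log N/N}$.

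The delicate points are exactly here: the tie/separation issue between candidate pairs at the true levels, and keeping the Lipschitz constants uniform. If ties occur, I would instead argue that the minimum over pairs is itself Lipschitz in the atom configuration under a matched ordering. That argument avoids needing identical merge decisions for the height comparison, but the induction for later levels then needs the merged measures to agree, which is where a separation assumption seems unavoidable.
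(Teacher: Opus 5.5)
Your proposal is correct relative to the paper's stated results and follows essentially the same route as the paper's proof. For the Voronoi rates you combine the density rate with \cref{thm_VorIneq_SGMLMoE} and \cref{thm_VorChain_SGMLMoE}. For the over-fitted heights you use pigeonhole together with $\frac{\pi_i\pi_j}{\pi_i+\pi_j}\le\min\{\pi_i,\pi_j\}$ and the triangle inequality. For the under-fitted heights you use atomwise closeness at level $K_0$, Lipschitz stability of $\mathrm{d}$ and of the merge map, and the same uniqueness-of-the-minimizing-pair assumption that the paper adopts ``for simplicity''.

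One caveat you share with the paper: your Steps~1 and~3 apply \cref{thm_VorChain_SGMLMoE} across merges that turn a two-atom cell into a singleton, in particular to reach level $K_0$. At such a merge the squared within-cell terms of $\mathcal{D}_{\mathrm{V}}$ are replaced by unsquared ones, and the Jensen/barycenter argument in the chain theorem's proof does not control these. Your argument is therefore exactly as strong as that link of the chain theorem.
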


\begin{table*}
\caption{Summary of (log-adjusted) density and parameter rates for SGMLMoE with standard softmax gating. 
}

\label{tab_rates_sgmlmoe}
\centering
\small
\setlength{\tabcolsep}{6pt}
\renewcommand{\arraystretch}{1.15}
\begin{tabular}{|c|c|c|c|}
\hline
\textbf{Setting} & \textbf{Density} &
\textbf{Exact-specified Parameters ($|\sV_k|=1$)} &
\textbf{Over-specified Parameters ($|\sV_k|>1$)} \\
\hline
Exact-fit
& $\widetilde{\cO}\!\big((\log N/N)^{1/2}\big)$
& $\widetilde{\cO}\!\big((\log N/N)^{1/2}\big)$
& --- \\
\hline
Over-fit
& $\widetilde{\cO}\!\big((\log N/N)^{1/2}\big)$
& $\widetilde{\cO}\!\big((\log N/N)^{1/2}\big)$
& $\widetilde{\cO}\!\big((\log N/N)^{1/4}\big)$ \\
\hline
\textbf{Merged}
& $\widetilde{\cO}\!\big((\log N/N)^{1/2}\big)$
& $\widetilde{\cO}\!\big((\log N/N)^{1/2}\big)$
& $\widetilde{\cO}\!\big((\log N/N)^{1/2}\big)$ \\
\hline
\end{tabular}
\end{table*}

\subsection{Likelihood along the Path and Sweep-free Selection}
\label{subsec_dsc_sgmlmoe}

{\bf Empirical and Population Log-likelihoods.}
For $G\in\mathcal{O}_K(\sT)$, define $\mathcal{L}(G):=\E_{(\rvx,\ry)\sim G_0}\big[\log s_G(\ry\mid \rvx)\big],$ and $\bar{\ell}_N(G):=\frac{1}{N}\sum_{n=1}^N \log s_G(\evy_n\mid \vx_n).$
Write $\bar{\ell}_N^{(\kappa)}:=\bar{\ell}_N(\widehat G_N^{(\kappa)})$.

{\bf Condition K (local lower bound).}
There exist constants $c_\alpha,c_\beta>0$ such that for all sufficiently small $\epsilon>0$ and all
$\vtheta,\vtheta^0\in\sT$ with $\|\vtheta-\vtheta^0\|\le \epsilon$, for all $(\rvx,\ry)\in\sX\times\sY$,
$\log s_{\vtheta}(\ry\mid \rvx)
\ge
(1+c_\beta\epsilon)\,\log s_{\vtheta^0}(\ry\mid \rvx)
-c_\alpha\epsilon.$

\begin{theorem}[Likelihood control along the dendrogram]\label{thm_ModelSelection_SGMLMoE}
Assume \cref{thm_ConvergenceRate_height_Voronoi} and Condition~K hold.
Then, for any $\kappa\in\{K_0+1,\ldots,K\}$,
$\bar{\ell}_N\big(\widehat{G}_N^{(\kappa)}\big)-\mathcal{L}(G_0)\ \lesssim\ (\log N/N)^{1/4}.$
Moreover, for any $\kappa'\in[K_0]$,
$\bar{\ell}_N(\widehat{G}_N^{(\kappa')})\to \mathcal{L}(G_0^{(\kappa')})$ in $\PP_{G_0}$-probability as $N\to\infty$.
\end{theorem}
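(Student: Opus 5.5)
The plan is to compare the empirical log-likelihood of each merged fit with the log-likelihood of a suitably matched population object, using Condition~K to turn parameter closeness into likelihood closeness. The Voronoi/height rates of \cref{thm_ConvergenceRate_height_Voronoi} supply the parameter closeness. Throughout we work on the high-probability event $\mathcal{E}_N$ of \cref{thm_ConvergenceRate_height_Voronoi}, of probability at least $1-c_1N^{-c_2}$.

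\textbf{Over-fitted levels $\kappa\in\{K_0+1,\ldots,K\}$.} On $\mathcal{E}_N$ we have $\mathcal{D}_{\mathrm{V}}(\widehat G_N^{(\kappa)},G_0)\lesssim\sqrt{\log N/N}$. By the definition \cref{eq_VoronoiLoss_SGMLMoE}, every atom in a cell with $|\sV_k|>1$ satisfies $\pi_i\|\eta_i-\eta_{k}^0\|^2\lesssim\sqrt{\log N/N}$. Atoms in singleton cells satisfy $\pi_i\|\eta_i-\eta_k^0\|\lesssim\sqrt{\log N/N}$, and the cell masses are within $\sqrt{\log N/N}$ of $\pi_k^0$. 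Consequently every atom of non-negligible weight lies within $\epsilon_N\asymp(\log N/N)^{1/4}$ of its matched true atom. Atoms of vanishing mass only perturb the gate by $O(\epsilon_N)$ uniformly, since $\widehat{\sX}$ is compact and the parameter space is taken bounded.

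We then write $\widehat G_N^{(\kappa)}$ and $G_0$ as parameter vectors in a common over-parametrized space. Each true atom is split into $|\sV_k|$ copies, with weights chosen to match the fitted weights inside the cell up to $O(\sqrt{\log N/N})$. This is legitimate because splitting an atom does not change $s_{G_0}$. Condition~K, applied with $\epsilon=\epsilon_N$, then gives pointwise
\[
\log s_{\widehat G_N^{(\kappa)}}(\ry\mid\rvx)\ \ge\ (1+c_\beta\epsilon_N)\log s_{G_0}(\ry\mid\rvx)-c_\alpha\epsilon_N,
\]
and also the symmetric inequality with the roles of the two mixing measures swapped. Averaging the swapped inequality over the sample yields
\[
\bar\ell_N(\widehat G_N^{(\kappa)})\le(1+c_\beta\epsilon_N)^{-1}\big(\bar\ell_N(G_0)+c_\alpha\epsilon_N\big).
\]
Since $\log s_{G_0}$ is bounded on the compact $\sX\times[M]$, Hoeffding's inequality gives $|\bar\ell_N(G_0)-\mathcal{L}(G_0)|\lesssim\sqrt{\log N/N}$ with probability $1-N^{-c}$. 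Combining these, $\bar\ell_N(\widehat G_N^{(\kappa)})-\mathcal{L}(G_0)\lesssim\epsilon_N+\sqrt{\log N/N}\lesssim(\log N/N)^{1/4}$.

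Alternatively, one can simply note that $\widehat G_N^{(K)}$ is the MLE, so $\bar\ell_N(\widehat G_N^{(K)})\ge\bar\ell_N(G_0)$. This gives only a lower bound, which is why the Condition~K route is needed for the upper bound.

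\textbf{Under-fitted levels $\kappa'\in[K_0]$.} The height rate $|h_N^{(\kappa')}-h_0^{(\kappa')}|\lesssim\sqrt{\log N/N}$, together with the $\kappa=K_0+1$ Voronoi bound, implies that the merge order of $\widehat G_N$ below level $K_0+1$ coincides with high probability with that of the true chain. The argument goes as follows. At level $K_0$ the remaining merges collapse each Voronoi cell into a single atom, because cross-cell dissimilarities are bounded below by $\min h_0>0$ and within-cell ones vanish. From that point on, the merge rule \cref{eq_merge_rule_SGMLMoE} is a continuous map of the atoms. Provided the true heights are distinct, so that no ties occur, this continuity shows $\widehat G_N^{(\kappa')}\to G_0^{(\kappa')}$ in parameters in probability.

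Continuity of $G\mapsto\log s_G(y\mid x)$, uniformly over compact $\sX$ and with bounded parameters, then gives $\sup_{x,y}|\log s_{\widehat G_N^{(\kappa')}}-\log s_{G_0^{(\kappa')}}|\to0$. A law of large numbers, or again Hoeffding applied at the fixed $G_0^{(\kappa')}$, yields $\bar\ell_N(G_0^{(\kappa')})\to\mathcal{L}(G_0^{(\kappa')})$. The claim follows by the triangle inequality.

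\textbf{Main obstacle.} The hardest step is converting the Voronoi bound into uniform parameter closeness of order $\epsilon_N$ in a way compatible with Condition~K, which is stated for vectors $\vtheta,\vtheta^0$ of the same dimension. Two issues arise. First, atoms with tiny weights can sit far from their true atoms. Second, the gating weights $\pi_i=\exp(\bar\omega_i)$ enter through $\log\pi_i$, so a weight that is small in absolute terms can be far off in $\bar\omega$. I would first try to absorb small-mass atoms by a direct bound: their total gate probability is $O(\sqrt{\log N/N})$ uniformly in $\vx$ by compactness, so they contribute at most $O(\sqrt{\log N/N})$ to $\log s$. This uses that $s_{G_0}$ is bounded below by a positive constant. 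Condition~K would then be applied only to the remaining atoms. A secondary concern is measurability of the random $\epsilon_N$, which is handled by working on $\mathcal{E}_N$.
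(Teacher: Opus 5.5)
Your over-fitted step does not deliver the rate $(\log N/N)^{1/4}$. Put $\delta_N:=\sqrt{\log N/N}$ and $\epsilon_N:=\delta_N^{1/2}$, and measure $\|\eta_i-\eta_k^0\|$ in the coordinates $(\hat\vomega,\{\bar\upsilon_m,\hat\vupsilon_m\}_m)$ that enter $\mathcal{D}_{\mathrm{V}}$.

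The Voronoi bound controls these errors only in a mass-weighted way. In an over-covered cell it gives $\|\eta_i-\eta_k^0\|\lesssim(\delta_N/\pi_i)^{1/2}$, which is $\lesssim\epsilon_N$ only when $\pi_i$ is bounded below by a constant. On the other hand, deleting an atom perturbs $\log s$ by $O(\pi_i)$, so only atoms with $\pi_i\lesssim\epsilon_N$ can be discarded at the target rate. Atoms with $\epsilon_N\ll\pi_i\ll 1$ fall in neither class, and nothing excludes them: an atom with $\pi_i\asymp\delta_N^{1/3}$ at distance $\asymp\delta_N^{1/3}$ from $\eta_k^0$ is compatible with $\mathcal{D}_{\mathrm{V}}\lesssim\delta_N$.

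Condition~K is unweighted---a single $\epsilon$ must bound the whole parameter vector---so it cannot exploit the small mass of such an atom. With a threshold $\tau$, deleting the lighter atoms costs $O(\tau)$, while Condition~K on the rest needs $\epsilon\asymp(\delta_N/\tau)^{1/2}$. The best balance, $\tau=\delta_N^{1/3}$, yields only $(\log N/N)^{1/6}$. Your ``main obstacle'' fix assumes the discarded atoms have total mass $O(\sqrt{\log N/N})$. That cannot be reconciled with your earlier claim that every retained atom lies within $\epsilon_N$ of its true atom.

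The missing ingredient is a mass-weighted perturbation bound; with it, Condition~K, the splitting of $G_0$, and the thresholding are all unnecessary. Write $s_G(m\mid\vx)=\sum_i\pi_i f_{\vx,m}(\eta_i)\big/\sum_i\pi_i f_{\vx}(\eta_i)$, with $f_{\vx}(\eta)=\exp(\hat\vomega^\top\hat\vx)$ and $f_{\vx,m}(\eta)=f_{\vx}(\eta)\,\ex(m\mid\vx;\eta)$. On the bounded parameter set and compact $\widehat{\sX}$ these are bounded, $f_{\vx}$ is bounded away from zero, and both are Lipschitz in $\eta$ uniformly in $(\vx,m)$. Hence $\sup_{\vx,m}|s_{\widehat G_N^{(\kappa)}}-s_{G_0}|\lesssim\sum_k|\sum_{i\in\sV_k}\pi_i-\pi_k^0|+\sum_k\sum_{i\in\sV_k}\pi_i\|\eta_i-\eta_k^0\|$. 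Cauchy--Schwarz on each over-covered cell, $\sum_{i\in\sV_k}\pi_i\|\eta_i-\eta_k^0\|\le(\sum_{i\in\sV_k}\pi_i)^{1/2}(\sum_{i\in\sV_k}\pi_i\|\eta_i-\eta_k^0\|^2)^{1/2}$, then bounds the right-hand side by $\delta_N^{1/2}=(\log N/N)^{1/4}$. Since $s_{G_0}$ is bounded below, the same bound holds for $\sup|\log s_{\widehat G_N^{(\kappa)}}-\log s_{G_0}|$, and your Hoeffding step finishes.

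So repaired, your route is genuinely different from, and more elementary than, the paper's. The paper combines the concavity trick $\tfrac12\log(p/q)\le\log\frac{p+q}{2q}$, the chain $h\lesssim W_2\lesssim\mathcal{D}_{\mathrm{V}}^{1/2}$, and van de Geer's uniform bound over a Hellinger ball. In yours the only random quantity is the empirical mean of one fixed bounded function.

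Your under-fitted argument (merge-order stability plus continuity) also differs from the paper's. The paper applies a uniform law of large numbers over $\mathcal{O}_\kappa(\sT)$ and treats $\widehat G_N^{(\kappa)}$ as a maximizer of $\bar\ell_N$ over $\kappa$-atom measures. Yours is the one consistent with $\widehat G_N^{(\kappa')}$ and $G_0^{(\kappa')}$ being merge chains. One correction there: cross-cell dissimilarities are not bounded below, because $\mathrm{d}$ carries the factor $\pi_i\pi_j/(\pi_i+\pi_j)\le\min\{\pi_i,\pi_j\}$. What does hold is that a cross-cell merge at a level $\kappa>K_0$ must beat $h_N^{(\kappa)}\lesssim\delta_N$. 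It therefore involves an atom of mass $O(\delta_N)$ and transports only $O(\delta_N)$ mass across cells, which leaves your conclusion intact.
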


{\bf DSC (height + likelihood).}
Given $1\ll \omega_N \ll (N/\log N)^{1/4}$, define the dendrogram selection criterion
$\mathrm{DSC}_N^{(\kappa)}:=-\Big(h_N^{(\kappa)}+\omega_N\,\bar{\ell}_N^{(\kappa)}\Big)$.
A practical choice is $\omega_N=\log N$. Select
$\widehat{K}_N:=\arg\min_{\kappa\in\{2,\ldots,K\}}\mathrm{DSC}_N^{(\kappa)}.$

\begin{theorem}[Consistency of DSC for SGMLMoE]\label{thm:ConvergeMerge_SGMLMoE}
Assume $K_0\ge 2$ and the conditions of \cref{thm_ModelSelection_SGMLMoE} hold.
Then $\widehat{K}_N\to K_0$ in $\PP_{G_0}$-probability as $N\to\infty$.
\end{theorem}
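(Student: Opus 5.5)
We have to show that, with probability tending to one, $\mathrm{DSC}_N^{(K_0)}<\mathrm{DSC}_N^{(\kappa)}$ for every $\kappa\in\{2,\ldots,K\}\setminus\{K_0\}$. Since $K$ is fixed and there are finitely many competitors, it suffices to prove this for each $\kappa$ separately and then take a union bound. Two probabilistic ingredients hold on an event of probability at least $1-c_1N^{-c_2}-o(1)$. The first is \cref{thm_ConvergenceRate_height_Voronoi}, which controls all heights. The second is \cref{thm_ModelSelection_SGMLMoE}, which controls the average log-likelihoods. We also need the law of large numbers for $\bar\ell_N(\widehat G_N^{(K_0)})$: by the second part of \cref{thm_ModelSelection_SGMLMoE} with $\kappa'=K_0$, it converges to $\mathcal{L}(G_0^{(K_0)})=\mathcal{L}(G_0)$.

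\textbf{Over-fitted levels $\kappa>K_0$.} Here $h_N^{(\kappa)}\lesssim\sqrt{\log N/N}\to0$. In contrast, $h_N^{(K_0)}\to h_0^{(K_0)}>0$, because the true atoms of $G_0$ are distinct with positive weights, so their minimal dissimilarity is strictly positive. For the likelihood terms, $\bar\ell_N^{(\kappa)}-\mathcal{L}(G_0)\lesssim(\log N/N)^{1/4}$ by \cref{thm_ModelSelection_SGMLMoE}. We also need a matching lower bound at level $K_0$: $\bar\ell_N^{(K_0)}\ge\mathcal{L}(G_0)-O_P((\log N/N)^{1/4})$. I would obtain this from Condition K applied atomwise, since the merged $\widehat G_N^{(K_0)}$ is within $\epsilon\asymp(\log N/N)^{1/4}$ (or better) of $G_0$ by \cref{thm_ConvergenceRate_height_Voronoi}. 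This is combined with the CLT for $\frac1N\sum\log s_{G_0}$. Then
\[
\mathrm{DSC}_N^{(\kappa)}-\mathrm{DSC}_N^{(K_0)}=h_N^{(K_0)}-h_N^{(\kappa)}+\omega_N\big(\bar\ell_N^{(K_0)}-\bar\ell_N^{(\kappa)}\big)\ge h_0^{(K_0)}-o(1)-\omega_N\,O_P\big((\log N/N)^{1/4}\big),
\]
which is positive with probability tending to one because $\omega_N\ll(N/\log N)^{1/4}$.

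\textbf{Under-fitted levels $\kappa<K_0$.} The heights are bounded in probability, since both $h_N^{(\kappa)}$ and $h_N^{(K_0)}$ converge to finite limits. The likelihood gap diverges. By \cref{thm_ModelSelection_SGMLMoE}, $\bar\ell_N^{(\kappa)}\to\mathcal{L}(G_0^{(\kappa)})$, and $\mathcal{L}(G_0)-\mathcal{L}(G_0^{(\kappa)})=\mathrm{KL}(s_{G_0}\Vert s_{G_0^{(\kappa)}})>0$. This KL divergence is strictly positive by identifiability: $G_0^{(\kappa)}$ has fewer than $K_0$ atoms, so it cannot induce the same conditional pmf as $G_0$ on a set of $\vx$ of positive measure. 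The inverse bound \cref{thm_VorIneq_SGMLMoE}, or direct identifiability of SGMLMoE, gives $\mathcal{D}_{\mathrm{TV}}>0$ and hence KL $>0$. Therefore
\[
\omega_N(\bar\ell_N^{(K_0)}-\bar\ell_N^{(\kappa)})\ge\omega_N(c+o_P(1))\to\infty,
\]
since $\omega_N\gg1$. This dominates the bounded height difference, so $\mathrm{DSC}_N^{(\kappa)}>\mathrm{DSC}_N^{(K_0)}$ with probability tending to one.

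\textbf{Main obstacle.} The delicate step is the over-fitted comparison. \cref{thm_ModelSelection_SGMLMoE} only gives an upper bound on $\bar\ell_N^{(\kappa)}$, so a two-sided rate at level $K_0$ is needed to cancel the common $\mathcal{L}(G_0)$. I would first try to get it from Condition K with $\vtheta^0$ the true parameters: this gives $\bar\ell_N^{(K_0)}\ge(1+c_\beta\epsilon)\bar\ell_N(G_0)-c_\alpha\epsilon$, and $\bar\ell_N(G_0)=\mathcal{L}(G_0)+O_P(N^{-1/2})$. Bounded log-likelihoods on compact $\sX\times\sT$ make $\bar\ell_N(G_0)$ bounded, so the error is $O_P(\epsilon)$. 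Matching atoms of $\widehat G_N^{(K_0)}$ to $G_0$ up to permutation requires the Voronoi rate at level $K_0$. This follows from \cref{thm_VorChain_SGMLMoE} together with the $\kappa=K$ bound, since exact-fit Voronoi loss controls parameter distance linearly.
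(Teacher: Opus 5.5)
Your proposal is correct and follows essentially the same route as the paper. Over-fitted levels lose because $h_N^{(\kappa)}\to 0$ while $h_N^{(K_0)}\to h_0^{(K_0)}>0$, and the likelihood discrepancy of order $O_{\PP}((\log N/N)^{1/4})$ is negligible after multiplying by $\omega_N\ll (N/\log N)^{1/4}$; under-fitted levels lose because the fixed gap $\mathcal{L}(G_0)-\mathcal{L}(G_0^{(\kappa)})>0$ is amplified by $\omega_N\to\infty$. Your lower bound at level $K_0$, obtained from Condition~K via the Voronoi chain, is exactly Case~2 of the paper's proof of \cref{thm_ModelSelection_SGMLMoE}, which the paper then uses implicitly as a two-sided rate. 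You are more explicit than the paper in noting that only an upper bound is needed for $\kappa>K_0$, and in justifying why $h_0^{(K_0)}>0$ and the KL gap are positive.
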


{\bf Interpretation of Why DSC Beats Pure Likelihood Penalties under Overfit.}
Heights detect structural redundancy: small $h_N^{(\kappa)}$ indicates that the $\kappa$-atom model contains a pair of
nearly indistinguishable atoms (often created by over-specification), which standard criteria (AIC/BIC/ICL) do not
directly penalize. DSC combines this structural signal with fit, yielding a sweep-free selector aligned with the Voronoi
geometry.

\section{Numerical Experiment}\label{sec_numerical_experiment}

In this section, we conduct experiments on both synthetic and real-world datasets to systematically evaluate the effectiveness, stability, and practical utility of the proposed pipeline: batch MM training, Voronoi-loss diagnostics, and dendrogram-based selection/merging via DSC. 
On \textit{synthetic datasets}, we first demonstrate that the proposed Batch MM algorithm converges reliably to the global optimum, while classical optimization methods, including EM and popular gradient-based optimizers, often become trapped at suboptimal stationary points, in both well-specified and over-parameterized regimes. 
We then empirically verify that the decay of the Voronoi loss aligns with the finite-sample convergence rates established in \cref{thm_ConvergenceRate_height_Voronoi}, under both exact-fit and over-fit settings. 
Finally, we show that the proposed DSC-based merging strategy consistently outperforms standard model selection criteria (AIC, BIC, and ICL) in recovering the true number of experts, especially in small-sample regimes, while retaining consistency as sample size grows.
On \textit{real-world datasets}, we show that SGMLMoE trained via Batch MM yields superior predictive performance compared to strong baselines such as SVM and Random Forest. Moreover, DSC selects an appropriate number of experts that achieves an attractive trade-off between predictive accuracy and memory efficiency, while the resulting dendrogram provides an interpretable hierarchical summary of the fitted mixing measure via \cref{fig:Tree}.

\begin{figure}
    \centering
    \includegraphics[width=\linewidth]{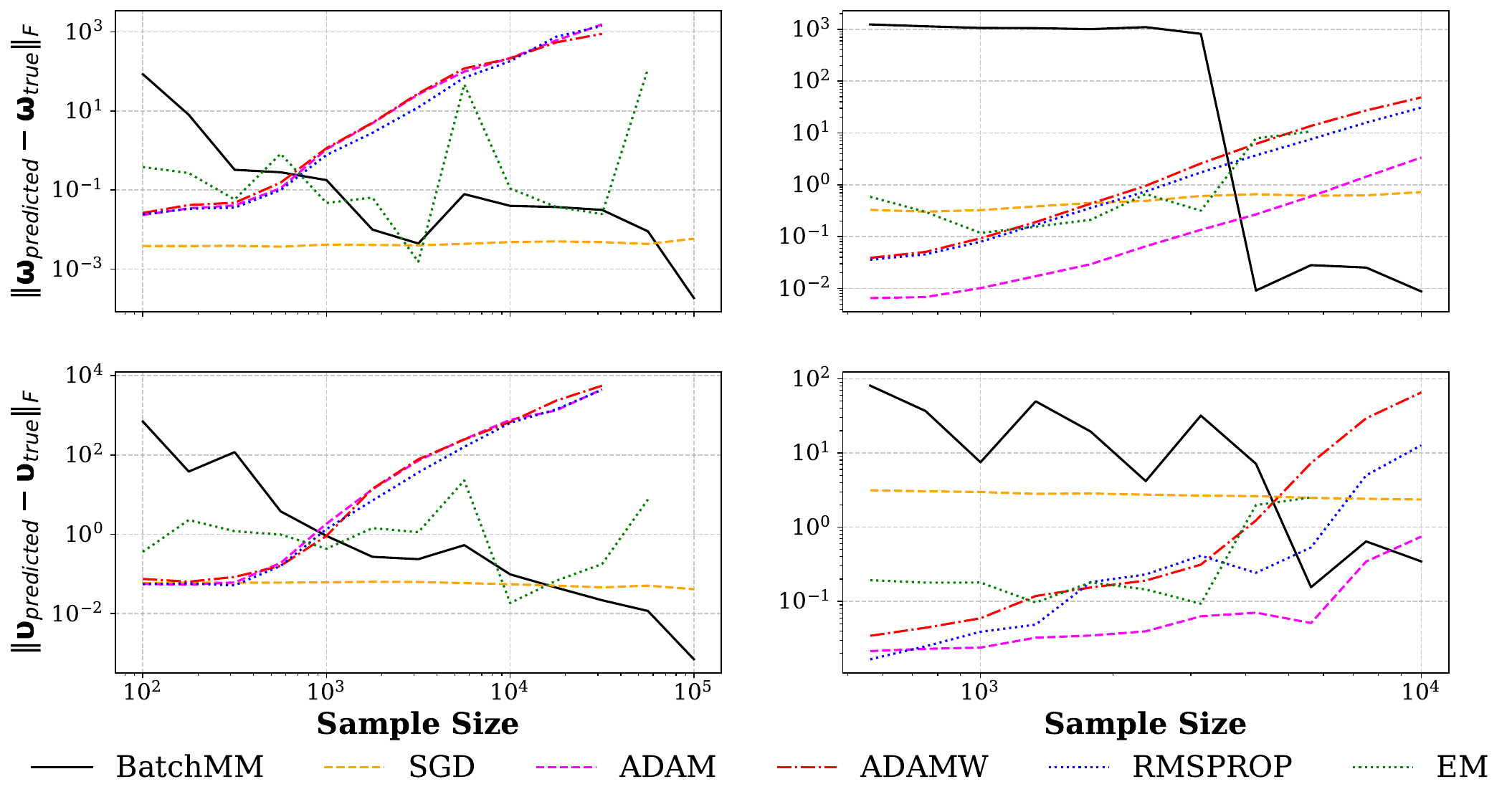}
    \caption{Convergence behavior of the Batch MM algorithm (parameter error versus sample size).}
    \label{fig:BatchMM_Converge}
\end{figure}
\begin{figure*}[t]
    \centering
    \begin{subfigure}[t]{0.48\linewidth}
        \centering
        \includegraphics[width=.82\linewidth]{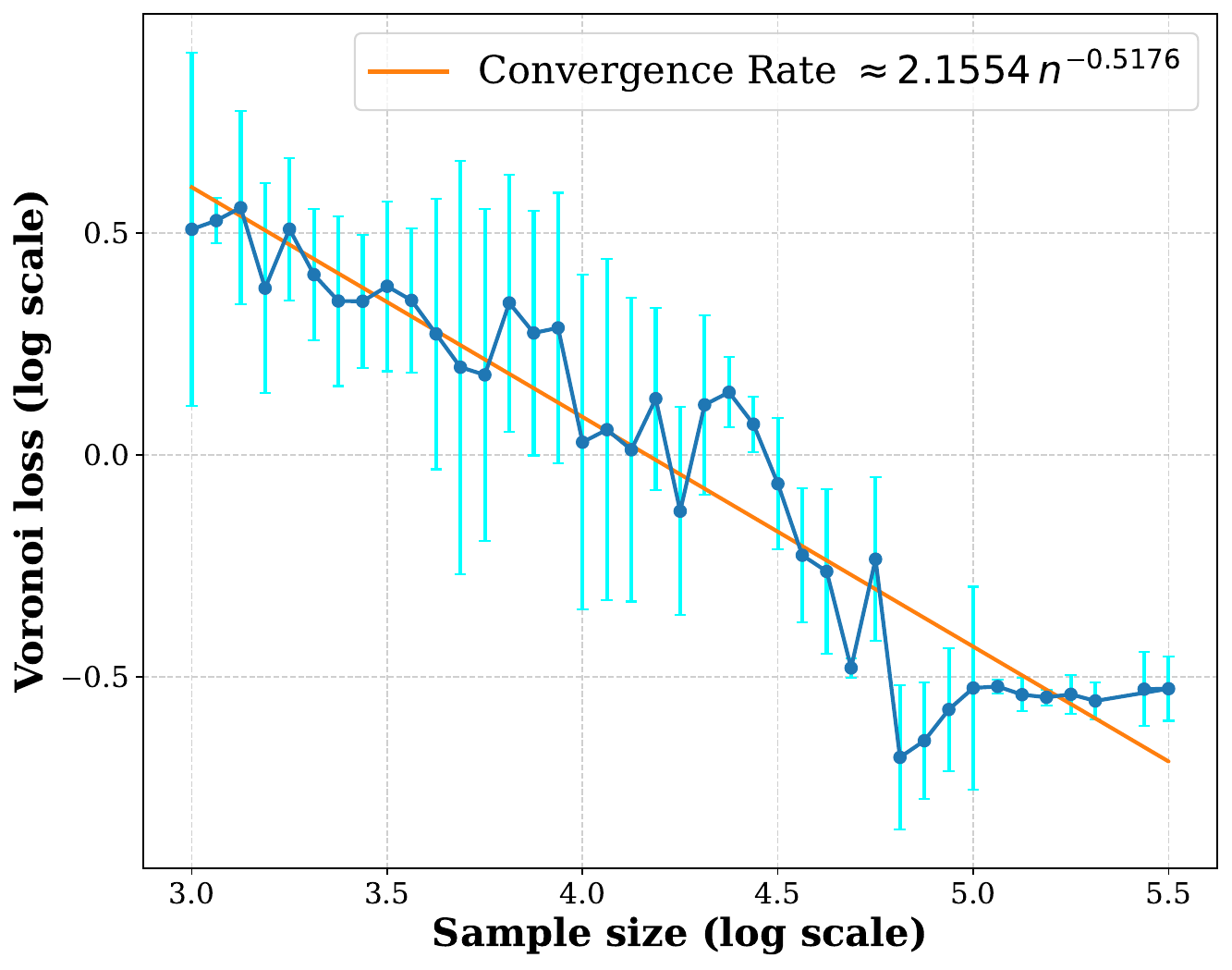}
        \caption{Exact-fit: empirical convergence of the Voronoi loss.}
        \label{fig:VoronoiCov_fit}
    \end{subfigure}
    \begin{subfigure}[t]{0.48\linewidth}
        \centering
        \includegraphics[width=\linewidth]{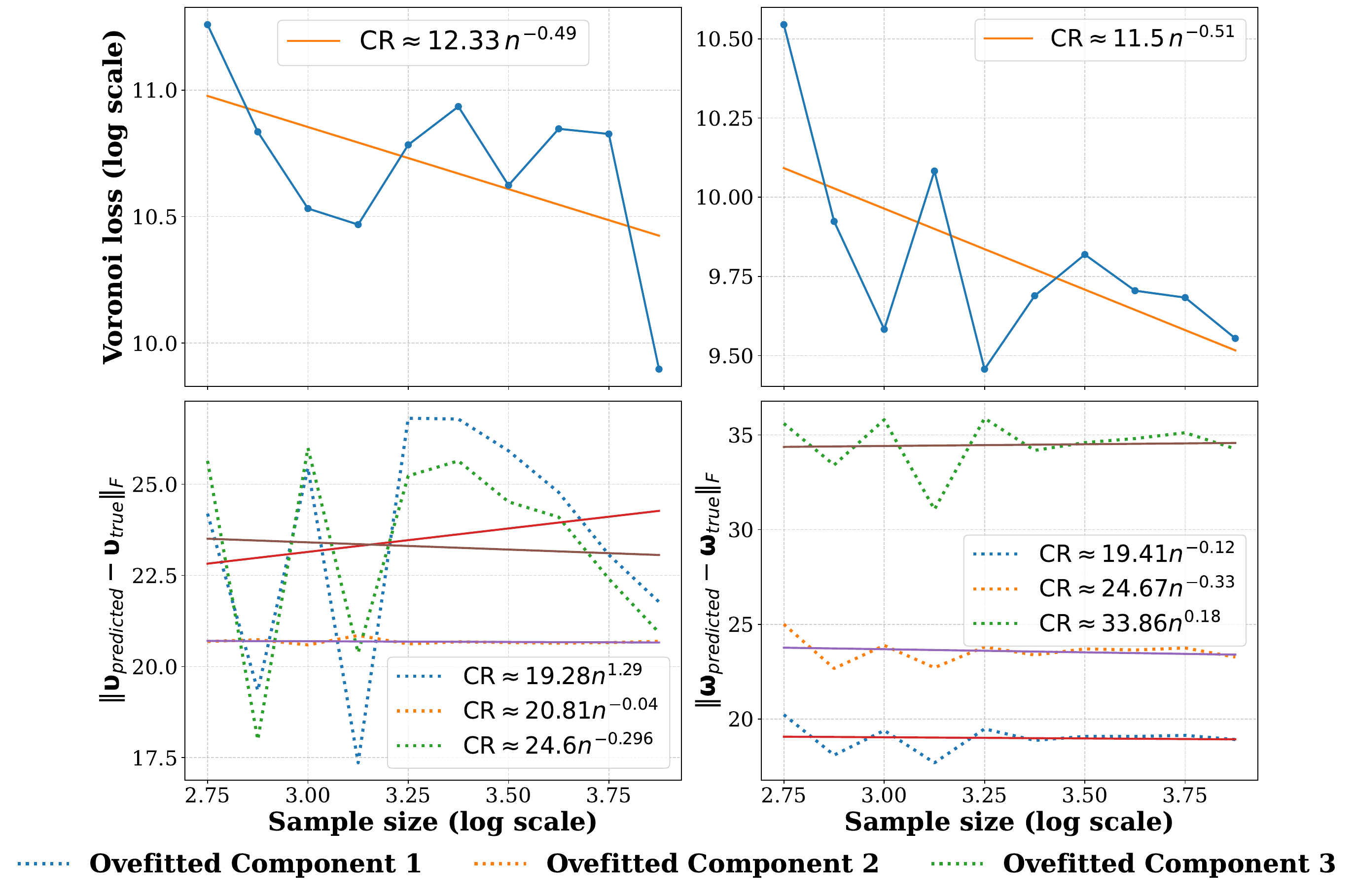}
        \caption{Over-fit: empirical convergence of the Voronoi loss.}
        \label{fig:VoronoiCov_overfit}
    \end{subfigure}
    \caption{Empirical convergence of the Voronoi loss under exact specification and over-specification.}
    \label{fig:VoronoiCov_both}
\end{figure*}

\subsection{Simulation Study}
\textbf{Dataset.}
We consider a controlled synthetic setting with polynomial degree $D = 1$, number of experts $K_0 = 2$, number of categories $M = 2$, and input dimension $P = 1$.
The ground-truth gate parameters are $\vomega_{0,0,0}=\vomega_{1,0,0}=\vomega_{1,1,0}=0$ and $\vomega_{0,0,1}=8$, and the expert parameters are $\vupsilon_{0,0,0,0}=\vupsilon_{1,0,1,0}=\vupsilon_{0,1,0,0}=\vupsilon_{1,1,1,0}=0$, $\vupsilon_{0,0,1,0}=\vupsilon_{0,1,1,0}=20$, and $\vupsilon_{1,0,0,0}=-\vupsilon_{1,1,0,0}=10$.
To generate the dataset $(\sX,\sY)$, we first fix a sample size $N$ (which varies across experiments). We then sample inputs
$\sX=\{x_n\}_{n=1}^N$ i.i.d.\ from the standard normal distribution $\mathcal{N}(0,1)$. 
Conditional on $\sX$, we generate discrete outputs $\sY=\{y_n\}_{n=1}^N$ according to the SGMLMoE model specified by $(\vomega_0,\vupsilon_0)$.

\textbf{Initialization.}
In the well-specified setting ($K=K_0$), parameters are initialized by perturbing the ground-truth values $\vomega_0$ and $\vupsilon_0$ with additive Gaussian noise: each coordinate is set to the corresponding ground-truth value plus an independent $\mathcal{N}(0,1)$ perturbation, scaled by a prescribed noise level.
In the over-parameterized setting ($K>K_0$), we introduce additional expert components beyond the truth.
For each extra cluster $k\in\{K_0+1,\ldots,K\}$, we set $\vomega_{k,0,0}=0$, $\vomega_{k,0,1}=8$, $\vupsilon_{0,k,0,0}=\vupsilon_{1,k,1,0}=0$, $\vupsilon_{0,k,1,0}=20$, and $\vupsilon_{1,k,0,0}=10$.
The original $K_0$ experts are initialized as in the well-specified case. This design ensures that over-parameterization genuinely creates near-duplicate atoms, which is precisely the regime targeted by our Voronoi analysis and merging strategy.

\textbf{Convergence of BatchMM.}
We study the convergence behavior of SGMLMoE trained using the proposed Batch MM algorithm, and compare it against classical optimization methods, including stochastic gradient descent (SGD), Adam, AdamW, RMSprop, and EM.
In the well-specified setting, we vary the sample size from $N=10^2$ to $N=10^5$.
In the over-parameterized setting, we fix $K=3$ and vary $N$ from $10^{2.75}$ to $10^4$.
Convergence is evaluated via the Frobenius norm between the estimated parameters $(\vomega,\vupsilon)$ and the ground-truth $(\vomega_0,\vupsilon_0)$.
As shown in \cref{fig:BatchMM_Converge}, Batch MM consistently converges to the global minimum, while the competing methods are often markedly slower and, in the over-parameterized regime, frequently converge to suboptimal solutions as $N$ grows.

\textbf{Convergence of Voronoi Loss.}
We empirically verify the convergence rates predicted by \cref{thm_ConvergenceRate_height_Voronoi}. In the exact-fit setting (\cref{fig:VoronoiCov_fit}), varying the sample size from $10^3$ to $10^{5.5}$ yields a log–log slope consistent with the parametric rate $N^{-1/2}$. In the over-specified setting with $K\in\{3,4\}$ and $N\in[10^{2.5},10^4]$, the fitted regression lines in \cref{fig:VoronoiCov_overfit} match the predicted rates. For $K=4$, the Frobenius error of individual components shows that one spurious component converges at a slower sub-parametric rate, empirically close to $N^{-0.25}$, in agreement with the rate predicted by \cref{thm_ConvergenceRate_height_Voronoi} for over-covered Voronoi cells. In contrast, another overfitted component exhibits increasing error as $N$ grows, reflecting a mild repulsion effect in over-parameterized models whereby redundant components are displaced away from the true parameters to resolve non-identifiability (here the relevant cells are $\sV_k$).

\textbf{Merging using DSC.}
Finally, we evaluate the effectiveness of our DSC-based merging strategy for recovering the correct number of experts.
We fix $K=4$ and vary the sample size from $N=10^2$ to $10^4$. For each $N$, we repeat the experiment 20 times with different random seeds.
We compare DSC against AIC, BIC, and ICL by recording (i) the frequency with which each criterion selects the correct number of experts and (ii) the average number of selected experts; results are shown in \cref{fig:VoronoiCov_overfit_DSC}.
Across sample sizes, DSC exhibits a higher correct-selection frequency than the baseline criteria. In particular, in the small-sample regime, DSC is notably more robust, while for sufficiently large $N$ it consistently recovers the true number of experts, matching the consistency guarantees established in our theory.

\begin{figure}
    \centering
    \includegraphics[width=\linewidth]{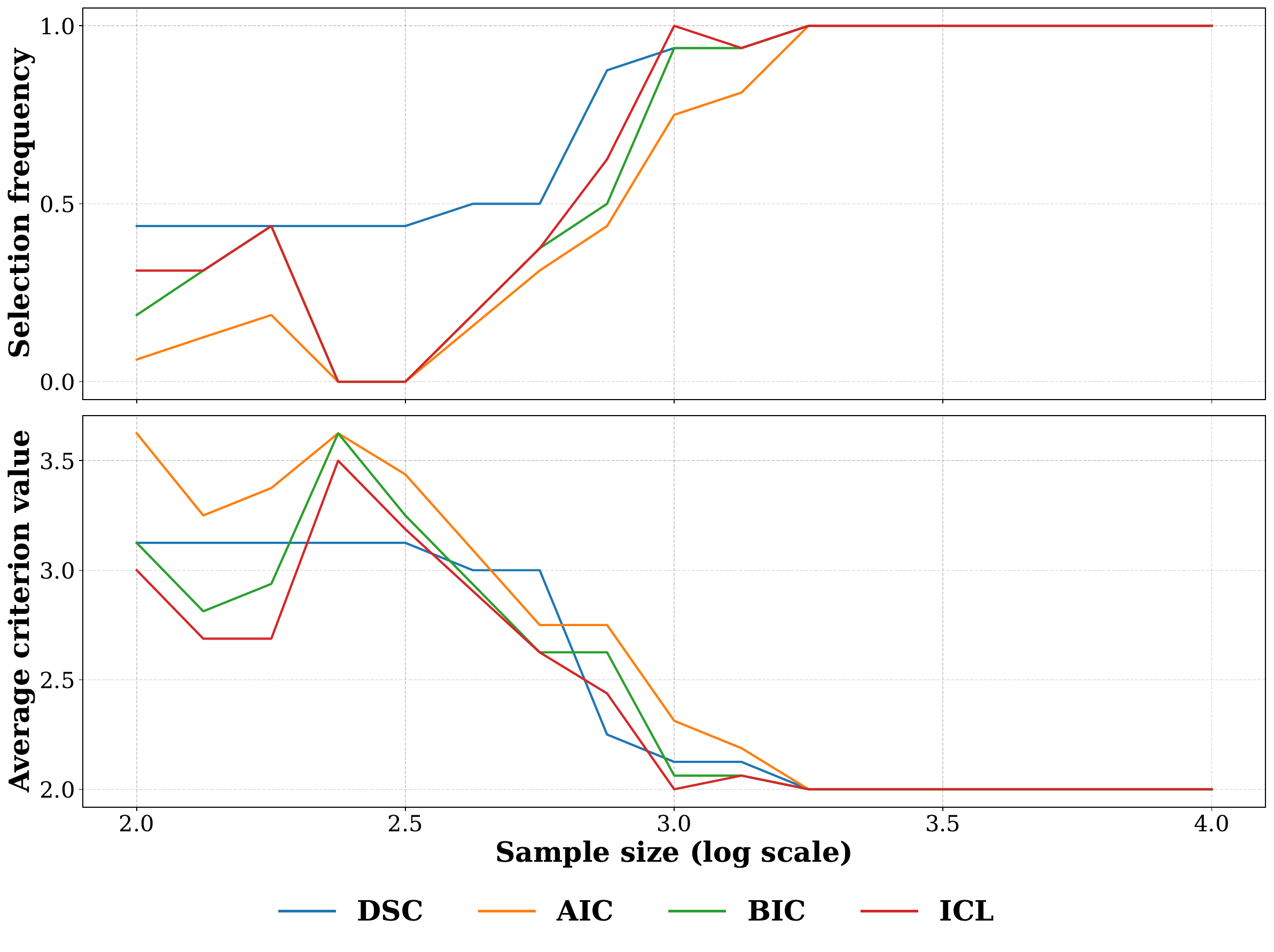}
    \caption{DSC model selection performance.}
    \label{fig:VoronoiCov_overfit_DSC}
\end{figure}

\subsection{Real Dataset}\label{sec: RealData}


\textbf{Initialization.}
Since the true latent expert assignments are unknown in real data, we adopt a practical two-stage initialization strategy grounded in unsupervised learning.
We first apply a classical clustering method (e.g., K-means or spectral clustering) to obtain preliminary groupings.
We then construct initial SGMLMoE parameters from these assignments, providing a structured, data-informed initialization for Batch MM.

\textbf{Expert Estimation.}
We use a subset of 5,000 samples from a protein–protein interaction prediction dataset \citep{tang_machine_2023,qi_mixture_2007} to train the proposed DSC method as well as baseline criteria including AIC, BIC, and ICL for selecting the number of experts, with the maximum capped at $K=8$.
A separate set of 5,000 samples is then used for model training and cross-validation across candidate values of $K$. \cref{tab:perf_by_k_transposed} reports cross-validation performance as a function of $K$, along with the expert count selected by each criterion.
DSC selects $K=3$, achieving strong predictive performance while maintaining a compact model. In contrast, ICL favors a smaller model ($K=2$), leading to reduced precision and F1 score, whereas AIC and BIC select larger models ($K=4$), increasing complexity with only marginal performance gains.
Overall, these results show that DSC achieves a better trade-off between accuracy and model complexity than classical criteria.

\textbf{Accuracy Comparision.}
We fix the number of experts to $K=3$ and train SGMLMoE using the Batch MM algorithm on 5,000 samples.
We compare its performance with standard baseline methods, Naive Bayes, Random Forest, Support Vector Machine, and Logistic Regression, using recall, precision, and F1 score.
As shown in \cref{tab:baseline_compare}, SGMLMoE achieves the best overall performance among all methods, attaining the highest precision ($0.74$) and F1 score ($0.80$), while maintaining competitive recall ($0.88$).
Naive Bayes exhibits lower precision and F1 score, indicating limited discriminative capacity, while Random Forest, SVM, Logistic Regression, and 3-Hidden Layer MLP improve upon NB but remain consistently below SGMLMoE in precision and overall F1.
These results underscore the benefit of SGMLMoE in capturing heterogeneity, and they empirically support the practical stability of Batch MM relative to standard training baselines.
Finally, \cref{fig:Tree} displays the dendrogram of the fitted mixing measure obtained by \cref{alg_batchMM_SGMLMoE}, which provides an interpretable hierarchical view of the inferred expert structure.

\begin{figure}
    \centering
    \includegraphics[width=.7\linewidth]{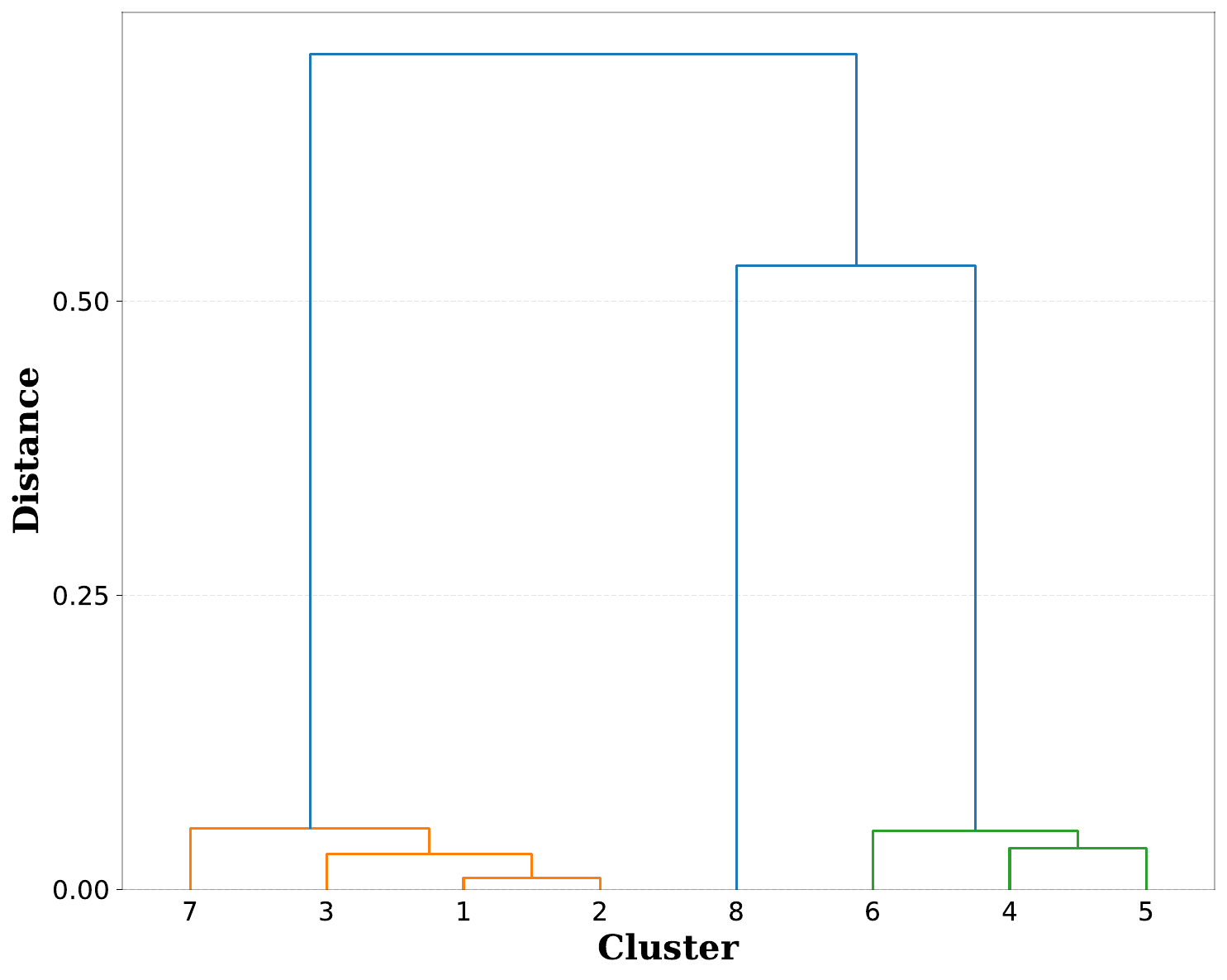}
    \caption{Dendrogram of mixing measure.}
    \label{fig:Tree}
\end{figure}

\begin{table}
    \centering
\begin{tabular}{c c c c c | c}
\toprule
$K$ & Recall & Precision & F1 & \# P & CoMs \\
\midrule
1 & 0.54\,{\scriptsize$\pm$0.04}  & 0.43\,{\scriptsize$\pm$0.03} & 0.48\,{\scriptsize$\pm$0.003} & 6 & \\
2 & 0.72\,{\scriptsize$\pm$0.02}  & 0.61\,{\scriptsize$\pm$0.02} & 0.66\,{\scriptsize$\pm$0.003} & 12 & ICL \\
3 & 0.88\,{\scriptsize$\pm$0.01}  & 0.73\,{\scriptsize$\pm$0.02} & 0.80\,{\scriptsize$\pm$0.003} & 18 & DSC\\
4 & 0.88\,{\scriptsize$\pm$0.01}  & 0.74\,{\scriptsize$\pm$0.01} & 0.80\,{\scriptsize$\pm$0.002} & 24 & AIC, BIC\\
5 & 0.89\,{\scriptsize$\pm$0.01}  & 0.75\,{\scriptsize$\pm$0.01} & 0.81\,{\scriptsize$\pm$0.002} & 30 &\\
6 & 0.91\,{\scriptsize$\pm$0.01}  & 0.76\,{\scriptsize$\pm$0.01} & 0.83\,{\scriptsize$\pm$0.001} & 36 &\\
7 & 0.92\,{\scriptsize$\pm$0.01}  & 0.76\,{\scriptsize$\pm$0.01} & 0.83\,{\scriptsize$\pm$0.001} & 42 &\\
8 & 0.92\,{\scriptsize$\pm$0.01}  & 0.76\,{\scriptsize$\pm$0.01} & 0.83\,{\scriptsize$\pm$0.001} & 48 &\\
\bottomrule
\end{tabular}
    \caption{Performance across number of experts $K$.}
    \label{tab:perf_by_k_transposed}
\end{table}

\begin{table}
    \centering
\caption{Comparison with baseline methods.}
\label{tab:baseline_compare}
\begin{tabular}{l c c c}
\toprule
Method & Recall & Precision & F1 \\
\midrule
SGMLMoE
& \textbf{0.88}\,{\scriptsize$\pm$0.012}
& \textbf{0.74}\,{\scriptsize$\pm$0.010}
& \textbf{0.80}\,{\scriptsize$\pm$0.002} \\

NB
& 0.83\,{\scriptsize$\pm$0.011}
& 0.64\,{\scriptsize$\pm$0.009}
& 0.72\,{\scriptsize$\pm$0.001} \\

RF
& 0.85\,{\scriptsize$\pm$0.012}
& 0.68\,{\scriptsize$\pm$0.008}
& 0.76\,{\scriptsize$\pm$0.001} \\

SVM
& 0.85\,{\scriptsize$\pm$0.010}
& 0.69\,{\scriptsize$\pm$0.009}
& 0.76\,{\scriptsize$\pm$0.002} \\

LR
& 0.84\,{\scriptsize$\pm$0.014}
& 0.65\,{\scriptsize$\pm$0.009}
& 0.73\,{\scriptsize$\pm$0.003} \\

3HL-MLP
& 0.87\,{\scriptsize$\pm$0.008}
& 0.71\,{\scriptsize$\pm$0.009}
& 0.77\,{\scriptsize$\pm$0.001} \\
\bottomrule
\end{tabular}
\end{table}

{\bf Conclusion and Perspectives.}
In summary, our experiments confirm the paper’s main message: Batch MM yields stable maximum-likelihood training for SGMLMoE in the full-data regime, while the Voronoi-loss/dendrogram view provides a principled remedy for over-parameterization. Empirically, Batch MM converges reliably, the Voronoi loss decays in line with the predicted finite-sample rates, and DSC consistently selects a compact expert count that preserves accuracy and improves calibration on protein--protein interaction prediction. Future work includes extending the MM surrogates to incremental/mini-batch settings, broadening the dendrogram theory to richer expert parameterizations and structured regularization, and developing robustness guarantees under contamination and covariate shift.

\newpage

\section*{Impact Statement}
This paper presents work whose goal is to advance the field of Machine
Learning. There are many potential societal consequences of our work, none which we feel must be specifically highlighted here

\bibliography{bib}
\bibliographystyle{icml2026}

\newpage
\appendix
\onecolumn

\icmltitle{Supplementary Materials for ``Fast Model Selection and Stable Optimization for Softmax-Gated Multinomial-Logistic Mixture of Experts Models"}

\paragraph{Appendix Organization.}
This appendix collects material that supports the main paper at three complementary levels. \cref{appendix_extended_conclusion} extends the discussion with empirical takeaways and concrete future directions. \cref{app:batchMM_SGMLMoE_details} provides implementation-ready details of the Batch MM routine for SGMLMoE, including the linear-algebra identities, the full algorithmic steps, and practical notes on conditioning and stopping. \cref{appendix_exp_details} documents experimental protocols (data preprocessing, evaluation, and reproducibility details). Finally, \cref{appendix_proof_main_results} contains complete proofs.

\section{Extended Conclusion and Perspectives}\label{appendix_extended_conclusion}
Overall, the numerical experiments support the main message of the paper: in SGMLMoE, the proposed Batch MM updates provide a stable and reliable maximum-likelihood training routine in the full-data regime, and the Voronoi-loss/dendrogram viewpoint yields a practically effective handle on over-parameterization. Empirically, we observe (i) monotone and repeatable likelihood improvement with Batch MM across well-specified and over-specified settings, (ii) decay of the Voronoi loss consistent with the finite-sample theory in both exact-fit and over-fit regimes, and (iii) robust recovery of the effective number of experts via DSC, which translates into compact models with competitive or improved predictive performance and probability calibration on protein--protein interaction data.

Looking forward, several extensions are natural. First, it would be valuable to couple the present batch-MM stability guarantees with incremental or mini-batch variants to handle larger-scale datasets while preserving the monotone-surrogate structure. Second, extending the dendrogram/DSC analysis to richer expert parameterizations (e.g., deeper feature maps, structured sparsity, or context-dependent expert sharing) could broaden applicability while maintaining interpretability through the merging hierarchy. Third, the robustness observed under mild misspecification suggests studying contamination and covariate-shift regimes theoretically, with DSC-style structural penalties as a principled alternative to purely likelihood-based criteria. Finally, developing open-source, production-grade implementations and broader biological benchmarks (beyond PPI) would further clarify when and how hierarchical aggregation best trades off accuracy, calibration, and memory/compute in heterogeneous classification problems.

\section{Detailed Batch MM Algorithm}\label{app:batchMM_SGMLMoE_details}

{\bf Inverse identities.}
For implementation, we use the block structure of \cref{eq_B_kn_batch}. In particular,
\begin{align}\label{eq_inverse_B_batch}
    \mB_{n,K-1}^{-1}
    &=\left(\frac{4}{3}\mI_{K-1}+\frac{8}{3(K-1)}\bm{1}_{K-1}\bm{1}_{K-1}^{\top}\right)\otimes\left(\sum_{n=1}^{N}\hat{\vx}_n\hat{\vx}_n^{\top}\right)^{-1},\\
    \left[\mB^{K}_{n,M-1}\right]^{-1}&=\mI_{K}\otimes \mB_{n,M-1}^{-1}.
\end{align}
In addition, for any diagonal matrix $\diag(\va)$ with positive entries, we have $\diag(\va)^{-1}=\diag(1/\va)$, which we apply to $\mZ_n^{(t)}$ when needed.

{\bf Batch MM Algorithm for SGMLMoE.} \cref{alg_batchMM_SGMLMoE} implements a batch MM procedure for fitting SGMLMoE. At each iteration, it first computes responsibilities ${\tau_{n,k}^{(t)}}$ for all samples and experts using the current gate and expert probabilities, and then constructs the batch sufficient-statistic–like terms $(\vs_n^{(t)},\vr_n^{(t)})$ together with the corresponding curvature blocks $(\mB_{n,K},\mB_{n,M})$ that define the quadratic MM surrogate in \cref{eq_surrogate_batch_SGMLMoE}. These per-sample quantities are then aggregated over the batch to form global updates, including the diagonal weighting matrices $\mZ_n^{(t)}$ and the gradients of the log-sum-exp terms $g_n$ (gate) and $e_n$ (experts). Finally, the method performs closed-form MM updates for the gate and expert parameters by solving two preconditioned linear systems, yielding $(\vw^{(t+1)},\vv^{(t+1)})$. By construction, the resulting iterates monotonically improve the log-likelihood, and the routine terminates when the likelihood increment falls below the tolerance $\varepsilon$ (or when the maximum number of iterations is reached).

\begin{algorithm}
\caption{{\bf Batch MM Algorithm for SGMLMoE}}
\label{alg_batchMM_SGMLMoE}
\begin{algorithmic}[1]
\Require Batch data $\{(\vx_n,\ry_n)\}_{n=1}^{N}$, numbers of experts/classes $(K,M)$, polynomial degree $D$, initialization $\vtheta^{(0)}=(\vw^{(0)},\vv^{(0)})$, tolerance $\varepsilon>0$, maximum iterations $T$.
\Ensure MM iterates $\{\vtheta^{(t)}\}_{t\ge 0}$.

\For{$t=0,1,\dots,T-1$}
    \State \textbf{E-step (Responsibilities).}
    \For{$n=1,\dots,N$}
        \State Form the feature vector $\hat{\vx}_n=\big[\evx_{n,1}^{0},\dots,\evx_{n,1}^{D},\dots,\evx_{n,P}^{0},\dots,\evx_{n,P}^{D}\big]^{\top}$.
        
        \State Compute gate probabilities $\{\sfg_k(\vw^{(t)}(\vx_n))\}_{k=1}^{K}$.
        
        \State Compute expert probabilities $\{\ex_k(\ry_n=m;\vv^{(t)}(\vx_n))\}_{m=1}^{M}$ for each $k\in[K]$.
        
        \State Compute posteriors
        \[
        \tau_{n,k}^{(t)}=
        \frac{\sfg_k(\vw^{(t)}(\vx_n))\,\ex_k(\ry_n;\vv^{(t)}(\vx_n))}
        {\sum_{\ell=1}^{K}\sfg_\ell(\vw^{(t)}(\vx_n))\,\ex_\ell(\ry_n;\vv^{(t)}(\vx_n))},~ k\in[K].
        \]
        
        \State Build $\vs_n^{(t)}$ and $\vr_n^{(t)}$ (using $\I(\ry_n,l)$ and the definitions in \cref{eq_surrogate_batch_SGMLMoE} and the surrounding text).
        
        \State Build the curvature blocks $\mB_{n,K}$ and $\mB_{n,M}$ (as in \cref{eq_B_kn_batch}).
    \EndFor

    \State \textbf{Aggregate Batch Quantities.}
    
    \State $\vs^{(t)}=\sum_{n=1}^{N}\vs_n^{(t)}$, \quad $\vr^{(t)}=\sum_{n=1}^{N}\vr_n^{(t)}$.
    
    \State $\mB_{n,K-1}=\sum_{n=1}^{N}\mB_{n,K}$,\quad $\mB_{n,M-1}=\sum_{n=1}^{N}\mB_{n,M}$,\quad $\mB^{K}_{n,M-1}=\mI_{K}\otimes \mB_{n,M-1}$.
    
    \For{$n=1,\dots,N$}
        \State Set $\vtau_{n,:}^{(t)}=[\tau_{n,1}^{(t)},\dots,\tau_{n,K}^{(t)}]^{\top}$ and
        $\mZ_n^{(t)}=\diag\!\big(\vtau_{n,:}^{(t)}\otimes \bm{1}_{P(D+1)}\big)$.
    \EndFor

    \State \textbf{Compute Gradients of Log-sum-exp terms.}
    \State $\nabla g(\vw^{(t)})=\sum_{n=1}^{N}\nabla g_n(\vw^{(t)})$, where $g_n(\vw)=\log\!\left(1+\sum_{k=1}^{K-1}\exp\!\left(w_k(\vx_n)\right)\right)$.
    \State For each $n\in[N]$, form $\nabla e_n(\vv^{(t)})=\vect\big([\nabla e_n(\vc_1^{(t)}),\dots,\nabla e_n(\vc_K^{(t)})]\big)$, where $e_n(\vc_k)=\log\!\left(1+\sum_{m=1}^{M-1}\exp\!\left(\evv_{m,k}(\vx_n)\right)\right)$.

    \State \textbf{M-step (Closed-form MM Updates).}
    \State Update the gate parameters:
    \[
        \vw^{(t+1)}=\vw^{(t)}+\mB_{n,K-1}^{-1}\Big(\vs^{(t)}-\nabla g(\vw^{(t)})\Big).
    \]
    \State Update the expert parameters:
\begin{align*}
    &\vv^{(t+1)}=\vv^{(t)}+\Big(\sum_{n=1}^{N}\mZ_n^{(t)}\,\mB^{K}_{n,M-1}\Big)^{-1}\Big(\vr^{(t)}-\sum_{n=1}^{N}\mZ_n^{(t)}\,\nabla e_n(\vv^{(t)})\Big).
\end{align*}
    \State Set $\vtheta^{(t+1)}=(\vw^{(t+1)},\vv^{(t+1)})$.

    \State \textbf{Stopping Criterion.} If $|\mathcal{L}(\vtheta^{(t+1)})-\mathcal{L}(\vtheta^{(t)})|\le \varepsilon$, \textbf{break}.
\EndFor
\end{algorithmic}
\end{algorithm}

\section{Experimental Details}\label{appendix_exp_details}

\textbf{Real Dataset Pre-processing. } We first use the dataset provided in \citep{tang_machine_2023,qi_mixture_2007} to obtain positive interaction labels. Following \citet{tang2023machine}, we adopt protein sequence data as inputs, collected from \citet{uniprot2015uniprot}, yielding 9,760 protein sequences. Each sequence is encoded as a 20-dimensional amino-acid composition vector using the ProtLearn library. We then restrict the dataset to proteins appearing in the DIP database, resulting in 2,321 proteins with retained identifiers and corresponding sequence encodings. To construct labeled interaction samples, we select $n$ protein pairs with verified interactions in DIP and compute the element-wise absolute difference between their sequence encodings, producing 20-dimensional feature vectors labeled as 1. Similarly, we sample $n$ non-interacting protein pairs absent from DIP and construct their feature vectors in the same manner, assigning label 0. The positive and negative samples are concatenated to form a $2n \times 20$ feature matrix with an associated binary label vector, and the resulting dataset is randomly shuffled prior to training.

\textbf{Monotonicity of Maximum Loglikelihood Estimator. }Classical optimization methods such as SGD and Adam often lead to unstable MLE behavior in mixture or Mixture-of-Experts models. This instability arises from the highly non-convex likelihood landscape, where inappropriate step sizes can cause oscillations or divergence, making performance sensitive to careful learning-rate tuning. Such tuning is problem-dependent and can be computationally costly, limiting robustness and reproducibility. In contrast, our proposed method avoids these issues by adopting an MM-based optimization scheme that guarantees monotonic improvement of the MLE objective, as established in \cref{thm_MM_monotone_SGMLMoE}. This monotonicity ensures stable and predictable optimization dynamics. We empirically illustrate this behavior by plotting the MLE trajectory across iterations in \cref{fig_MLE_Mono}. As shown, our method exhibits a smooth, non-decreasing likelihood progression, in contrast to the fluctuating behavior observed under gradient-based optimizers. The experiment is conducted on the synthetic dataset described in \cref{sec: RealData} with $N=1000$ samples.

\begin{figure}
    \centering
    \includegraphics[width=.7\linewidth]{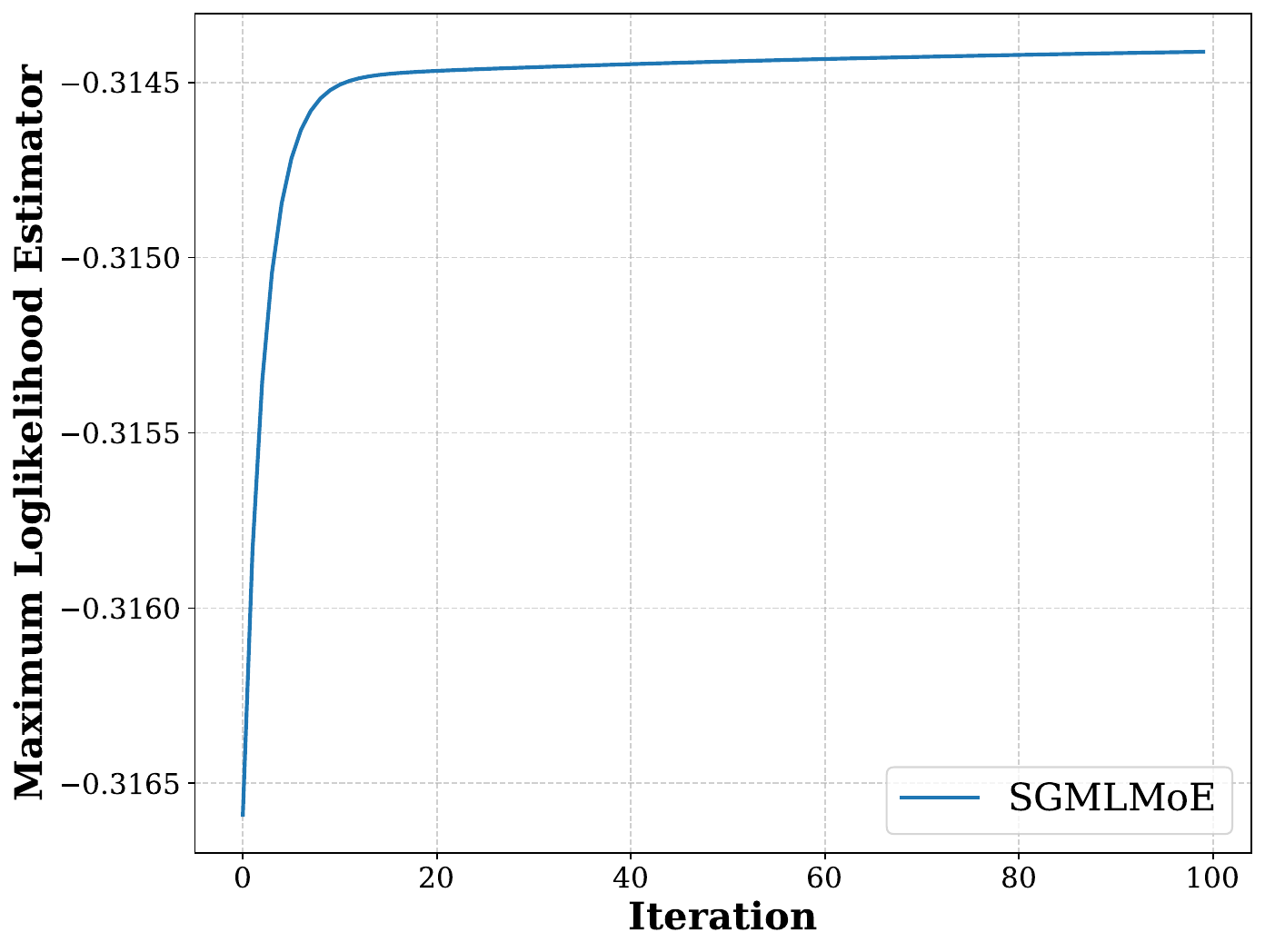}
    \caption{Evolution of the maximum log-likelihood for SGMLMoE over the first 100 optimization iterations with $N=1000$.}
    \label{fig_MLE_Mono}
\end{figure}
Surrogate function construction plays a pivotal role in optimization techniques such as MM algorithms. By leveraging mathematical inequalities, it is possible to approximate or bound complex objective functions with simpler, more tractable alternatives. Below are several fundamental inequalities commonly employed in the development of surrogate functions:


\section{Proofs of Main Results}\label{appendix_proof_main_results}

\subsection{Technical Contributions} 

Following the MM paradigm, we first propose a novel surrogate objective for SGMLMoE in \cref{theorem_surrogate} and rigorously show that it satisfies the monotonicity property required by the MM algorithm in \cref{thm_MM_monotone_SGMLMoE}. Building on this foundation, we turn to model merging and substantially modify the proof of Theorem~3.1 in \citet{nguyen_general_2024} to accommodate our proposed Voronoi metric and the SGMLMoE framework, yielding the key inequality in \cref{thm_VorIneq_SGMLMoE}. Leveraging this result, we then follow the argument of Lemma~2 in \citet{do_dendrogram_2024} (see also \citet{thai_model_2025,hai_dendrograms_2026}) to derive an analogous monotone chain property for our model in \cref{thm_VorChain_SGMLMoE}. Finally, we adapt the proofs of Theorems~1--3 in \citet{do_dendrogram_2024} for mixture models to our Voronoi loss and SGMLMoE setting, leading to theoretical guarantees on the convergence rate, model selection consistency, and merge convergence in \cref{thm_ConvergenceRate_height_Voronoi}, \cref{thm_ModelSelection_SGMLMoE}, and \cref{thm:ConvergeMerge_SGMLMoE}, respectively.

\subsection{Proof of \texorpdfstring{\cref{theorem_surrogate}}{Theorem \ref{theorem_surrogate}}}
\label{sec:Surrogate_SGMLMoE}

We derive the surrogate bound in \cref{eq_surrogate_batch_SGMLMoE} by combining
(i) a variational upper bound based on responsibilities (E-step) and
(ii) a uniform quadratic majorization of the baseline log-sum-exp terms (MM step).
We first recall the notation blocks used throughout this proof.

{\bf Batch Sufficient-statistics Notation.}
For each $n\in[N]$, define the gating-design vector in $\sR^{(K-1)P(D+1)}$:
\begin{align}\label{eq_vs_n_batch_recall}
    \vs_n^{(t)}=\big[\tau_{n,1}^{(t)}\evx_{n,1}^{0},\dots,\tau_{n,1}^{(t)}\evx_{n,1}^{D},\dots,\tau_{n,K-1}^{(t)}\evx_{n,P}^{D}\big]^{\top}.
\end{align}
Define the expert-design vector $\vr_n^{(t)}$ by stacking
$r_{n,d,l,p,k}^{(t)}=\I(\ry_n,l)\,\tau_{n,k}^{(t)}\,\evx_{n,p}^{d}$,
$~d\in\{0,\ldots,D\},~l\in[M],~p\in[P],~k\in[K],$
into
\[
\vr_n^{(t)}=
\big[r_{n,0,1,1,1}^{(t)},\dots,r_{n,D,1,1,1}^{(t)},\dots,r_{n,D,M,1,1}^{(t)},\dots,r_{n,D,M,P,K}^{(t)}\big]^{\top}.
\]
We also define the class-parameter blocks for each expert $k$ by
\[
\vc_k=\big[\vupsilon_{1,k,0,1},\dots,\vupsilon_{1,k,D,1},\dots,\vupsilon_{M,k,D,1},\dots,\vupsilon_{M,k,D,P}\big]^{\top},
\qquad
\vupsilon=\vect([\vc_1,\dots,\vc_K]).
\]
Finally, we aggregate the batch statistics as
\[
\vs^{(t)}=\sum_{n=1}^{N}\vs_n^{(t)},
\qquad
\vr^{(t)}=\sum_{n=1}^{N}\vr_n^{(t)}.
\]

{\bf Quadratic Majorizers for the Gate and Experts.}
For $k\in[K-1]$, let $w_k(\vx)=\sum_{d=0}^{D}\vomega_{k,d}^{\top}\vx^{d}$ and define the gate log-sum-exp term
\[
g_n(\vw)=\log\!\left(1+\sum_{k=1}^{K-1}\exp\!\left(w_k(\vx_n)\right)\right).
\]
For experts, define for each $k\in[K]$ the multinomial log-sum-exp term
\[
e_n(\vc_k)=\log\!\left(1+\sum_{m=1}^{M-1}\exp\!\left(\evv_{m,k}(\vx_n)\right)\right).
\]
For any $K\in\sN$, we use the quadratic majorizers based on the matrices
\begin{align}\label{eq_B_kn_batch_recall}
    \mB_{n,K}
    =\left(\frac{3}{4}\mI_{K-1}-\frac{\bm{1}_{K-1}\bm{1}_{K-1}^{\top}}{2(K-1)}\right)\otimes\hat{\vx}_n\hat{\vx}_n^{\top},
\end{align}
and similarly $\mB_{n,M}$ is defined by replacing $K$ with $M$ in \cref{eq_B_kn_batch_recall}.
For compactness, introduce the centered increments $\bar{\vw}_t=\vw-\vw^{(t)}$ and $\bar{\vc}_{k,t}=\vc_k-\vc_k^{(t)}$.
(Here $\hat{\vx}_n=[\evx_{n,1}^{0},\dots,\evx_{n,1}^{D},\dots,\evx_{n,P}^{0},\dots,\evx_{n,P}^{D}]^\top$.)

\medskip

\paragraph{Step 1: Variational Upper Bound via Responsibilities.}
For each $n\in[N]$, define
\[
a_{n,k}(\vtheta):=\sfg_k(\vw(\vx_n))\,\ex_k(\ry_n;\vv(\vx_n)),
\qquad
s_\vtheta(\ry_n\mid \vx_n)=\sum_{k=1}^{K}a_{n,k}(\vtheta).
\]
By the log-sum inequality (equivalently, Jensen's inequality / KL nonnegativity), for any $\vtau_{n,:}\in\Delta_K$,
\begin{equation}\label{eq:logsum_bound_per_n}
-\log s_\vtheta(\ry_n\mid \vx_n)
\le
-\sum_{k=1}^{K}\tau_{n,k}\log a_{n,k}(\vtheta)
+\sum_{k=1}^{K}\tau_{n,k}\log\tau_{n,k}.
\end{equation}
Choosing $\tau_{n,k}=\tau_{n,k}^{(t)}:=a_{n,k}(\vtheta^{(t)})/\sum_{\ell}a_{n,\ell}(\vtheta^{(t)})$ makes
\cref{eq:logsum_bound_per_n} tight at $\vtheta=\vtheta^{(t)}$.

Expanding $\log a_{n,k}(\vtheta)=\log\sfg_k(\vw(\vx_n))+\log\ex_k(\ry_n;\vv(\vx_n))$ yields two convex
log-sum-exp terms and linear contributions in the scores:
\begin{align}
-\sum_{k=1}^{K}\tau_{n,k}^{(t)}\log\sfg_k(\vw(\vx_n))
&=
g_n(\vw)-\sum_{k=1}^{K-1}\tau_{n,k}^{(t)}\,w_k(\vx_n),
\label{eq:gate_decomp}\\
-\sum_{k=1}^{K}\tau_{n,k}^{(t)}\log\ex_k(\ry_n;\vv(\vx_n))
&=
\sum_{k=1}^{K}\tau_{n,k}^{(t)}\,e_n(\vc_k)
-\sum_{k=1}^{K}\tau_{n,k}^{(t)}\sum_{l=1}^{M}\I(\ry_n,l)\,\evv_{l,k}(\vx_n).
\label{eq:expert_decomp}
\end{align}
The linear terms in \cref{eq:gate_decomp,eq:expert_decomp} match the sufficient-statistics inner products
$\vw^\top \vs_n^{(t)}$ and $\vupsilon^\top \vr_n^{(t)}$ under the stacking conventions above
(so that $w_k(\vx_n)=\vomega_k^\top\hat{\vx}_n$ and $\evv_{l,k}(\vx_n)=\vupsilon_{l,k}^\top\hat{\vx}_n$).

Collecting \cref{eq:logsum_bound_per_n,eq:gate_decomp,eq:expert_decomp} and summing over $n$ gives
\begin{align}
-\log s_\vtheta(\ry\mid\rvx)
\le
C^{(t)}
-\sum_{n=1}^{N}\vw^\top \vs_n^{(t)}
-\sum_{n=1}^{N}\vupsilon^\top \vr_n^{(t)}
+\sum_{n=1}^{N} g_n(\vw)
+\sum_{n=1}^{N}\sum_{k=1}^{K}\tau_{n,k}^{(t)}\,e_n(\vc_k),
\label{eq:bound_after_responsibilities}
\end{align}
where $C^{(t)}:=\sum_{n=1}^{N}\sum_{k=1}^{K}\tau_{n,k}^{(t)}\log\tau_{n,k}^{(t)}$ does not depend on $\vtheta$.

\paragraph{Step 2: Quadratic Majorization of $g_n$ and $e_n$.}
Both $g_n$ and $e_n$ are baseline log-sum-exp functions composed with affine score maps.
A standard calculation gives
\[
\nabla^2 g_n(\vw)=\big(\diag(\hat{\vp}_n)-\hat{\vp}_n\hat{\vp}_n^\top\big)\otimes \hat{\vx}_n\hat{\vx}_n^\top,
\]
where $\hat{\vp}_n$ are the baseline softmax probabilities over $k\in[K-1]$.

Applying \cref{prop:bohning_bound_baseline} with $q=K-1$ (gate) and $q=M-1$ (expert), and then lifting the resulting Hessian bound through the linear score maps via \cref{lem:prop:bohning_bound_baseline}, yields the quadratic majorizers in \cref{eq_B_kn_batch}. Indeed, by \cref{lem:prop:bohning_bound_baseline} (with $q=K-1$) and \cref{prop:bohning_bound_baseline},
\[
\nabla^2 g_n(\vw)\ \preceq\ \mB_{n,K},
\]
with $\mB_{n,K}$ defined in \cref{eq_B_kn_batch_recall}. Therefore, the second-order Taylor formula along the segment
between $\vw^{(t)}$ and $\vw$ yields the global quadratic upper bound
\begin{equation}\label{eq:quad_maj_gate}
g_n(\vw)
\le
g_n(\vw^{(t)})+\bar{\vw}_t^\top\nabla g_n(\vw^{(t)})+\frac12\,\bar{\vw}_t^\top \mB_{n,K}\bar{\vw}_t,
\qquad \forall \vw.
\end{equation}
The same argument applies to $e_n(\vc_k)$ (with $q=M-1$), giving for each $k\in[K]$,
\begin{equation}\label{eq:quad_maj_expert}
e_n(\vc_k)
\le
e_n(\vc_k^{(t)})+\bar{\vc}_{k,t}^\top\nabla e_n(\vc_k^{(t)})+\frac12\,\bar{\vc}_{k,t}^\top \mB_{n,M}\bar{\vc}_{k,t},
\qquad \forall \vc_k.
\end{equation}

\paragraph{Step 3: Surrogate Bound.}
Plug \cref{eq:quad_maj_gate,eq:quad_maj_expert} into \cref{eq:bound_after_responsibilities} and absorb all
terms that do not depend on $\vtheta$ into $C^{(t)}$. This yields exactly the surrogate inequality stated in
\cref{eq_surrogate_batch_SGMLMoE}, namely
\[
-\log s_{\vtheta}(\ry\mid \rvx)\le -\log\!\big[g(\vtheta,\vx_{[N]},\ry_{[N]};\vtheta^{(t)})\big],
\]
with the explicit quadratic form in \cref{eq_surrogate_batch_SGMLMoE}. This completes the proof.


\begin{lemma}[Kronecker product preserves Loewner order]\label{lem:prop:bohning_bound_baseline}
If $\mA \preceq \mB$ and $\mC \succeq \zero$ are symmetric, then
\[
\mA \otimes \mC \preceq \mB \otimes \mC .
\]
\end{lemma}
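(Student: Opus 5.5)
The plan is to reduce the claim to a scalar quadratic-form inequality and then use the mixed-product property of the Kronecker product. Since $\mA\preceq\mB$, the matrix $\mD:=\mB-\mA$ is symmetric positive semidefinite. Because $(\mB-\mA)\otimes\mC=\mB\otimes\mC-\mA\otimes\mC$ by bilinearity, the statement is equivalent to showing $\mD\otimes\mC\succeq\zero$ for symmetric $\mD,\mC\succeq\zero$. Symmetry of $\mD\otimes\mC$ is immediate from $(\mD\otimes\mC)^\top=\mD^\top\otimes\mC^\top$, so only nonnegativity of the quadratic form remains.

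For the main step I would use spectral square roots. Write $\mD=\mD^{1/2}\mD^{1/2}$ and $\mC=\mC^{1/2}\mC^{1/2}$ with symmetric PSD square roots. The mixed-product rule $(\mP\otimes\mQ)(\mR\otimes\mS)=(\mP\mR)\otimes(\mQ\mS)$ then gives
\[
\mD\otimes\mC=(\mD^{1/2}\otimes\mC^{1/2})(\mD^{1/2}\otimes\mC^{1/2})=\mF^\top\mF,
\qquad \mF:=\mD^{1/2}\otimes\mC^{1/2}.
\]
Here $\mF$ is symmetric. Hence for any vector $\vz$ of compatible dimension, $\vz^\top(\mD\otimes\mC)\vz=\|\mF\vz\|^2\ge0$, which proves the claim.

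As a cross-check, one can argue through eigenvalues instead. The eigenvalues of $\mD\otimes\mC$ are the products $\lambda_i(\mD)\lambda_j(\mC)$, and each product is nonnegative. A second alternative, closer to how the lemma is used in the proof of \cref{theorem_surrogate} (where $\mC=\hat{\vx}_n\hat{\vx}_n^\top$ is rank one), is to reshape $\vz$ into a matrix $\mZ$. Then $\vz^\top(\mD\otimes\mC)\vz=\mathrm{tr}(\mD\mZ\mC\mZ^\top)$ under the appropriate vec convention, and this trace is nonnegative as a trace of a product of PSD matrices, since $\mZ\mC\mZ^\top\succeq\zero$.

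There is no real obstacle here. The only care needed is ordering and dimension conventions: $\mA,\mB$ must have the same size, and the mixed-product rule requires conformable blocks. I will state these explicitly, avoid any vec-ordering ambiguity by using the square-root factorization argument, and note that the existence of symmetric PSD square roots follows from the spectral theorem.
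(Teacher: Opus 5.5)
Your proposal is correct and follows essentially the same route as the paper, which also reduces to $(\mB-\mA)\otimes\mC\succeq\zero$ and writes the quadratic form as $\|((\mB-\mA)^{1/2}\otimes\mC^{1/2})\vx\|^2\ge 0$ using symmetric square roots. Your explicit mixed-product justification fills in a step the paper leaves implicit, and your eigenvalue and trace cross-checks are valid but not needed.
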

\begin{proof}[Proof of \cref{lem:prop:bohning_bound_baseline}]
Since $\mA \preceq \mB$, we have $\mB-\mA \succeq \zero$. For any vector $\vx$,
\[
\vx^\top\big((\mB-\mA)\otimes \mC\big)\vx
=
\big\| \big((\mB-\mA)^{1/2}\otimes \mC^{1/2}\big)\vx \big\|^2 \ge 0,
\]
so $(\mB-\mA)\otimes \mC \succeq \zero$, which is equivalent to
$\mA \otimes \mC \preceq \mB \otimes \mC$.
\end{proof}

\begin{proposition}[Uniform bound for baseline-softmax Hessian]\label{prop:bohning_bound_baseline}
Let $q\in\sN$ and let $\tilde{\vp}\in(0,1)^{q+1}$ satisfy $\sum_{j=1}^{q+1}\tilde p_j=1$.
Write $\hat{\vp}=[\tilde p_1,\ldots,\tilde p_q]^\top$ and $\mLambda=\diag(\hat{\vp})$.
Then
\[
\mLambda-\hat{\vp}\hat{\vp}^\top
\preceq
\mA_q
:=
\frac{3}{4}\mI_q-\frac{\bm{1}_q\bm{1}_q^\top}{2q}.
\]
\end{proposition}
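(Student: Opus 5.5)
The plan is to reduce the statement to Böhning's classical bound on the full $(q+1)$-dimensional softmax Hessian. We then compress it to the first $q$ coordinates and compare with $\mA_q$ by an elementary Cauchy--Schwarz step.

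\textbf{Step 1 (full Hessian).} Set $\tilde{\mH}:=\diag(\tilde{\vp})-\tilde{\vp}\tilde{\vp}^\top\in\sR^{(q+1)\times(q+1)}$. This matrix is symmetric positive semidefinite, since $\vv^\top\tilde{\mH}\vv$ is the variance of a discrete random variable taking the value $v_j$ with probability $\tilde p_j$. Because $\sum_j\tilde p_j=1$, we have $\tilde{\mH}\bm{1}_{q+1}=\zero$. To bound its largest eigenvalue, apply Gershgorin's theorem. Row $i$ has diagonal entry $\tilde p_i-\tilde p_i^2$ and absolute off-diagonal sum $\tilde p_i\sum_{j\ne i}\tilde p_j=\tilde p_i(1-\tilde p_i)$. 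Every eigenvalue is therefore at most $2\tilde p_i(1-\tilde p_i)\le 1/2$. Since $\tilde{\mH}$ annihilates $\bm{1}_{q+1}$ and has spectral norm at most $1/2$ on $\bm{1}_{q+1}^\perp$, we obtain
\[
\tilde{\mH}\preceq \tfrac12\Big(\mI_{q+1}-\tfrac{1}{q+1}\bm{1}_{q+1}\bm{1}_{q+1}^\top\Big).
\]
This is exactly Böhning's bound.

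\textbf{Step 2 (compression to the baseline parameterization).} The matrix $\mLambda-\hat{\vp}\hat{\vp}^\top$ is the leading $q\times q$ principal submatrix of $\tilde{\mH}$. For $\vu\in\sR^q$, test Step 1 with the vector $\vv=(\vu,0)$. This gives
\[
\vu^\top(\mLambda-\hat{\vp}\hat{\vp}^\top)\vu\le \tfrac12\|\vu\|^2-\tfrac{(\bm{1}_q^\top\vu)^2}{2(q+1)}.
\]

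\textbf{Step 3 (comparison with $\mA_q$).} It remains to show that the right-hand side is at most $\vu^\top\mA_q\vu=\tfrac34\|\vu\|^2-\tfrac{(\bm{1}_q^\top\vu)^2}{2q}$. The difference equals $\tfrac14\|\vu\|^2-(\bm{1}_q^\top\vu)^2\big(\tfrac1{2q}-\tfrac1{2(q+1)}\big)=\tfrac14\|\vu\|^2-\tfrac{(\bm{1}_q^\top\vu)^2}{2q(q+1)}$. By Cauchy--Schwarz, $(\bm{1}_q^\top\vu)^2\le q\|\vu\|^2$. The difference is therefore at least $\|\vu\|^2\big(\tfrac14-\tfrac{1}{2(q+1)}\big)\ge0$ whenever $q\ge1$.

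As a sanity check, the case $q=1$ is tight: $\tilde p_1(1-\tilde p_1)\le 1/4=\mA_1$.

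The only genuinely nontrivial step is the spectral bound $\lambda_{\max}(\tilde{\mH})\le 1/2$ in Step 1. If the Gershgorin row computation turned out to be insufficient, I would instead use the variance bound $\vv^\top\tilde{\mH}\vv\le \tfrac14(\max_j v_j-\min_j v_j)^2\le \tfrac12\|\vv\|^2$. Steps 2 and 3 are routine quadratic-form manipulations.
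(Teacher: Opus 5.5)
Your proof is correct and follows the paper's argument step for step. Both proofs bound the full-simplex covariance by $\frac12\bigl(\mI_{q+1}-\frac{1}{q+1}\bm{1}_{q+1}\bm{1}_{q+1}^\top\bigr)$, using $\lambda_{\max}\le\frac12$ together with $\bm{1}_{q+1}$ lying in the kernel. Both then restrict to the leading $q\times q$ principal submatrix and check that $\frac14\mI_q-\frac{1}{2q(q+1)}\bm{1}_q\bm{1}_q^\top\succeq\zero$. The only differences are cosmetic: you bound $\lambda_{\max}$ by Gershgorin where the paper uses Popoviciu's variance inequality (your stated fallback), and you verify the final comparison with Cauchy--Schwarz rather than by reading off the eigenvalues $\frac14$ and $\frac{q-1}{4(q+1)}$.
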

\begin{proof}[Proof of \cref{prop:bohning_bound_baseline}] In this proof, we use the following definitions:
$\bm{1}_{q+1} := (1,\dots,1)^\top \in \sR^{q+1}$, and
$\bm{1}_{q+1}^\perp := \left\{\vu\in\sR^{q+1}:\ \bm{1}_{q+1}^\top \vu = 0\right\}.$

{\bf Step 1: Full Simplex Covariance Bound.}
Define the $(q\!+\!1)\times(q\!+\!1)$ covariance-type matrix
\[
\mW(\tilde{\vp})=\diag(\tilde{\vp})-\tilde{\vp}\tilde{\vp}^\top.
\]
Let $\rvz\in\{\eb_1,\ldots,\eb_{q+1}\}$ be a one-hot random vector with
$\PE[\rvz]=\tilde{\vp}$. Then $\mW(\tilde{\vp})=\Cov(\rvz)$ and for any $\vu$,
\[
\vu^\top \mW(\tilde{\vp})\vu=\Var(\vu^\top \rvz).
\]
Since $\vu^\top \rvz$ takes values in $\{\vu_1,\ldots,\vu_{q+1}\}$, its range is at most
\[
\max_{i,j}|\vu_i-\vu_j|
=
\max_{i,j}|\vu^\top(\eb_i-\eb_j)|
\le \|\vu\|\max_{i,j}\|\eb_i-\eb_j\|
=\sqrt{2}\,\|\vu\|.
\]
By Popoviciu's inequality, $\Var(\vu^\top\rvz)\le \frac{1}{4}(\text{range})^2 \le \frac12\|\vu\|^2$.
Hence $\mW(\tilde{\vp}) \preceq \frac12\mI_{q+1}$.
Moreover, $\mW(\tilde{\vp})\bm{1}_{q+1}=\zero$, so $\mW(\tilde{\vp})$ is supported on
$\bm{1}_{q+1}^\perp$, and therefore
\[
\mW(\tilde{\vp})
\preceq
\frac12\Big(\mI_{q+1}-\frac{\bm{1}_{q+1}\bm{1}_{q+1}^\top}{q+1}\Big).
\]

{\bf Step 2: Take the Principal Submatrix.}
The top-left $q\times q$ principal submatrix of $\mW(\tilde{\vp})$ is exactly
$\mLambda-\hat{\vp}\hat{\vp}^\top$. Taking the corresponding principal submatrix of the
right-hand side yields
\[
\mLambda-\hat{\vp}\hat{\vp}^\top
\preceq
\frac12\Big(\mI_q-\frac{\bm{1}_q\bm{1}_q^\top}{q+1}\Big).
\]

{\bf Step 3: Loosen to $\mA_q$ Used in \cref{eq_B_kn_batch}.}
It remains to show
$\frac12\big(\mI_q-\frac{\bm{1}\bm{1}^\top}{q+1}\big)\preceq
\frac34\mI_q-\frac{\bm{1}\bm{1}^\top}{2q}$.
Their difference equals
\[
\Big(\frac34-\frac12\Big)\mI_q
-\Big(\frac{1}{2q}-\frac{1}{2(q+1)}\Big)\bm{1}_q\bm{1}_q^\top
=
\frac14\mI_q-\frac{1}{2q(q+1)}\bm{1}_q\bm{1}_q^\top,
\]
whose eigenvalues are $\frac14$ on $\bm{1}_q^\perp$ and
$\frac14-\frac{q}{2q(q+1)}=\frac{q-1}{4(q+1)}\ge 0$ along $\bm{1}_q$.
Thus the difference is positive semidefinite, proving the claim.
\end{proof}

\subsection{Proof of \texorpdfstring{\cref{thm_MM_monotone_SGMLMoE}}{Theorem \ref{thm_MM_monotone_SGMLMoE}}:
Verification of MM Properties for \texorpdfstring{\cref{alg_batchMM_SGMLMoE_short}}{Algorithm \ref{alg_batchMM_SGMLMoE_short}}}
\label{proof_thm_MM_monotone_SGMLMoE}

Throughout this proof we use the notation and building blocks introduced in
\cref{sec:Surrogate_SGMLMoE}, in particular the batch sufficient-statistics
$\vs_n^{(t)}$, $\vr_n^{(t)}$ in \cref{eq_vs_n_batch} and the quadratic curvature
matrices $\mB_{n,K}$, $\mB_{n,M}$ in \cref{eq_B_kn_batch}.
Let $\mathcal{L}(\vtheta)=\sum_{n=1}^N \log s_{\vtheta}(\ry_n\mid \vx_n)$ be the observed-data log-likelihood.

\paragraph{Step 1: MM Majorization and Tangency (from \cref{theorem_surrogate}).}
Define the batch surrogate $\mathcal{S}(\vtheta,\vtheta^{(t)})$ for $-\mathcal{L}(\vtheta)$ by the right-hand side
of \cref{eq_surrogate_batch_SGMLMoE} (absorbing all terms independent of $\vtheta$ into a constant).
By \cref{theorem_surrogate}, for every iterate $\vtheta^{(t)}$ the surrogate satisfies the MM conditions
\begin{equation}\label{eq_MM_conditions_SGMLMoE}
-\mathcal{L}(\vtheta)\le \mathcal{S}(\vtheta,\vtheta^{(t)})\ \ \forall \vtheta,
\qquad
-\mathcal{L}(\vtheta^{(t)})=\mathcal{S}(\vtheta^{(t)},\vtheta^{(t)}).
\end{equation}

\paragraph{Step 2: Exact Minimization of the Surrogate by \cref{alg_batchMM_SGMLMoE_short}.}
Fix $\vtheta^{(t)}$ and use the centered increments $\bar{\vw}_t=\vw-\vw^{(t)}$ and
$\bar{\vc}_{k,t}=\vc_k-\vc_k^{(t)}$ as in \cref{sec:Surrogate_SGMLMoE}.
Since the surrogate is the sum of (i) a gate quadratic term in $\vw$ and (ii) $K$ independent expert quadratics in $\{\vc_k\}_{k=1}^K$,
it decomposes as
\[
\mathcal{S}(\vtheta,\vtheta^{(t)})=C^{(t)}+\mathcal{S}_g(\vw;\vtheta^{(t)})+\sum_{k=1}^K \mathcal{S}_{e,k}(\vc_k;\vtheta^{(t)}),
\]
where $C^{(t)}$ does not depend on $\vtheta$, and (up to additive constants)
\begin{align}
\label{eq:Sg_quadratic}
\mathcal{S}_g(\vw;\vtheta^{(t)})
&=
-\vw^\top \vs^{(t)}
+\sum_{n=1}^N\Big\{\nabla g_n(\vw^{(t)})^\top \bar{\vw}_t+\frac12\,\bar{\vw}_t^\top \mB_{n,K}\bar{\vw}_t\Big\},
\\
\label{eq:Se_quadratic}
\mathcal{S}_{e,k}(\vc_k;\vtheta^{(t)})
&=
-\vc_k^\top \vr_k^{(t)}
+\sum_{n=1}^N \tau_{n,k}^{(t)}
\Big\{\nabla e_n(\vc_k^{(t)})^\top \bar{\vc}_{k,t}+\frac12\,\bar{\vc}_{k,t}^\top \mB_{n,M}\bar{\vc}_{k,t}\Big\}.
\end{align}
Here $\vs^{(t)}=\sum_{n=1}^N \vs_n^{(t)}$, $\vr^{(t)}=\sum_{n=1}^N \vr_n^{(t)}$, and $\vr_k^{(t)}$ is the block of $\vr^{(t)}$
corresponding to expert $k$ (under the stacking convention defining $\vupsilon=\vect([\vc_1,\dots,\vc_K])$).

Introduce the aggregated curvatures and gradients
\[
\mB_{K-1}:=\sum_{n=1}^N \mB_{n,K},
\qquad
\mB_{M-1,k}^{(t)}:=\sum_{n=1}^N \tau_{n,k}^{(t)}\,\mB_{n,M},
\]
\[
\nabla g(\vw^{(t)}):=\sum_{n=1}^N \nabla g_n(\vw^{(t)}),
\qquad
\nabla e_k^{(t)}:=\sum_{n=1}^N \tau_{n,k}^{(t)}\,\nabla e_n(\vc_k^{(t)}).
\]
Then \cref{eq:Sg_quadratic} is a (strictly) convex quadratic in $\vw$ whenever $\mB_{K-1}\succ \zero$, and differentiating yields
\[
\nabla_{\vw}\mathcal{S}_g(\vw;\vtheta^{(t)})=
-\vs^{(t)}+\nabla g(\vw^{(t)})+\mB_{K-1}(\vw-\vw^{(t)}).
\]
Setting this to $\zero$ gives the unique minimizer
\begin{equation}\label{eq:gate_exact_min}
\vw^{(t+1)}=\vw^{(t)}+\mB_{K-1}^{-1}\big(\vs^{(t)}-\nabla g(\vw^{(t)})\big),
\end{equation}
which matches the gate update in \cref{alg_batchMM_SGMLMoE_short} (up to the same aggregation notation).

Similarly, for each $k\in[K]$, \cref{eq:Se_quadratic} is a convex quadratic in $\vc_k$ and
\[
\nabla_{\vc_k}\mathcal{S}_{e,k}(\vc_k;\vtheta^{(t)})=
-\vr_k^{(t)}+\nabla e_k^{(t)}+\mB_{M-1,k}^{(t)}(\vc_k-\vc_k^{(t)}).
\]
Setting this to $\zero$ yields
\begin{equation}\label{eq:expert_exact_min}
\vc_k^{(t+1)}=\vc_k^{(t)}+\big(\mB_{M-1,k}^{(t)}\big)^{-1}\big(\vr_k^{(t)}-\nabla e_k^{(t)}\big),
\qquad k\in[K],
\end{equation}
equivalent to the expert update in \cref{alg_batchMM_SGMLMoE_short} (the algorithm writes this in a vectorized form).
Since $\mathcal{S}$ decomposes across the gate block and expert blocks, the concatenation
$\vtheta^{(t+1)}=(\vw^{(t+1)},\vupsilon^{(t+1)})$ satisfies
\begin{equation}\label{eq:exact_surrogate_minimizer}
\vtheta^{(t+1)}\in\arg\min_{\vtheta}\ \mathcal{S}(\vtheta,\vtheta^{(t)}).
\end{equation}
If some curvature matrix is singular, the same first-order conditions hold using the Moore--Penrose pseudoinverse,
or by adding a small ridge $\lambda\mI$; in either case the MM validity in \cref{eq_MM_conditions_SGMLMoE} is unchanged.

\paragraph{Step 3: Monotonicity (MM Ascent).}
By exact minimization \cref{eq:exact_surrogate_minimizer},
\[
\mathcal{S}(\vtheta^{(t+1)},\vtheta^{(t)})\le \mathcal{S}(\vtheta^{(t)},\vtheta^{(t)}).
\]
By tangency in \cref{eq_MM_conditions_SGMLMoE},
$\mathcal{S}(\vtheta^{(t)},\vtheta^{(t)})=-\mathcal{L}(\vtheta^{(t)})$.
By majorization in \cref{eq_MM_conditions_SGMLMoE} evaluated at $\vtheta=\vtheta^{(t+1)}$,
$-\mathcal{L}(\vtheta^{(t+1)})\le \mathcal{S}(\vtheta^{(t+1)},\vtheta^{(t)})$.
Combining the three displays yields
\[
-\mathcal{L}(\vtheta^{(t+1)})\le -\mathcal{L}(\vtheta^{(t)})
\qquad\Longleftrightarrow\qquad
\mathcal{L}(\vtheta^{(t+1)})\ge \mathcal{L}(\vtheta^{(t)}),
\]
which proves \cref{thm_MM_monotone_SGMLMoE}.

\subsection{Proof of \cref{thm_VorIneq_SGMLMoE}}
\label{proof_thm_VorIneq_SGMLMoE}

This result extends the inverse-inequality technique developed in~\citet{nguyen_general_2024} to our
SGMLMoE parameterization with polynomial score functions. Throughout, we use the lifted feature $\hat\vx=\phi_D(\vx)\in\widehat{\sX}\subset\sR^{PD}$ and the identifiable
mixing-measure representation introduced in \cref{subsec_mixing_measure_sgmlmoe}--\cref{subsec_voronoi_loss_sgmlmoe}.

Recall that for $G=\sum_{l=1}^{K}\pi_l\delta_{\eta_l}\in\mathcal{O}_K(\sT)$ and
$G_0=\sum_{k=1}^{K_0}\pi_k^0\delta_{\eta_k^0}\in\mathcal{O}_{K_0}(\sT)$, we write
$s_G(\evy\mid\vx)=\sum_{l=1}^{K}\sfg_l(\vx;G)\,\ex_l(\evy\mid\vx;G)$, and we use the covariate-averaged TV discrepancy
$\mathcal{D}_{\mathrm{TV}}(s_G,s_{G_0})=\E_{\rvx}\!\big[\mathcal{D}_{\mathrm{TV}}(s_G(\cdot\mid\rvx),s_{G_0}(\cdot\mid\rvx))\big]$.

We prove the local and global parts separately.

\paragraph{Local Structure.}
We show the following local inverse inequality:
\begin{equation}
\label{eq_local_inverse_goal_SGMLMoE}
\liminf_{\varepsilon\to 0}\ 
\inf_{\substack{G\in\mathcal{O}_K(\sT):\\ \mathcal{D}_{\mathrm{V}}(G,G_0)\le \varepsilon}}
\frac{\mathcal{D}_{\mathrm{TV}}(s_G,s_{G_0})}{\mathcal{D}_{\mathrm{V}}(G,G_0)}\ >\ 0.
\end{equation}
Assume by contradiction that \cref{eq_local_inverse_goal_SGMLMoE} fails. Then there exists a sequence
$G^{(r)}=\sum_{l=1}^{K}\pi_l^{(r)}\delta_{\eta_l^{(r)}}\in\mathcal{O}_K(\sT)$ such that
\begin{equation}
\label{eq_contra_sequence_SGMLMoE}
\mathcal{D}_{\mathrm{V}}(G^{(r)},G_0)\to 0
\qquad\text{and}\qquad
\frac{\mathcal{D}_{\mathrm{TV}}(s_{G^{(r)}},s_{G_0})}{\mathcal{D}_{\mathrm{V}}(G^{(r)},G_0)}\to 0
\qquad (r\to\infty).
\end{equation}
For $r$ large, the Voronoi partition $\{\sV_k\}_{k=1}^{K_0}$ is well-defined as in
\cref{subsec_voronoi_loss_sgmlmoe}. For $l\in\sV_k$, write the local differences
$\Delta\hat\vomega_{lk}^{(r)}=\hat\vomega_l^{(r)}-\hat\vomega_k^0$,
$\Delta\bar\upsilon_{lkm}^{(r)}=\bar\upsilon_{m,l}^{(r)}-\bar\upsilon_{m,k}^0$,
$\Delta\hat\vupsilon_{lkm}^{(r)}=\hat\vupsilon_{m,l}^{(r)}-\hat\vupsilon_{m,k}^0$ for $m\in[M-1]$,
and $\Delta\pi_{k}^{(r)}=\sum_{l\in\sV_k}\pi_l^{(r)}-\pi_k^0$.
By \cref{eq_VoronoiLoss_SGMLMoE} and $\mathcal{D}_{\mathrm{V}}(G^{(r)},G_0)\to 0$, all these quantities vanish.

\medskip
\noindent\textbf{Step 1: Finite Taylor Representation.}
Define the true gate normalizer
$
Z_0(\hat\vx):=\sum_{k=1}^{K_0}\pi_k^0\exp\!\big((\hat\vomega_k^0)^\top\hat\vx\big).
$
For each class $m\in[M]$, consider the rescaled difference
\begin{equation}
\label{eq_Tr_def_SGMLMoE}
T_r(m,\hat\vx):=Z_0(\hat\vx)\,\Big(s_{G^{(r)}}(m\mid\vx)-s_{G_0}(m\mid\vx)\Big).
\end{equation}
Using the softmax form for $\sfg_l(\cdot;G)$ and the multinomial-logistic form for $\ex_l(\cdot\mid\cdot;G)$,
a Taylor expansion around each true atom $\eta_k^0$ yields the decomposition
\begin{equation}
\label{eq_Tr_decomp_SGMLMoE}
T_r(m,\hat\vx)
=
\sum_{k=1}^{K_0}\ \sum_{\xi\in\Xi_k}
A_{k,\xi}^{(r)}\ \Psi_{k,\xi}(m,\hat\vx)\ +\ R_r(m,\hat\vx),
\end{equation}
where:
(i) $\{\Psi_{k,\xi}(m,\hat\vx)\}$ is a \emph{finite} collection of analytic basis functions of the form
\[
\hat\vx^{\alpha}\exp\!\big((\hat\vomega_k^0)^\top\hat\vx\big)\,
\partial^{\beta}\ex(\,m\mid\hat\vx;\bar\upsilon_k^0,\hat\vupsilon_k^0),
\]
with multi-indices $(\alpha,\beta)$ of bounded total order (first order when $|\sV_k|=1$, and up to second order when
$|\sV_k|>1$);
(ii) the coefficients $A_{k,\xi}^{(r)}$ are explicit finite sums over $l\in\sV_k$ of monomials in
$\Delta\hat\vomega_{lk}^{(r)}$, $\Delta\bar\upsilon_{lkm}^{(r)}$, $\Delta\hat\vupsilon_{lkm}^{(r)}$, and $\Delta\pi_k^{(r)}$,
weighted by $\pi_l^{(r)}$;
(iii) the remainder satisfies
\begin{equation}
\label{eq_remainder_small_SGMLMoE}
\sup_{m\in[M]}\ \E_{\rvx}\!\big[\,|R_r(m,\phi_D(\rvx))|\,\big]
\ =\ o\!\big(\mathcal{D}_{\mathrm{V}}(G^{(r)},G_0)\big),
\qquad (r\to\infty).
\end{equation}

\medskip
\noindent\textbf{Step 2: Some Normalized Coefficient Must be Non-vanishing.}
Let
\[
m_r:=\max_{k,\xi}\ \frac{|A_{k,\xi}^{(r)}|}{\mathcal{D}_{\mathrm{V}}(G^{(r)},G_0)}.
\]
We claim $\liminf_{r\to\infty} m_r>0$.
If instead $m_r\to 0$, then all coefficients in \cref{eq_Tr_decomp_SGMLMoE} would satisfy
$|A_{k,\xi}^{(r)}|=o(\mathcal{D}_{\mathrm{V}}(G^{(r)},G_0))$.
But the collection of coefficients $\{A_{k,\xi}^{(r)}\}$ contains, by construction, the same first- and second-order
weighted increments that appear in \cref{eq_VoronoiLoss_SGMLMoE}:
for cells $|\sV_k|=1$ it contains the weighted linear terms
$\sum_{l\in\sV_k}\pi_l^{(r)}\|\Delta\hat\vomega_{lk}^{(r)}\|$,
$\sum_{l\in\sV_k}\pi_l^{(r)}|\Delta\bar\upsilon_{lkm}^{(r)}|$,
$\sum_{l\in\sV_k}\pi_l^{(r)}\|\Delta\hat\vupsilon_{lkm}^{(r)}\|$,
and for cells $|\sV_k|>1$ it contains the weighted quadratic terms
$\sum_{l\in\sV_k}\pi_l^{(r)}\|\Delta\hat\vomega_{lk}^{(r)}\|^2$,
$\sum_{l\in\sV_k}\pi_l^{(r)}|\Delta\bar\upsilon_{lkm}^{(r)}|^2$,
$\sum_{l\in\sV_k}\pi_l^{(r)}\|\Delta\hat\vupsilon_{lkm}^{(r)}\|^2$,
as well as the mass discrepancy terms $|\Delta\pi_k^{(r)}|$.
Therefore $m_r\to 0$ would force $\mathcal{D}_{\mathrm{V}}(G^{(r)},G_0)=o(\mathcal{D}_{\mathrm{V}}(G^{(r)},G_0))$, a contradiction.
Hence $\liminf_{r\to\infty} m_r>0$.

\medskip
\noindent\textbf{Step 3: Fatou  and Linear Independence Gives a Contradiction.}
Divide \cref{eq_Tr_decomp_SGMLMoE} by $m_r\mathcal{D}_{\mathrm{V}}(G^{(r)},G_0)$ and take covariate expectation.
Using \cref{eq_contra_sequence_SGMLMoE}, the definition \cref{eq_Tr_def_SGMLMoE}, and \cref{eq_remainder_small_SGMLMoE},
we obtain
\[
\E_{\rvx}\!\Bigg[\sum_{m=1}^{M}\frac{|T_r(m,\phi_D(\rvx))|}{m_r\mathcal{D}_{\mathrm{V}}(G^{(r)},G_0)}\Bigg]\ \longrightarrow\ 0.
\]
By Fatou's lemma, for each $m\in[M]$ we can extract a subsequence (not relabeled) such that
\[
\frac{T_r(m,\hat\vx)}{m_r\mathcal{D}_{\mathrm{V}}(G^{(r)},G_0)}\ \longrightarrow\ 0
\qquad\text{for a.e. }\hat\vx\in\widehat{\sX}.
\]
Along this subsequence, the normalized coefficients converge (by boundedness) to limits
$a_{k,\xi}=\lim_{r\to\infty} A_{k,\xi}^{(r)}/(m_r\mathcal{D}_{\mathrm{V}}(G^{(r)},G_0))$,
with at least one $a_{k,\xi}\neq 0$ because $\liminf m_r>0$.
Passing to the limit in \cref{eq_Tr_decomp_SGMLMoE} yields, for each $m\in[M]$ and a.e. $\hat\vx$,
\begin{equation}
\label{eq_limit_identity_SGMLMoE}
\sum_{k=1}^{K_0}\ \sum_{\xi\in\Xi_k} a_{k,\xi}\ \Psi_{k,\xi}(m,\hat\vx)\ =\ 0.
\end{equation}
Finally, by identifiability of the SGMLMoE  and the fact that
$\hat\vomega_1^0,\ldots,\hat\vomega_{K_0}^0$ are distinct, the family of analytic functions
$\{\Psi_{k,\xi}(m,\cdot)\}_{k,\xi}$ is linearly independent on $\widehat{\sX}$.
Thus \cref{eq_limit_identity_SGMLMoE} forces all $a_{k,\xi}=0$, contradicting that at least one is nonzero.
This contradiction establishes the local bound \cref{eq_local_inverse_goal_SGMLMoE}.

\paragraph{Global Structure.}
By the local result, there exists $\varepsilon'>0$ and $c'>0$ such that
\[
\mathcal{D}_{\mathrm{TV}}(s_G,s_{G_0})\ \ge\ c'\,\mathcal{D}_{\mathrm{V}}(G,G_0)
\qquad\text{whenever }\mathcal{D}_{\mathrm{V}}(G,G_0)\le \varepsilon'.
\]
It remains to prove the complementary bound on the set
$\{G\in\mathcal{O}_K(\sT):\mathcal{D}_{\mathrm{V}}(G,G_0)\ge \varepsilon'\}$:
\begin{equation}
\label{eq_global_inverse_goal_SGMLMoE}
\inf_{\substack{G\in\mathcal{O}_K(\sT):\\ \mathcal{D}_{\mathrm{V}}(G,G_0)\ge \varepsilon'}}
\frac{\mathcal{D}_{\mathrm{TV}}(s_G,s_{G_0})}{\mathcal{D}_{\mathrm{V}}(G,G_0)}\ >\ 0.
\end{equation}
Assume \cref{eq_global_inverse_goal_SGMLMoE} fails. Then there exists $G^{\prime(r)}\in\mathcal{O}_K(\sT)$ such that
$\mathcal{D}_{\mathrm{V}}(G^{\prime(r)},G_0)\ge \varepsilon'$ and $\mathcal{D}_{\mathrm{TV}}(s_{G^{\prime(r)}},s_{G_0})\to 0$.
Under the standing compactness assumption on $\sT$ (hence on $\mathcal{O}_K(\sT)$ in the weak topology),
extract a subsequence with $G^{\prime(r)}\Rightarrow G'$ for some $G'\in\mathcal{O}_K(\sT)$.
By continuity of $G\mapsto s_G(\cdot\mid\vx)$ and dominated convergence,
$\mathcal{D}_{\mathrm{TV}}(s_{G'},s_{G_0})=0$, hence $s_{G'}(\cdot\mid\vx)=s_{G_0}(\cdot\mid\vx)$ for a.e. $\vx$.
By identifiability of SGMLMoE, this implies $G'\equiv G_0$, contradicting
$\mathcal{D}_{\mathrm{V}}(G^{\prime(r)},G_0)\ge \varepsilon'$.
Therefore \cref{eq_global_inverse_goal_SGMLMoE} holds, and combining local and global parts yields
\cref{eq_LocalStructure_SGMLMoE}. This completes the proof.

\subsection{Proof of \cref{thm_VorChain_SGMLMoE}}
\label{proof_VorChain_SGMLMoE}

We prove the first link of the chain,
\(
\mathcal{D}_{\mathrm{V}}\!\big(G^{(K)},G_0\big)\gtrsim \mathcal{D}_{\mathrm{V}}\!\big(G^{(K-1)},G_0\big),
\)
since the remaining inequalities follow by repeating the same argument at each merge step.

Throughout, we write a generic $K$-atom measure as
\[
G=\sum_{l=1}^{K}\pi_l\,\delta_{\eta_l},
\qquad
\pi_l=\exp(\bar\omega_l),\qquad
\eta_l=\Big(\hat\vomega_l,\bar\omega_l,\{(\hat\vupsilon_{m,l},\bar\upsilon_{m,l})\}_{m=1}^{M-1}\Big)\in\sT,
\]
and we use the Voronoi partition $\{\sV_k\}_{k=1}^{K_0}$ induced by $G_0$ as in \cref{subsec_voronoi_loss_sgmlmoe}.
For $l\in\sV_k$, recall the parameter gaps
\(
\Delta\hat\vomega_{lk}=\hat\vomega_l-\hat\vomega_k^0,
\ 
\Delta\bar\upsilon_{lkm}=\bar\upsilon_{m,l}-\bar\upsilon_{m,k}^0,
\ 
\Delta\hat\vupsilon_{lkm}=\hat\vupsilon_{m,l}-\hat\vupsilon_{m,k}^0
\)
for $m\in[M-1]$.

\paragraph{Step 1: the Closest Pair Lies in a Common Voronoi Cell.}
Let $G^{(K)}$ be sufficiently close to $G_0$ so that the Voronoi cells $\sV_k$ are well-defined and stable.
Since the merge rule in \cref{eq_dissimilarity_SGMLMoE} uses squared Euclidean dissimilarities in
\(
\hat\vomega
\)
and
\(
(\bar\upsilon_{m},\hat\vupsilon_{m})_{m\in[M-1]},
\)
any pair of atoms belonging to different Voronoi cells stays separated by a fixed positive amount (depending only on $G_0$
and the local metric on $\sT$), whereas within-cell pairs can be arbitrarily close when over-fitting occurs.
Consequently, for $G^{(K)}$ close enough to $G_0$, the minimizing pair in
\(
h^{(K)}=\min_{l\neq l'}\mathrm{d}(\pi_l\delta_{\eta_l},\pi_{l'}\delta_{\eta_{l'}})
\)
must belong to the same cell $\sV_k$.
Fix such a cell and denote it by $\sV_{k^\star}$, and assume the merged indices are $(l_1,l_2)\subset\sV_{k^\star}$.

\paragraph{Step 2: Notation for the Merge.}
Let $G^{(K-1)}$ be obtained from $G^{(K)}$ by merging atoms $l_1$ and $l_2$ according to \cref{eq_merge_rule_SGMLMoE}.
Write the merged atom as $(\pi_\ast,\eta_\ast)$, where
\[
\pi_\ast=\pi_{l_1}+\pi_{l_2},\qquad
\hat\vomega_\ast=\frac{\pi_{l_1}}{\pi_\ast}\hat\vomega_{l_1}+\frac{\pi_{l_2}}{\pi_\ast}\hat\vomega_{l_2},
\]
and, for each $m\in[M-1]$,
\[
\bar\upsilon_{m,\ast}=\frac{\pi_{l_1}}{\pi_\ast}\bar\upsilon_{m,l_1}+\frac{\pi_{l_2}}{\pi_\ast}\bar\upsilon_{m,l_2},
\qquad
\hat\vupsilon_{m,\ast}=\frac{\pi_{l_1}}{\pi_\ast}\hat\vupsilon_{m,l_1}+\frac{\pi_{l_2}}{\pi_\ast}\hat\vupsilon_{m,l_2}.
\]
All other atoms are unchanged.
Since $l_1,l_2\in\sV_{k^\star}$ and $G^{(K)}$ is close to $G_0$, the merged atom also remains assigned to the same cell
$\sV_{k^\star}$.

\paragraph{Step 3: the Voronoi Weight-mismatch Term is Preserved.}
For each $k\in[K_0]$, the weight-mismatch contribution
\(
\big|\sum_{l\in\sV_k}\pi_l-\pi_k^0\big|
\)
is unchanged by merging within $\sV_k$:
indeed, for $k\neq k^\star$ nothing changes, and for $k=k^\star$,
\[
\sum_{l\in\sV_{k^\star}}\pi_l
=
\sum_{l\in\sV_{k^\star}\setminus\{l_1,l_2\}}\pi_l+\pi_{l_1}+\pi_{l_2}
=
\sum_{l\in\sV_{k^\star}\setminus\{l_1,l_2\}}\pi_l+\pi_\ast,
\]
so the absolute deviation from $\pi_{k^\star}^0$ is identical before and after merging.

\paragraph{Step 4: Within-cell Quadratic Terms Decrease under Barycentric Merging.}
Consider first the gate-slope quadratic term on the over-covered cell $|\sV_{k^\star}|>1$:
\[
\sum_{l\in\sV_{k^\star}}\pi_l\|\Delta\hat\vomega_{lk^\star}\|^2
=
\sum_{l\in\sV_{k^\star}\setminus\{l_1,l_2\}}\pi_l\|\Delta\hat\vomega_{lk^\star}\|^2
+\pi_{l_1}\|\Delta\hat\vomega_{l_1k^\star}\|^2+\pi_{l_2}\|\Delta\hat\vomega_{l_2k^\star}\|^2.
\]
By convexity of $u\mapsto \|u\|^2$ and the definition
\(
\Delta\hat\vomega_{\ast k^\star}
=\frac{\pi_{l_1}}{\pi_\ast}\Delta\hat\vomega_{l_1k^\star}
+\frac{\pi_{l_2}}{\pi_\ast}\Delta\hat\vomega_{l_2k^\star},
\)
we have the Jensen inequality
\[
\pi_{l_1}\|\Delta\hat\vomega_{l_1k^\star}\|^2+\pi_{l_2}\|\Delta\hat\vomega_{l_2k^\star}\|^2
\ge
\pi_\ast\|\Delta\hat\vomega_{\ast k^\star}\|^2.
\]
Hence,
\[
\sum_{l\in\sV_{k^\star}}\pi_l\|\Delta\hat\vomega_{lk^\star}\|^2
\ge
\sum_{l\in\sV_{k^\star}\setminus\{l_1,l_2\}}\pi_l\|\Delta\hat\vomega_{lk^\star}\|^2
+\pi_\ast\|\Delta\hat\vomega_{\ast k^\star}\|^2,
\]
which is exactly the corresponding contribution after merging $l_1,l_2$ into $\ast$.

The same argument applies to each expert block, for every $m\in[M-1]$:
since $u\mapsto |u|^2$ and $u\mapsto \|u\|^2$ are convex,
\[
\pi_{l_1}|\Delta\bar\upsilon_{l_1k^\star m}|^2+\pi_{l_2}|\Delta\bar\upsilon_{l_2k^\star m}|^2
\ge
\pi_\ast|\Delta\bar\upsilon_{\ast k^\star m}|^2,
\]
\[
\pi_{l_1}\|\Delta\hat\vupsilon_{l_1k^\star m}\|^2+\pi_{l_2}\|\Delta\hat\vupsilon_{l_2k^\star m}\|^2
\ge
\pi_\ast\|\Delta\hat\vupsilon_{\ast k^\star m}\|^2,
\]
and therefore the entire within-cell quadratic contribution in \cref{eq_VoronoiLoss_SGMLMoE} for cell $\sV_{k^\star}$
does not increase after merging.

\paragraph{Step 5: Conclusion.}
All other cells are unaffected, and the linear (single-covered) terms in $\mathcal{D}_{\mathrm{E}}(G,G_0)$ are unchanged
because they depend only on weights and on singletons. Hence,
\[
\mathcal{D}_{\mathrm{V}}\!\big(G^{(K)},G_0\big)\ \gtrsim\ \mathcal{D}_{\mathrm{V}}\!\big(G^{(K-1)},G_0\big),
\]
with an absolute constant (in particular, one may take the implicit constant to be $1$ in the within-cell comparison).
Iterating the same argument along the successive merges yields the full chain claimed in \cref{thm_VorChain_SGMLMoE}.
\qed

\subsection{Proof of \cref{thm_ConvergenceRate_height_Voronoi}}
\label{proof_thm_ConvergenceRate_height_Voronoi}

Throughout, implicit constants in $\lesssim,\gtrsim$ are universal and may change line to line.

\paragraph{Wasserstein Distance for Finite Mixing Measures.}
For two mixing measures
\[
G=\sum_{l=1}^{K} \pi_l\,\delta_{\eta_l},
\qquad
G'=\sum_{l'=1}^{K'} \pi'_{l'}\,\delta_{\eta'_{l'}},
\]
and any $r\ge 1$, define the Wasserstein-$r$ distance
\[
W_r(G,G')
=
\Bigg(
\inf_{Q\in\Pi(\pi,\pi')}
\sum_{l=1}^{K}\sum_{l'=1}^{K'}
Q_{ll'}\,\|\eta_l-\eta'_{l'}\|^r
\Bigg)^{1/r},
\]
where $\Pi(\pi,\pi')$ is the set of couplings between $\pi=(\pi_1,\ldots,\pi_K)$ and $\pi'=(\pi'_1,\ldots,\pi'_{K'})$, i.e.,
\[
\Pi(\pi,\pi')
=
\Big\{
Q\in\sR_+^{K\times K'}:\ \sum_{l'=1}^{K'}Q_{ll'}=\pi_l,\ \sum_{l=1}^{K}Q_{ll'}=\pi'_{l'},\ \forall l\in[K],\,l'\in[K']
\Big\}.
\]

\paragraph{A Useful Lower Bound (Wasserstein Controls Componentwise Discrepancy).}
Fix the true mixing measure
\(
G_0=\sum_{k=1}^{K_0}\pi_k^0\,\delta_{\eta_k^0}\in\mathcal{O}_{K_0}(\sT)
\)
and consider $G=\sum_{l=1}^{K}\pi_l\,\delta_{\eta_l}\in\mathcal{O}_K(\sT)$ such that $W_r(G,G_0)\to 0$.
Then (see, e.g., \cite{ho2019singularity}) there exists a Voronoi assignment $l\mapsto k(l)\in[K_0]$ and cells
$\sV_k=\{l\in[K]:k(l)=k\}$ such that
\begin{equation}
\label{eq_wasserstein_lower_bound_cells}
W_r^r(G,G_0)\ \gtrsim\
\sum_{k=1}^{K_0}\Bigg(
\Big|\sum_{l\in\sV_k}\pi_l-\pi_k^0\Big|
+\sum_{l\in\sV_k}\pi_l\,\|\eta_l-\eta_k^0\|^r
\Bigg).
\end{equation}

\paragraph{Part 1: Voronoi and Height Rates on Over-fitted Levels \texorpdfstring{$\kappa\in\{K_0+1,\ldots,K\}$}{kappa in \{K0+1,...,K\}}.}
Let
\[
A_N:=\Big\{\mathcal{D}_{\mathrm{TV}}(s_{\widehat{G}_N},s_{G_0})\le C\sqrt{\log N/N}\Big\},
\]
where $C>0$ is the constant from \cref{lem_JointDensity_SGMLMoE}. By \cref{lem_JointDensity_SGMLMoE},
\(
\PP(A_N)\ge 1-c_1N^{-c_2}.
\)
On $A_N$, combining \cref{thm_VorIneq_SGMLMoE} with the density bound yields
\begin{equation}
\label{eq_voronoi_rate_levelK}
\mathcal{D}_{\mathrm{V}}(\widehat{G}_N,G_0)\ \lesssim\ \sqrt{\log N/N}.
\end{equation}
Next, by \cref{thm_VorChain_SGMLMoE}, the Voronoi loss is monotone (up to constants) along the merge chain, hence for each
$\kappa\in\{K_0+1,\ldots,K\}$,
\begin{equation}
\label{eq_voronoi_rate_all_overfitted}
\mathcal{D}_{\mathrm{V}}(\widehat{G}_N^{(\kappa)},G_0)\ \lesssim\ \sqrt{\log N/N}.
\end{equation}

We now bound the dendrogram height $h_N^{(\kappa)}$ for $\kappa\ge K_0+1$.
Since $\kappa>K_0$, at level $\kappa$ there exists at least one Voronoi cell containing two distinct atoms of
$\widehat{G}_N^{(\kappa)}$. Fix such a cell $\sV_{k^\star}$ and pick two distinct indices
$l_1\neq l_2\in\sV_{k^\star}$.
Write $\widehat{G}_N^{(\kappa)}=\sum_{l=1}^{\kappa}\pi_l^{(\kappa)}\delta_{\eta_l^{(\kappa)}}$ and abbreviate
\(
\pi_l:=\pi_l^{(\kappa)},\ \eta_l:=\eta_l^{(\kappa)}.
\)
From the definition of $\mathcal{D}_{\mathrm{V}}$ in \cref{eq_VoronoiLoss_SGMLMoE} and \cref{eq_voronoi_rate_all_overfitted},
we obtain
\begin{align}
\label{eq_two_atom_quadratic_small}
&\pi_{l_1}\Big(\|\hat\vomega_{l_1}-\hat\vomega_{k^\star}^0\|^2+\sum_{m=1}^{M-1}\big(|\bar\upsilon_{m,l_1}-\bar\upsilon_{m,k^\star}^0|^2+\|\hat\vupsilon_{m,l_1}-\hat\vupsilon_{m,k^\star}^0\|^2\big)\Big)\nonumber\\
&\quad
+\pi_{l_2}\Big(\|\hat\vomega_{l_2}-\hat\vomega_{k^\star}^0\|^2+\sum_{m=1}^{M-1}\big(|\bar\upsilon_{m,l_2}-\bar\upsilon_{m,k^\star}^0|^2+\|\hat\vupsilon_{m,l_2}-\hat\vupsilon_{m,k^\star}^0\|^2\big)\Big)
\ \lesssim\ \sqrt{\log N/N}.
\end{align}
Using the elementary inequality
\[
\min\{\pi_{l_1},\pi_{l_2}\}\ \ge\ \frac{\pi_{l_1}\pi_{l_2}}{\pi_{l_1}+\pi_{l_2}},
\]
and the triangle inequality
\(
\|\hat\vomega_{l_1}-\hat\vomega_{l_2}\|\le \|\hat\vomega_{l_1}-\hat\vomega_{k^\star}^0\|+\|\hat\vomega_{l_2}-\hat\vomega_{k^\star}^0\|
\)
(and similarly for expert parameters), we can lower bound the left-hand side of \cref{eq_two_atom_quadratic_small} by a constant multiple of the dissimilarity
\(
\mathrm{d}(\pi_{l_1}\delta_{\eta_{l_1}},\pi_{l_2}\delta_{\eta_{l_2}})
\)
defined in \cref{eq_dissimilarity_SGMLMoE}. Concretely, there exists a universal $c>0$ such that
\[
\mathrm{d}\!\big(\pi_{l_1}\delta_{\eta_{l_1}},\pi_{l_2}\delta_{\eta_{l_2}}\big)
\ \le\ c\cdot \text{LHS of \cref{eq_two_atom_quadratic_small}}
\ \lesssim\ \sqrt{\log N/N}.
\]
Since $h_N^{(\kappa)}$ is the minimum dissimilarity over all pairs of atoms at level $\kappa$,
\begin{equation}
\label{eq_height_rate_overfitted}
h_N^{(\kappa)}\ \lesssim\ \sqrt{\log N/N},
\qquad \forall \kappa\in\{K_0+1,\ldots,K\}.
\end{equation}

\paragraph{Part 2: Under-fitted Levels \texorpdfstring{$\kappa'\in[K_0]$}{kappa' in [K0]}.}
At the exact level $K_0$, each Voronoi cell is a singleton, and by inspection of \cref{eq_VoronoiLoss_SGMLMoE},
\(
\mathcal{D}_{\mathrm{V}}(\widehat{G}_N^{(K_0)},G_0)
\)
is equivalent (up to constants) to a $W_1$-type discrepancy between the two $K_0$-atom measures. In particular,
\begin{equation}
\label{eq_W1_rate_levelK0}
W_1(\widehat{G}_N^{(K_0)},G_0)\ \lesssim\ \sqrt{\log N/N}.
\end{equation}
Write
\[
\widehat{G}_N^{(K_0)}=\sum_{k=1}^{K_0}\pi_{k,N}\,\delta_{\eta_{k,N}},
\qquad
G_0=\sum_{k=1}^{K_0}\pi_k^0\,\delta_{\eta_k^0},
\]
under the natural matching implied by the Voronoi construction (valid for $N$ large enough on $A_N$).
Then \cref{eq_W1_rate_levelK0} implies componentwise control, e.g.,
\[
|\pi_{k,N}-\pi_k^0|\lesssim \sqrt{\log N/N},
\qquad
\|\eta_{k,N}-\eta_k^0\|\lesssim \sqrt{\log N/N},
\qquad k\in[K_0],
\]
and hence, for any fixed pair $(k_1,k_2)$, the pairwise dissimilarity at level $K_0$ satisfies the perturbation bound
\begin{equation}
\label{eq_pairwise_dissimilarity_stability}
\Big|
\mathrm{d}\!\big(\pi_{k_1,N}\delta_{\eta_{k_1,N}},\pi_{k_2,N}\delta_{\eta_{k_2,N}}\big)
-
\mathrm{d}\!\big(\pi_{k_1}^0\delta_{\eta_{k_1}^0},\pi_{k_2}^0\delta_{\eta_{k_2}^0}\big)
\Big|
\ \lesssim\ \sqrt{\log N/N},
\end{equation}
where we used that all relevant parameters are uniformly bounded (and $\pi_k^0$ are bounded away from $0$).

Now let $(k_1^\star,k_2^\star)$ be the (unique, for simplicity) minimizing pair for the true height
\(
h_0^{(K_0)}=\min_{k_1\neq k_2}\mathrm{d}(\pi_{k_1}^0\delta_{\eta_{k_1}^0},\pi_{k_2}^0\delta_{\eta_{k_2}^0}).
\)
By \cref{eq_pairwise_dissimilarity_stability}, for $N$ large enough the minimizing pair for the empirical dissimilarity at level $K_0$ coincides with $(k_1^\star,k_2^\star)$, and therefore
\[
|h_N^{(K_0)}-h_0^{(K_0)}|\ \lesssim\ \sqrt{\log N/N}.
\]
After merging this pair once, the merged atom parameters are barycenters as in \cref{eq_merge_rule_SGMLMoE}, so the same Lipschitz-type perturbation argument (using \cref{eq_W1_rate_levelK0} and boundedness of parameters/weights away from $0$) yields
\[
W_1\!\big(\widehat{G}_N^{(K_0-1)},G_0^{(K_0-1)}\big)\ \lesssim\ \sqrt{\log N/N},
\qquad
|h_N^{(K_0-1)}-h_0^{(K_0-1)}|\ \lesssim\ \sqrt{\log N/N}.
\]
Iterating this argument inductively down the chain proves that for every $\kappa'\in[K_0]$,
\begin{equation}
\label{eq_height_rate_underfitted}
|h_N^{(\kappa')}-h_0^{(\kappa')}|\ \lesssim\ \sqrt{\log N/N}.
\end{equation}

\paragraph{Conclusion.}
Combining \cref{eq_voronoi_rate_all_overfitted}, \cref{eq_height_rate_overfitted}, and \cref{eq_height_rate_underfitted},
and recalling $\PP(A_N)\ge 1-c_1N^{-c_2}$, completes the proof of \cref{thm_ConvergenceRate_height_Voronoi}.
\qed

\subsection{Proof of \cref{thm_ModelSelection_SGMLMoE}}
\label{proof_thm_ModelSelection_SGMLMoE}

Suppose $(\rvx_1,\ry_1),\ldots,(\rvx_N,\ry_N)$ are i.i.d.\ from $G_0$ and write the empirical measure
$P_N:=\frac{1}{N}\sum_{n=1}^N\delta_{(\vx_n,\evy_n)}$ (with $(\vx_n,\evy_n)$ the observed sample).
For any mixing measure $G\in\mathcal{O}_K(\sT)$, define the empirical process
\[
\nu_N(G):=\sqrt{N}\,(P_N-P_{G_0})\log\frac{s_G}{s_{G_0}} .
\]

We will repeatedly use the following exponential inequality.

\begin{fact}[Theorem 5.11 from \cite{geer2000empirical}]
\label{fact_empirical_process_vdG}
Let $R,C,C_1,a>0$ satisfy
\[
a \le C_1\sqrt{N}\,R^2 \ \wedge\ 8\sqrt{N}\,R,
\]
and
\[
a \ge \sqrt{C^2(C_1+1)}
\left(
\int_{a/(2^6\sqrt{N})}^{R}
H_B^{1/2}\!\left(\frac{u}{\sqrt{2}},\ \{\,s_G:\ h(s_G,s_{G_0})\le R\,\},\ \nu\right)\,du\ \vee\ R
\right).
\]
Then
\[
\PP_{G_0}\!\left(
\sup_{h(s_G,s_{G_0})\le R}\big|\nu_N(G)\big|\ge a
\right)
\le
C\exp\!\left(-\frac{a^2}{C^2(C_1+1)R^2}\right).
\]
\end{fact}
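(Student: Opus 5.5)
This is a classical empirical-process bound. Within the paper it is imported from \cite{geer2000empirical}, so I would not re-derive it in detail. The plan is to reduce it to the generic bracketing-chaining inequality for classes bounded in the Bernstein ``norm''. Throughout I read $h$ as the Hellinger distance between the conditional densities integrated over $\rvx$, and $H_B(\cdot,\cdot,\nu)$ as Hellinger bracketing entropy. Write $p_0:=s_{G_0}$ and, for $h(s_G,p_0)\le R$, $f_G:=\log(s_G/p_0)$. The proof has four steps.

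\textbf{Step 1: control the lower tail.} The only difficulty with $f_G$ is that it can be very negative. Its upper tail is harmless, since $\PE_{p_0}[\exp(f_G)]=\PE_{p_0}[s_G/p_0]\le 1$, so exponential moments of the positive part are controlled. For the negative part I would invoke van de Geer's lemma relating Hellinger distance to the Bernstein norm $\rho_1(f)^2:=2\PE[e^{|f|}-1-|f|]$. That lemma gives $\rho_1(f_G/2)\lesssim h(s_G,p_0)$ whenever the log-ratio is used on its well-behaved side. If the lower tail is not controlled directly, I would first replace $s_G$ by the mixture $(s_G+p_0)/2$, whose log-ratio is bounded below by $-\log 2$. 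At the level of Fact~\ref{fact_empirical_process_vdG} as stated, this is the point where the constants $C,C_1$ absorb the factor between $h$ and $\rho_1$.

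\textbf{Step 2: transfer the brackets.} A Hellinger bracket $[s^L,s^U]$ of size $u/\sqrt2$ for $s_G$ yields a bracket $[\log(s^L/p_0),\log(s^U/p_0)]$ for $f_G$. By the same inequality its Bernstein size is $\lesssim u$. Hence the generalized bracketing entropy of $\{f_G\}$ in $\rho_1$ at scale $u$ is at most $H_B(u/\sqrt2,\{s_G:h\le R\},\nu)$. This explains the argument $u/\sqrt2$ inside the entropy integral of Fact~\ref{fact_empirical_process_vdG}.

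\textbf{Step 3: chain.} I would run a chaining argument with adaptive truncation over the dyadic scales $R2^{-j}$, down to the smallest scale $a/(2^6\sqrt N)$. At each link, the differences between successive bracket centres are split into a truncated part and a remainder. The truncated part is handled by Bernstein's inequality, using variance bounded by $\rho_1^2$ and sup-norm bounded by the truncation level. The remainder is dominated by the bracket widths, whose expectation is at most $\sqrt N\cdot$(scale), so it is negligible at the finest level by the choice of the lower integration limit. Summing over levels, the sub-Gaussian terms yield the entropy integral $\int H_B^{1/2}$. The condition $a\le C_1\sqrt N R^2\wedge 8\sqrt N R$ keeps the linear (Bernstein) regime dominated by the quadratic one, which produces the clean tail $C\exp(-a^2/(C^2(C_1+1)R^2))$. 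The condition $a\gtrsim R$ covers the first link.

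\textbf{Main obstacle.} I expect Step 1 to be hardest: obtaining a $\rho_1$ bound on unbounded log-likelihood ratios from Hellinger closeness alone. My first attempt would be the one-sided inequality $\PE\big[e^{|f|/2}-1-|f|/2\big]\lesssim h^2$, valid on $\{f\ge -c\}$ via $(\sqrt{s_G/p_0}-1)^2$, combined with the $(s_G+p_0)/2$ device for the remaining region.
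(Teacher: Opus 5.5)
The paper does not prove this Fact; it only cites van de Geer, so the benchmark is the source argument. Your Steps~2--3 are the right skeleton: transfer Hellinger brackets to generalized Bernstein brackets, chain with adaptive truncation and Bernstein's inequality, and let the lower limit $a/(2^6\sqrt N)$ absorb the residual. The gap is in Step~1, and it is not cosmetic.

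\textbf{The gap.} As stated, $\nu_N(G)=\sqrt N(P_N-P_{G_0})\log(s_G/s_{G_0})$ is indexed by the \emph{raw} log-ratio. Hellinger closeness alone cannot control its Bernstein norm, because the lower tail contributes $\PE_{s_{G_0}}[s_{G_0}/s_G]$. For example, take $s_{G_0}(\cdot\mid\vx)=(1-\epsilon,\epsilon)$ and $s_G(\cdot\mid\vx)=(1-\epsilon^3,\epsilon^3)$. Then $h^2\lesssim\epsilon$, yet $\PE_{s_{G_0}}[e^{|f_G|}]\ge 1/\epsilon$, so $\rho_1(f_G)\to\infty$.

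The mixture device does not repair this. Replacing $s_G$ by $\bar s_G=(s_G+s_{G_0})/2$ changes the indexing function. Your chaining then bounds $\sup|\sqrt N(P_N-P_{G_0})\tfrac12\log(\bar s_G/s_{G_0})|$, which says nothing about $\sup|\nu_N(G)|$. Nor can you use the mixture only ``on the remaining region'': on $\{s_G\ll s_{G_0}\}$ the two functions differ by an unbounded amount.

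Step~2 inherits the same defect, since $\log(s^L/s_{G_0})=-\infty$ wherever a lower bracket vanishes. Executed as planned, your argument proves the theorem for the mixture-indexed process. That is the form in which the result is usually quoted for likelihood ratios, and it is what the concavity step in the proof of \cref{thm_ModelSelection_SGMLMoE} actually reduces to. It is not the Fact as written.

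\textbf{The missing ingredient.} The raw-ratio version needs a model-specific uniform lower bound on the densities. With $\sT$ and $\sX$ compact (the paper's standing assumptions), the multinomial-logistic expert logits are bounded. Hence every expert probability, and therefore $s_G(\evy\mid\vx)$, is at least some $c>0$ uniformly. This gives $\|\log(s_G/s_{G_0})\|_\infty\le K:=\log(1/c)$, and then:
\begin{itemize}
\item $\rho_K^2(f)=2K^2\PE[e^{|f|/K}-1-|f|/K]\le e\,\PE f^2$.
\item On $[c,1]$, $|\log u-\log l|\le 2c^{-1/2}|\sqrt u-\sqrt l|$. Together with the previous bound this gives $\rho_K(f_G)\lesssim_K h(s_G,s_{G_0})$.
\item Truncating lower brackets at $c$ transfers the bracketing entropy.
\end{itemize}
Van de Geer's general Theorem~5.11 in the $\rho_K$ norm then applies. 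The price is $K$-dependent constants: the first condition becomes $a\le C_1\sqrt N R^2/K$, and the Fact's $C,C_1$ must absorb this. Without such a lower bound (or without restating the Fact for $\tfrac12\log(\bar s_G/s_{G_0})$), the statement for the raw log-ratio does not follow.
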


\paragraph{Case 1: $\kappa\ge K_0$.}
For $\kappa\in\{K_0,\ldots,K\}$, let $\widehat G_N^{(\kappa)}$ denote the $\kappa$-atom mixing measure along the dendrogram path and write
$\bar{\ell}_N^{(\kappa)}=\bar{\ell}_N(\widehat G_N^{(\kappa)})=\frac{1}{N}\sum_{n=1}^N\log s_{\widehat G_N^{(\kappa)}}(\evy_n\mid \vx_n)$.

\emph{Step 1 (a concavity bound).}
Using concavity of $\log$, for any two densities $p,q$,
\[
\frac12\log\frac{p}{q}\le \log\frac{p+q}{2q}.
\]
Applying this with $p=s_{\widehat G_N^{(\kappa)}}(\evy\mid \vx)$ and $q=s_{G_0}(\evy\mid \vx)$ gives
\[
\frac12\,P_N\log\frac{s_{\widehat G_N^{(\kappa)}}}{s_{G_0}}
\le
P_N\log\frac{\bar s_{\widehat G_N^{(\kappa)}}}{s_{G_0}},
\qquad
\bar s_{\widehat G_N^{(\kappa)}}:=\frac{s_{\widehat G_N^{(\kappa)}}+s_{G_0}}{2}.
\]
Moreover,
\[
P_N\log\frac{\bar s_{\widehat G_N^{(\kappa)}}}{s_{G_0}}
=
(P_N-P_{G_0})\log\frac{\bar s_{\widehat G_N^{(\kappa)}}}{s_{G_0}}
-
D_{\mathrm{KL}}\!\left(s_{G_0}\,\|\,\bar s_{\widehat G_N^{(\kappa)}}\right)
\le
(P_N-P_{G_0})\log\frac{\bar s_{\widehat G_N^{(\kappa)}}}{s_{G_0}}.
\]
Hence,
\begin{equation}
\label{eq_case1_basic_decomp}
P_N\log s_{\widehat G_N^{(\kappa)}}-\mathcal{L}(G_0)
=
P_N\log\frac{s_{\widehat G_N^{(\kappa)}}}{s_{G_0}}+(P_N-P_{G_0})\log s_{G_0}
\le
2(P_N-P_{G_0})\log\frac{s_{\widehat G_N^{(\kappa)}}}{s_{G_0}}+(P_N-P_{G_0})\log s_{G_0}.
\end{equation}

\emph{Step 2 (Hellinger radius at over-fitted levels).}
By \cref{thm_ConvergenceRate_height_Voronoi}, with probability at least $1-c_1N^{-c_2}$,
\[
\mathcal{D}_{\mathrm{V}}\!\big(\widehat G_N^{(\kappa)},G_0\big)\ \lesssim\ \sqrt{\log N/N},
\qquad \kappa\in\{K_0,\ldots,K\}.
\]
Since $W_2^2(\cdot,\cdot)\lesssim \mathcal{D}_{\mathrm{V}}(\cdot,\cdot)$ (for measures on bounded parameter sets with the Voronoi construction),
we obtain on the same event
\[
W_2\!\big(\widehat G_N^{(\kappa)},G_0\big)\ \lesssim\ (\log N/N)^{1/4}.
\]
Using the standard inequality $h(s_G,s_{G_0})\lesssim W_2(G,G_0)$ for these finite-mixture/MoE families on bounded domains (see, e.g., \cite{nguyen_convergence_2013}),
we deduce that there exists $R_N\asymp (\log N/N)^{1/4}$ and an event $A_N$ with $\PP_{G_0}(A_N)\ge 1-c_1N^{-c_2}$ such that
\begin{equation}
\label{eq_case1_hell_radius}
h\!\left(s_{\widehat G_N^{(\kappa)}},s_{G_0}\right)\le R_N,
\qquad \forall \kappa\in\{K_0,\ldots,K\}.
\end{equation}

\emph{Step 3 (empirical-process control).}
Set $R:=R_N$ in \cref{fact_empirical_process_vdG} and choose
\[
a:=\sqrt{N}\,R_N\,\log(1/R_N)\ \asymp\ N^{1/4}\log^{1/4}N\cdot \log N
\]
(which is of the same order as $\sqrt{N}R_N\log(1/R_N)$).
Using the entropy bound $H_B^{1/2}(u,\{s_G:\ h(s_G,s_{G_0})\le R\},\nu)\lesssim \log(1/u)$ (e.g.\ Lemma 4 from \cite{nguyen_towards_2024}),
the conditions of \cref{fact_empirical_process_vdG} hold for $N$ large enough, yielding
\[
\PP_{G_0}\!\left(
\sup_{h(s_G,s_{G_0})\le R_N}\big|\nu_N(G)\big|\ \gtrsim\ a
\right)
\lesssim N^{-c}
\]
for some $c>0$. Equivalently, dividing by $\sqrt{N}$,
\begin{equation}
\label{eq_case1_emp_proc_bound}
\PP_{G_0}\!\left(
\sup_{h(s_G,s_{G_0})\le R_N}
\big|(P_N-P_{G_0})\log(s_G/s_{G_0})\big|
\ \gtrsim\
R_N\log(1/R_N)
\right)
\lesssim N^{-c}.
\end{equation}
Combining \cref{eq_case1_hell_radius} with \cref{eq_case1_emp_proc_bound} yields that, for each fixed $\kappa\in\{K_0,\ldots,K\}$,
\begin{equation}
\label{eq_case1_bound_ratio_term}
\big|(P_N-P_{G_0})\log(s_{\widehat G_N^{(\kappa)}}/s_{G_0})\big|
\ \lesssim\ R_N\log(1/R_N)
\ \lesssim\ (\log N/N)^{1/4}
\end{equation}
with probability at least $1-c_1'N^{-c_2'}$.

\emph{Step 4 (control of $(P_N-P_{G_0})\log s_{G_0}$).}
By Chebyshev's inequality,
\begin{equation}
\label{eq_case1_chebyshev}
\PP_{G_0}\!\left(\big|(P_N-P_{G_0})\log s_{G_0}\big|\ge t\right)
\le \frac{\Var(\log s_{G_0})}{Nt^2}.
\end{equation}
Taking $t\asymp (\log N/N)^{1/4}$ gives
\begin{equation}
\label{eq_case1_bound_log_s0}
\big|(P_N-P_{G_0})\log s_{G_0}\big|\ \lesssim\ (\log N/N)^{1/4}
\end{equation}
with probability at least $1-\Var(\log s_{G_0})/\sqrt{\log N}$.

\emph{Step 5 (conclusion for $\kappa\ge K_0$).}
Plugging \cref{eq_case1_bound_ratio_term} and \cref{eq_case1_bound_log_s0} into \cref{eq_case1_basic_decomp} yields, with probability tending to $1$,
\[
\bar{\ell}_N^{(\kappa)}-\mathcal{L}(G_0)
=
P_N\log s_{\widehat G_N^{(\kappa)}}-\mathcal{L}(G_0)
\ \lesssim\ (\log N/N)^{1/4},
\qquad \kappa\in\{K_0,\ldots,K\}.
\]
This proves the first claim of \cref{thm_ModelSelection_SGMLMoE}.

\paragraph{Case 2: $\kappa=K_0$ (a Matching Lower Bound using Condition K).}
At the exact-fitted level, \cref{thm_ConvergenceRate_height_Voronoi} implies
\[
W_1(\widehat G_N^{(K_0)},G_0)\ \lesssim\ \sqrt{\log N/N}.
\]
Let $\epsilon_N:=\sqrt{\log N/N}$. By the definition of $\mathcal{D}_{\mathrm{V}}$ at singleton Voronoi cells (exact-fit),
this implies a componentwise bound $\|\eta_{k,N}-\eta_k^0\|\lesssim \epsilon_N$ and $|\pi_{k,N}-\pi_k^0|\lesssim \epsilon_N$ (under the canonical matching),
hence Condition~K yields the pointwise inequality
\[
\log s_{\widehat G_N^{(K_0)}}(\ry\mid\rvx)
\ge
(1+c_\beta\epsilon_N)\log s_{G_0}(\ry\mid\rvx)-c_\alpha\epsilon_N,
\qquad (\rvx,\ry)\in\sX\times\sY,
\]
for some constants $c_\alpha,c_\beta>0$.
Averaging over $P_N$ and subtracting $\mathcal{L}(G_0)$ gives
\[
\bar{\ell}_N^{(K_0)}-\mathcal{L}(G_0)
\ge
-c_\alpha\epsilon_N
+c_\beta\epsilon_N\,P_N\log s_{G_0}
+(1+c_\beta\epsilon_N)(P_N-P_{G_0})\log s_{G_0}.
\]
Using \cref{eq_case1_chebyshev} with $t=\epsilon_N$ implies
\[
(P_N-P_{G_0})\log s_{G_0}=O_{\PP}(\epsilon_N),
\]
and since $P_N\log s_{G_0}=P_{G_0}\log s_{G_0}+O_{\PP}(\epsilon_N)$, we conclude that
\[
\bar{\ell}_N^{(K_0)}-\mathcal{L}(G_0)\ \ge\ -C\epsilon_N
\]
in $\PP_{G_0}$-probability. Together with the upper bound from Case 1 at $\kappa=K_0$, this yields
\[
\big|\bar{\ell}_N^{(K_0)}-\mathcal{L}(G_0)\big|\ \lesssim\ (\log N/N)^{1/4}
\]
(up to logs), which is consistent with the statement in \cref{thm_ModelSelection_SGMLMoE}.

\paragraph{Case 3: $\kappa<K_0$.}
Fix $\kappa\in\{2,\ldots,K_0-1\}$. By boundedness of $\sX$ and the parameter space and the existence of an integrable envelope
$m(\vx,\ry)$ such that $\sup_{G\in\mathcal{O}_\kappa}|\log s_G(\ry\mid\vx)|\le m(\vx,\ry)$,
the class $\{\log s_G:\ G\in\mathcal{O}_\kappa\}$ is Glivenko--Cantelli. Hence (e.g.\ \citealp[Theorem 9.2]{keener_theoretical_2010}),
\[
\sup_{G\in\mathcal{O}_\kappa}\big|\bar{\ell}_N(G)-\mathcal{L}(G)\big|\ \to\ 0
\qquad\text{in }\PP_{G_0}\text{-probability}.
\]
Since $\widehat G_N^{(\kappa)}$ maximizes $\bar{\ell}_N(\cdot)$ over $\mathcal{O}_\kappa(\sT)$, it follows that
\[
\bar{\ell}_N(\widehat G_N^{(\kappa)})\ \to\ \sup_{G\in\mathcal{O}_\kappa(\sT)}\mathcal{L}(G)
\qquad\text{in }\PP_{G_0}\text{-probability}.
\]
Denoting by $G_0^{(\kappa)}$ a maximizer of $\mathcal{L}(\cdot)$ over $\mathcal{O}_\kappa(\sT)$, we obtain the second claim in \cref{thm_ModelSelection_SGMLMoE}:
\[
\bar{\ell}_N(\widehat G_N^{(\kappa)})\ \to\ \mathcal{L}(G_0^{(\kappa)})
\qquad\text{in }\PP_{G_0}\text{-probability}.
\]
\qed

\subsection{Proof of Condition K in \cref{subsec_dsc_sgmlmoe}}
\label{proof_condition_K}

Fix $\vtheta^0\in\sT$ and suppose $\|\vtheta-\vtheta^0\|\le \epsilon$ for $\epsilon>0$ small.
By norm equivalence on finite-dimensional spaces, it suffices to assume that for all $m\in[M-1]$,
\[
\|\hat\vupsilon_{m}-\hat\vupsilon_{m}^0\|\le \epsilon,
\qquad
|\bar\upsilon_{m}-\bar\upsilon_{m}^0|\le \epsilon,
\]
(and the reference class $m=M$ is fixed at $(\hat\vupsilon_M,\bar\upsilon_M)=(\zero,0)$).
We show that there exist constants $c_\alpha,c_\beta>0$ such that for all $(\rvx,\ry)\in\sX\times\sY$,
\[
\log s_{\vtheta}(\ry\mid\rvx)
\ge
(1+c_\beta\epsilon)\log s_{\vtheta^0}(\ry\mid\rvx)-c_\alpha\epsilon.
\]

Since $\sX$ is compact and the parameter space is bounded, the score map
\(
(\hat\vupsilon,\bar\upsilon,\vx)\mapsto \log \ex(\ry\mid \vx;\hat\vupsilon,\bar\upsilon)
\)
is locally Lipschitz uniformly over $\vx\in\sX$.
Thus there exists $L>0$ such that, for all $m\in[M-1]$ and all $\vx\in\sX$,
\[
\big|(\hat\vupsilon_m-\hat\vupsilon_m^0)^\top \vx\big|\le L\epsilon,
\qquad
|\bar\upsilon_m-\bar\upsilon_m^0|\le \epsilon.
\]
Moreover, the map
\(
(\{\hat\vupsilon_\ell,\bar\upsilon_\ell\}_{\ell=1}^{M-1},\vx)\mapsto
\log\!\big(1+\sum_{\ell=1}^{M-1}\exp(\bar\upsilon_\ell+\hat\vupsilon_\ell^\top\vx)\big)
\)
is smooth, hence locally Lipschitz, and therefore its perturbation is $O(\epsilon)$ uniformly over $\vx\in\sX$.
Combining these uniform Lipschitz bounds yields
\[
\log \ex(\ry\mid \vx;\vtheta)
\ge
\log \ex(\ry\mid \vx;\vtheta^0)-C\epsilon
\]
for some $C>0$ and all $(\vx,\ry)$.
Finally, since $\log \ex(\ry\mid \vx;\vtheta^0)\le 0$ and is bounded below on $\sX\times\sY$, we may choose
$c_\beta>0$ small enough and $c_\alpha>0$ large enough so that
\[
\log \ex(\ry\mid \vx;\vtheta)
\ge
(1+c_\beta\epsilon)\log \ex(\ry\mid \vx;\vtheta^0)-c_\alpha\epsilon.
\]
Applying the same reasoning to the gating part and summing over experts inside $s_\vtheta$ gives Condition~K.
\qed

\subsection{Proof of \cref{thm:ConvergeMerge_SGMLMoE}}
\label{proof_thm_ConvergeMerge_SGMLMoE}

Recall $\bar{\ell}_N^{(\kappa)}:=\bar{\ell}_N(\widehat G_N^{(\kappa)})$ and
$\mathrm{DSC}_N^{(\kappa)}:=-\big(h_N^{(\kappa)}+\omega_N\bar{\ell}_N^{(\kappa)}\big)$.
From \cref{thm_ConvergenceRate_height_Voronoi},
\[
h_N^{(\kappa)}=
\begin{cases}
O_{\PP}\!\big((\log N/N)^{1/2}\big), & \kappa>K_0,\\[0.6ex]
h_0^{(\kappa)}+O_{\PP}\!\big((\log N/N)^{1/2}\big), & \kappa\le K_0.
\end{cases}
\]
From \cref{thm_ModelSelection_SGMLMoE},
\[
\bar{\ell}_N^{(\kappa)}=
\begin{cases}
\mathcal{L}(G_0)+O_{\PP}\!\big((\log N/N)^{1/4}\big), & \kappa\ge K_0,\\[0.6ex]
\mathcal{L}(G_0^{(\kappa)})+o_{\PP}(1), & \kappa<K_0,
\end{cases}
\]
and $\mathcal{L}(G_0^{(\kappa)})<\mathcal{L}(G_0)$ for $\kappa<K_0$ (equivalently $D_{\mathrm{KL}}(s_{G_0}\|s_{G_0}^{(\kappa)})>0$).

Therefore,
\[
\mathrm{DSC}_N^{(\kappa)}=
\begin{cases}
-\omega_N\mathcal{L}(G_0)+O_{\PP}\!\big(\omega_N(\log N/N)^{1/4}\big), & \kappa>K_0,\\[0.8ex]
-\omega_N\mathcal{L}(G_0)-h_0^{(K_0)}+O_{\PP}\!\big(\omega_N(\log N/N)^{1/4}\big), & \kappa=K_0,\\[0.8ex]
-\omega_N\mathcal{L}(G_0^{(\kappa)})-h_0^{(\kappa)}+o_{\PP}(\omega_N), & \kappa<K_0.
\end{cases}
\]
Since $\omega_N\to\infty$ and $\omega_N(\log N/N)^{1/4}\to 0$, while $\mathcal{L}(G_0)-\mathcal{L}(G_0^{(\kappa)})>0$ for $\kappa<K_0$,
we have that for $N$ large enough,
\[
\mathrm{DSC}_N^{(K_0)}<\min_{\kappa\neq K_0}\mathrm{DSC}_N^{(\kappa)}
\]
with $\PP_{G_0}$-probability tending to $1$. Hence $\widehat K_N=\arg\min_{\kappa\in\{2,\ldots,K\}}\mathrm{DSC}_N^{(\kappa)}$
satisfies $\widehat K_N\to K_0$ in $\PP_{G_0}$-probability.
\qed

\end{document}

%% file: preamble.tex
\usepackage{threeparttable}
\usepackage{graphicx}
\usepackage{subcaption}

\usepackage[utf8]{inputenc}
\usepackage[english]{babel}
\usepackage{mathtools,amsfonts,amsmath,amsthm}
\usepackage{soul}

\usepackage{algorithm}
\usepackage{algpseudocode}

\usepackage{color}
\usepackage{comment}
\usepackage{url}

\newcommand{\mdg}{softmax-gated Gaussian MoE }
\newcommand{\mdm}{softmax-gated multinomial logistic MoE }

\usepackage{lipsum} 

\usepackage{xcolor}
\definecolor{forestgreen}{rgb}{0.13, 0.55, 0.13}

\definecolor{frenchblue}{rgb}{0.0, 0.45, 0.73}

\definecolor{cherryblossompink}{rgb}{1.0, 0.72, 0.77}

\definecolor{bittersweet}{rgb}{1.0, 0.44, 0.37}

\definecolor{navyblue}{rgb}{0.0, 0.0, 0.5}

\usepackage{enumerate}

\usepackage{bbm}

\usepackage{dirtytalk}

\usepackage[nameinlink,capitalize,noabbrev]{cleveref}

\usepackage{nicefrac}

\newcounter{algsubstate}
\renewcommand{\thealgsubstate}{\alph{algsubstate}}

\DeclareMathOperator*{\diag}{diag}

\newcommand{\zero}{\ensuremath{\mathbf{0}}}

\DeclareMathOperator*{\vect}{vec} 
\newcommand{\sbm}{\cS^{\cB}_{(K,J)}} 

\DeclareMathOperator*{\vp}{vp} 

\let\inf\relax 
\DeclareMathOperator*\inf{\vphantom{p}inf}

\newcommand{\nn}{\nonumber} 

\newcommand{\I}{\mathbbm{1}}

\newcommand{\Ebd}[2]{\mathbb{E}_{#1} \left[#2\right]}

\newcommand{\vertiii}[1]{{\left\vert\kern-0.25ex\left\vert\kern-0.25ex\left\vert #1 
		\right\vert\kern-0.25ex\right\vert\kern-0.25ex\right\vert}}
\newcommand{\vertii}[1]{{\left\vert\kern-0.25ex\left\vert#1\right\vert\kern-0.25ex\right\vert}}	

\newcommand{\cO}{\mathcal{O}}

\def\etab        {{\sbmm{\eta}}\XS}

\usepackage{xspace}
\def\XS{\xspace}
\DeclareMathAlphabet{\mathb}{OML}{cmm}{b}{it}
\def\sbm#1{\ensuremath{\mathb{#1}}}
\def\sbmm#1{\ensuremath{\boldsymbol{#1}}}  

  \def\eb{{\sbm{e}}\XS}

\newtheorem{assumption}{Assumption}

\newtheorem{lemma}{Lemma}

\newtheorem{theorem}{Theorem}
\newtheorem{proposition}{Proposition}
\newtheorem{fact}{Fact}

\crefname{assumption}{Assumption}{Assumptions}

\def\nset{{\mathbb{N}}}

\def\PP{\mathbb{P}} 
\def\PE{\mathbb{E}} 

\def\sfg{\mathsf{g}}
\def\ex{\mathsf{e}}

\newcommand{\E}{\mathbb{E}} 

\usepackage{amsmath,amsfonts,bm}

\def\eqref#1{equation~\ref{#1}}

\def\1{\bm{1}}

\def\ry{{\textnormal{y}}}

\def\rvx{{\mathbf{x}}}

\def\rvz{{\mathbf{z}}}

\def\vtau{{\bm{\tau}}}

\def\vomega{{\bm{\omega}}}
\def\vupsilon{{\bm{\upsilon}}}
\def\vtheta{{\bm{\theta}}}
\def\va{{\bm{a}}}

\def\vc{{\bm{c}}}

\def\vp{{\bm{p}}}

\def\vr{{\bm{r}}}
\def\vs{{\bm{s}}}
\def\vt{{\bm{t}}}
\def\vu{{\bm{u}}}
\def\vv{{\bm{v}}}
\def\vw{{\bm{w}}}
\def\vx{{\bm{x}}}

\def\evv{{v}}

\def\evx{{x}}
\def\evy{{y}}

\def\mA{{\bm{A}}}

\def\mB{{\bm{B}}}
\def\mC{{\bm{C}}}

\def\mI{{\bm{I}}}

\def\mW{{\bm{W}}}

\def\mZ{{\bm{Z}}}

\def\mLambda{{\bm{\Lambda}}}

\DeclareMathAlphabet{\mathsfit}{\encodingdefault}{\sfdefault}{m}{sl}
\SetMathAlphabet{\mathsfit}{bold}{\encodingdefault}{\sfdefault}{bx}{n}

\def\sN{{\mathbb{N}}}

\def\sR{{\mathbb{R}}}

\def\sT{{\mathbb{T}}}

\def\sV{{\mathbb{V}}}

\def\sX{{\mathbb{X}}}
\def\sY{{\mathbb{Y}}}

\newcommand{\Var}{\mathrm{Var}}

\newcommand{\Cov}{\mathrm{Cov}}
